\newcommand{\nosemic}{\renewcommand{\@endalgocfline}{\relax}}
\newcommand{\dosemic}{\renewcommand{\@endalgocfline}{\algocf@endline}}
\let\oldnl\nl
\newcommand{\nonl}{\renewcommand{\nl}{\let\nl\oldnl}}
\newtheorem{theorem}{Theorem}
\newtheorem{lemma}{Lemma}
\theoremstyle{definition}
\newtheorem{definition}{Definition}
\DeclareMathOperator*{\argmax}{arg\,max}
\newcommand{\N}{\mathbb{N}}
\newcommand{\R}{\mathbb{R}}
\newcommand{\C}{\mathbb{C}}
\newcommand{\E}{\mathbb{E}}
\newcommand{\var}{\mathrm{Var}}
\newcommand{\iu}{{i\mkern1mu}}
\newcommand{\NF}{\mathrm{\textbf{NF}}}
\newcommand{\TF}{\mathrm{\textbf{TF}}}
\newcommand{\Herm}{\mathcal{H}}
\newcommand{\prob}{\mathbb{P}}
\newcommand{\Hmle}{\widetilde{H}}
\newcommand{\pmax}{p_{\mathrm{max}}}
\newcommand{\topv}{\bold{\hat{v}}}
\newcommand{\topvE}{\bold{{v}}}
\newcommand{\diag}{\mathrm{diag}}
\newcommand{\tr}{\mathrm{Tr}}
\newcommand{\ca}{\mathcal{C}_1}
\newcommand{\cb}{\mathcal{C}_2}
\newcommand{\llin}{M_{intra}}
\newcommand{\llout}{M_{inter}}
\newcommand*{\blue}{\textcolor{blue}}
\newcommand*{\red}{\textcolor{red}}
\definecolor{colour1}{RGB}{166,206,227}
\definecolor{colour2}{RGB}{31,120,180}
\definecolor{colour3}{RGB}{178,55,250} 
\definecolor{colour4}{RGB}{51,160,44}
\newcounter{noteMCctr} \setcounter{noteMCctr}{1}
\title{Spectral Clustering for Directed Graphs via Likelihood Estimation on Stochastic Block Models}
\author{
\textbf{Ning Zhang}\\
Department of Statistics\\
University of Oxford \\
\texttt{ning.zhang@stats.ox.ac.uk}
\And
\textbf{Xiaowen Dong}\\
Department of Engineering Science\\
University of Oxford \\
The Alan Turing Institute\\
\texttt{xdong@robots.ox.ac.uk}
\And \textbf{Mihai Cucuringu}\\
Department of Mathematics\\
University of California, Los Angeles \\
University of Oxford\\
\texttt{mihai@math.ucla.edu}
}
\begin{document}

\maketitle

\begin{abstract}
   Graph clustering is a fundamental task in unsupervised learning with broad real-world applications. 
   While spectral clustering methods for undirected graphs are well-established and guided by a minimum cut optimization consensus, their extension to directed graphs remains relatively underexplored due to the additional complexity introduced by edge directions. 
   In this paper, we leverage statistical inference on stochastic block models to guide the development of a spectral clustering algorithm for directed graphs. Specifically, we study the maximum likelihood estimation under a widely used directed stochastic block model, and derive a global objective function that aligns with the underlying community structure.
    We further establish a theoretical upper bound on the misclustering error of its spectral relaxation, and based on this relaxation, introduce a novel, self-adaptive spectral clustering method for directed graphs. Extensive experiments on synthetic and real-world datasets demonstrate significant performance gains over existing baselines.
\end{abstract}

\section{Introduction}
\label{sec:introduction}

Graph clustering is a fundamental problem in unsupervised learning, providing tools for uncovering hidden structure in complex relational data from social networks \citep{oliveira-2012-overview}, economics \citep{bennett-2022-lead_lag}, and neuroscience \citep{priebe-2017-semiparametric}. In graph clustering, the goal is to group vertices into ``structurally equivalent'' communities, where vertices from the same community interact with the rest of the graph in similar ways. Among various approaches, spectral clustering has become one of the most popular methods due to its computational efficiency. Most spectral clustering methods focus on undirected graphs, building on well-established optimization objectives such as cut minimization \citep{shi-2000-normalized, mcsherry-2001-spectral, von-2007-tutorial} or Girvan-Newman modularity maximization \citep{newman-2013-spectral,newman-2016-equivalence}.

In many real-world networks, however, interactions are inherently directional as seen in causal relationships \citep{pearl-1987-dependency_digraph}, interbank debt \citep{acemoglu-2015-financial_net}, paper citations \citep{an-2004-citation}, and neuron synapses \citep{priebe-2017-semiparametric}.  In such settings, edge directionality carries important structural information and is often tightly correlated with community membership. For instance, in neuron-neuron interaction networks, synaptic direction typically depends on the types of pre- and post-synaptic cells \citep{eichler-2017-complete}. 
Such strong coupling between edge orientation and community membership is naturally modeled by a directed stochastic block model (DSBM) \citep{wang-1987-stochastic, cucuringu-2020-hermitian}, motivating our development of a model-based approach that leverages statistical inference on DSBMs to guide the design of clustering algorithms for directed graphs.

Existing spectral methods for clustering directed graphs are largely heuristic, relying on singular value decomposition (SVD) \citep{rohe-2016-coclustering, wang-2020-dscore}, symmetrization techniques \citep{satuluri-2011-symmetrizations}, or manually prespecified optimization objectives \citep{leicht-2008-community, fanuel-2017-magnetic, laenen-2020-higher, meilua-2007-clustering,hayashi-2022-skew}. 
Despite the empirical success of these approaches, it remains unclear whether the underlying optimization criteria are well-justified, or whether the detected communities reflect meaningful structure or mere artifacts. In contrast, clustering criteria for undirected graphs benefit from rich theoretical frameworks grounded in studying stochastic block models (SBMs) and their variants \citep{newman-2016-equivalence, abbe-2015-exact, bickel-2009-nonparametric,abbe-2015-exact,hajek-2016-exact_SDP, zhang-2014-scalable}. For directed graphs, comparable theoretically grounded studies remain underdeveloped. 
Motivated by this gap, we present the first model-inspired spectral clustering for directed graphs through likelihood estimation under a well-established DSBM. We summarize our main contributions as follows:
\begin{enumerate}[leftmargin=0pt]
    \item []\textit{A principled optimization criterion for directed community detection (Section~\ref{sec:method}).}  We study the DSBM\citep{wang-1987-stochastic, cucuringu-2020-hermitian} with a source-sink community structure, and derive the maximum likelihood estimator (MLE) for recovering the planted communities (Theorem~\ref{thm:MLE}). This model-based formulation provides a statistically grounded approach compared to existing heuristic methods. Moreover, we show that the MLE objective is equivalent to a joint optimization over edge density and edge orientation, yielding a flow-based intuition that motivates model-free generalizations of our methodology.
    \item[] \textit{Theoretical guarantee for spectral clustering DSBMs (Section~\ref{sec:analysis}).}
    We present the MLE objective as a Hermitian quadratic form, which naturally enables a spectral relaxation for efficient computation. 
    We establish, in Theorem~\ref{thm:errorHermSC}, a high-probability upper bound on the misclustering error of this spectral relaxation using tools from matrix perturbation theory and random matrix theory. Our result improves upon prior work \citep{cucuringu-2020-hermitian} by allowing inhomogeneous edge density $p\neq q$ and removing a restrictive eigengap assumption.
    
    \item[] \textit{A self-adaptive spectral clustering algorithm (Section~\ref{sec:experiment}).} Based on this MLE formulation, we develop \texttt{LE-SC}, a fast and self-adaptive spectral algorithm for clustering directed graphs.  While spectral relaxation MLE typically requires knowledge of model parameters, our method employs an iterative pseudo-likelihood estimation procedure \citep{gong-1981-pseudo,newman-2016-equivalence} to learn the parameters directly from data.
    Extensive experiments on synthetic and real-world data sets, including a directed neuron-neuron connectome and a weighted migration network, demonstrate competitive performance over baselines.\footnote{Code is available at: {\url{https://github.com/NingZhang-Git/LE_SC}}}
    \end{enumerate}
    
\textbf{Related work.} 
A number of classical spectral clustering methods for directed graphs rely on the singular vectors \citep{rohe-2016-coclustering,rohe-2012-coclustering, wang-2020-dscore} or eigenvectors of symmetrized matrix representations \citep{kessler-1963-bibliographic, malliaros-2013-dicluster_survey, satuluri-2011-symmetrizations, small-1973-co}, with underlying optimization on graph connectivity patterns.
For example, in spectral clustering on the bibliographic coupling matrix $AA^T$, the symmetrized matrix representation involves the number of shared offspring vertices.
Another more recent line of spectral methods adopts complex-valued Hermitian matrices to represent directed graphs for spectral clustering algorithms~\citep{cucuringu-2020-hermitian,fanuel-2017-magnetic,laenen-2020-higher}. 
In \citep{cucuringu-2020-hermitian}, the authors cluster directed {graphs} using eigenvectors of a Hermitian matrix, employing $\pm\iu$ to represent directed edges, with a flow optimization heuristic \citep{hayashi-2022-skew,laenen-2019-directed-thesis}.
\citet{laenen-2020-higher} proposes a different Hermitian matrix, where the $k$-th roots of unity are used to represent directed edges, leading to a higher-order flow optimization scheme. {However, most existing studies lack a solid theoretical underpinning of why and when a heuristic approach might work, and this is exactly the gap we address in this study through a rigorous model-based analysis.}

\section{Preliminaries}
\label{sec:preliminaries}

Let $G (\mathcal{V},\mathcal{E})$ be a directed graph on vertex set $\mathcal{V}$ and edge set $\mathcal{E}$.  For a pair of vertices $u,v \in \mathcal{V}$, we denote $u\rightsquigarrow v$ if there is an edge pointing from $u$ to $v$ and we denote $u \not\sim v$ if there is no edge between $u$ and $v$. 
A directed graph can be represented by its adjacency matrix $A \in \{0,1\}^{N\times N}$, where $A_{uv} = 1$ if and only if  $u\rightsquigarrow v$. 
For a Hermitian matrix $H$, it has $n$ real eigenvalues, and throughout this paper, the eigenvalues are consistently organized in descending order of magnitude, i.e., $|\lambda_1(H)| \geq  |\lambda_2(H)| \geq \cdots |\lambda_n(H)|$. 
We use $H^T$ to denote its transpose and $H^*$ to denote the conjugate transpose,  $H^* = \overline{H}^T$. 
We also employ 
{several commonly used matrices: we use $I$ to denote the identity matrix and $J$ to denote the square all-one matrix. 
This paper makes use of several matrix norms, namely we use $\|H\|$ to denote the spectral norm, and $\|H\|_F$ to denote the Frobenius norm. 
For ease of reference, we refer to the summary in Table~\ref{tab:sumNot} of notions used in this paper. 

\subsection{Directed Stochastic Block Model}
The canonical stochastic block model for directed graphs was introduced by \citet{holland-1981-exponential}, known as the $p_1$ model. In this paper, we study an instance of the $p_1$ model, the DSBM \citep{cucuringu-2020-hermitian}, which considers only directed (nonreciprocal) edges. Our motivation is that, in the context of clustering directed graphs, the main challenge lies in handling edge directionality, while clustering with undirected (reciprocal) edges is relatively well understood \citep{abbe-2018-community}. We note that this focus on directed edges does not preclude integration with undirected clustering methods; rather, it provides a complementary perspective that could be synthesized with existing undirected approaches in future hybrid models.

DSBMs with multiple communities involve intricate higher-order interactions between communities (see Figure~\ref{fig:k3} for an example), and explicitly modeling such interactions may introduce bias by enforcing specific structural assumptions. For generality and interpretability,  we begin with the most basic two-community setup, with a \textit{source} community $\ca$ of size $n_1$ and a \textit{sink} community $\cb$ of size $n_2$. While our theoretical analysis focuses on this basic setting, the derived clustering algorithm naturally extends to multi-community problems via iterative bipartitioning (see the Experiment section for details). 

In a two-community DSBM, the community memberships are treated as {fixed but unknown parameters (rather than as latent variables). We introduce a vector $\bold{\sigma}$ to indicate the community assignments, where $\sigma_u = \sigma_v$ if and only if vertices $u,v$ belong to the same community.
Given a directed graph adjacency matrix sampled from the {two-community} DSBM $(n_1,n_2,p,q,\eta)$, its entries are generated independently, conditioning on the community labelling $\sigma$ as follows: 

\begin{minipage}{0.45\textwidth}
$u,v \in \ca$ or $u,v\in \cb$:
\begin{align*} 
    \begin{cases*}
       A_{uv} =  1, A_{vu} = 0  &  \text{w.p. } $p/2$,\\
       A_{uv} =  0, A_{vu} = 1  & \text{w.p. } $p/2$,\\
       A_{uv} = A_{vu} = 0   &  \text{w.p. } $1- p$.
    \end{cases*};
\end{align*}
\end{minipage}
\hfill
\begin{minipage}{0.45\textwidth}
For $u\in \ca, v \in \cb$:
\begin{align*}
    \begin{cases*}
       A_{uv} =  1, A_{vu} = 0  &  \text{w.p. } $(1-\eta)q$ ,\\
      A_{uv} =  0, A_{vu} = 1 &  \text{w.p. } $\eta q$,\\
     A_{uv} = A_{vu} = 0  &  \text{w.p. } $1-q$.
    \end{cases*}
\end{align*}
\end{minipage}

Here, the parameter $p$ denotes the probability of forming an edge within communities, while $q$ represents the edge probability between communities. Within-community edges are assigned directions uniformly at random, making their orientation symmetric in expectation. In contrast, the directionality of inter-community edges is controlled by the parameter $\eta \in [0,0.5]$, where an edge from $\ca$ to $\cb$ occurs with probability $(1 - \eta)$. 
This model captures community structure through both heterogeneous edge densities and asymmetric, community-dependent edge directions.

\section{Methodology: MLE on DSBMs}
\label{sec:method}

\subsection{MLE on DSBMs}
Given a graph with adjacency matrix $A$ generated from the DSBM, our goal is to infer the community labels by finding the assignment that renders the observed graph topology most probable. Specifically, we formulate community detection as a maximum likelihood estimation (MLE) problem: 
\begin{align*}
    \hat{\sigma}_{\mathrm{MLE}} =  \argmax_{\sigma} \mathcal{L}(A;\sigma)
\end{align*}
where the log-likelihood function is given by 
\begin{align*}
    \mathcal{L}(A;\sigma) = \sum_{u<v} \log( \prob(A_{u,v}|\sigma_u,\sigma_v))
\end{align*}
In Theorem~\ref{thm:MLE}, we derive a more compact representation of the MLE objective, showing that it can be reformulated as a quadratic form involving a Hermitian matrix. This formulation naturally motivates spectral approaches for community recovery. The full proof of Theorem~\ref{thm:MLE} is provided in Appendix~\ref{pf-thm:MLE}.

\begin{restatable}[MLE on DSBMs]{theorem}{ThmHermMLE}
\label{thm:MLE}
        Let $A$ be the adjacency matrix of a directed graph sampled from the DSBM$(n_1, n_2, p, q, \eta)$. Define the indicator vector $\mathbf{x} \in \{1, \iu\}^N$ such that $\mathbf{x}_u = \iu$ if $u \in \ca$ and $\mathbf{x}_u = 1$ if $u \in \cb$.
        Then, the MLE on the community labels is equivalent to the following optimization problem:
        \begin{align}
             \max \quad & \bold{x}^*\Hmle \bold{x} \tag{Herm-MLE}\label{opt:Herm-MLE}\\
             \nonumber
            s.t. \quad &{\bold{x} \in \{1,\iu\}^N}, 
        \end{align}
        where $\Hmle$ is a Hermitian matrix given by
        \begin{align}
        \label{eq:H-mle}
             \Hmle 
             & = i\log{\frac{1-\eta}{\eta}}\left(A -A^T\right) + \log{ \frac{p^2(1-q)^2}{4\eta(1-\eta)q^2(1-p)^2}}\left(A+A^T\right) + 2\log{\frac{1-p}{1-q}} (J -I)\\
             & \triangleq w_i \iu \left(A - A^T\right) + w_r \left(A+A^T\right) + w_c (J-I). \nonumber
        \end{align}
\end{restatable}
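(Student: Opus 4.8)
The plan is to prove the equivalence by showing that $\mathbf{x}^*\Hmle\mathbf{x}$ is an order-preserving affine image of the log-likelihood, namely $\mathbf{x}^*\Hmle\mathbf{x} = 4\,\mathcal{L}(A;\sigma) + c(A)$, where the additive term $c(A)$ depends only on the observed graph and not on the assignment $\sigma$. Since the multiplicative factor is positive and the shift is $\sigma$-independent, the two objectives have the same maximizer, and the indicator encoding $\mathbf{x}\in\{1,\iu\}^N$ (with $\iu$ on the source side $\ca$ and $1$ on the sink side $\cb$) is exactly a labelled bipartition, so maximizing over $\mathbf{x}$ coincides with maximizing over $\sigma$ together with the choice of which side plays the source role.

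First I would rewrite $\mathcal{L}(A;\sigma)=\sum_{\{u,v\}}\log\prob(A_{uv},A_{vu}\mid\sigma_u,\sigma_v)$ as a sum over unordered pairs, splitting each term into the three mutually exclusive and exhaustive configurations allowed by the model: $u\rightsquigarrow v$, $v\rightsquigarrow u$, and $u\not\sim v$ (recall only nonreciprocal edges occur, so at most one of $A_{uv},A_{vu}$ is $1$). Using the edge-presence statistic $A_{uv}+A_{vu}\in\{0,1\}$ and the orientation statistic $A_{uv}-A_{vu}\in\{-1,0,1\}$, each pairwise log-probability becomes a linear function of these two statistics whose coefficients are $\log(p/2)$ and $\log(1-p)$ when $\sigma_u=\sigma_v$, and $\log((1-\eta)q),\log(\eta q),\log(1-q)$ when $\sigma_u\neq\sigma_v$, with the roles of the two orientations swapped according to which of $u,v$ is the source.

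Next I would expand the quadratic form. Since $\Hmle$ is Hermitian with zero diagonal, $\mathbf{x}^*\Hmle\mathbf{x}=\sum_{\{u,v\}}2\,\mathrm{Re}(\bar{\mathbf{x}}_u\Hmle_{uv}\mathbf{x}_v)$. The encoding makes the phase $\bar{\mathbf{x}}_u\mathbf{x}_v$ equal to $1$ when $\sigma_u=\sigma_v$, to $-\iu$ when $u\in\ca,v\in\cb$, and to $+\iu$ when $u\in\cb,v\in\ca$. Taking the real part then annihilates the directional term $w_i\iu(A_{uv}-A_{vu})$ on within-community pairs (consistent with within-community orientations being uniform, hence direction-independent in the likelihood) and leaves $2\big(w_r(A_{uv}+A_{vu})+w_c\big)$ there; on cross-community pairs the density and baseline terms vanish while the directional term survives as $\pm 2w_i(A_{uv}-A_{vu})$, the sign recording which side is the source.

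Finally I would match the two expansions configuration by configuration. With the stated weights $w_i=\log\frac{1-\eta}{\eta}$, $w_r=\log\frac{p^2(1-q)^2}{4\eta(1-\eta)q^2(1-p)^2}$, and $w_c=2\log\frac{1-p}{1-q}$, a direct check shows that for each edge configuration the per-pair contribution to $\mathbf{x}^*\Hmle\mathbf{x}$ equals $4\log\prob(A_{uv},A_{vu}\mid\sigma_u,\sigma_v)$ plus a constant depending only on that pair's observed configuration; summing over pairs gives the claimed identity. I expect the main obstacle to be precisely this bookkeeping step: one must verify that the source/sink asymmetry, which enters the likelihood through $\log\frac{1-\eta}{\eta}$, is absorbed entirely into the $\pm\iu$ phases and does not leak into the configuration-dependent constants, so that the absorbed term is genuinely $\sigma$-independent. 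This constraint is what simultaneously pins down all three weights and forces the normalization factor $4$; rather than assuming the weights, I would instead introduce undetermined coefficients $(w_i,w_r,w_c)$ and a scale $\lambda$, impose equality of the $\sigma$-dependence across the within-community, source-sink, and sink-source cases, and solve the resulting linear system, recovering the stated $\Hmle$ with $\lambda=4$.
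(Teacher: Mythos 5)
Your proposal is correct, and the per-pair matching does go through: imposing that the configuration-dependent constants be the same for both orientations of a cross-community edge and solving your linear system forces $w_i=\log\frac{1-\eta}{\eta}$, $w_c=2\log\frac{1-p}{1-q}$, $w_r=\log\frac{p^2(1-q)^2}{4\eta(1-\eta)q^2(1-p)^2}$, scale $\lambda=4$, and pair constants $c_e=-2\log\left(\eta(1-\eta)q^2\right)$ for adjacent pairs and $c_0=-4\log(1-q)$ for non-adjacent pairs, which are indeed functions of $A$ alone; hence $\mathbf{x}^*\Hmle\mathbf{x}=4\,\mathcal{L}(A;\sigma)+c(A)$ and the argmaxes coincide. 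Your route differs from the paper's in organization rather than in substance. The paper factors the argument into two intermediate results: Lemma~\ref{lemma:MLE}, which writes the log-likelihood as $\frac12\langle \llin,\mathbb{1}_{\ca}\mathbb{1}_{\ca}^T+\mathbb{1}_{\cb}\mathbb{1}_{\cb}^T\rangle+\langle \llout,\mathbb{1}_{\ca}\mathbb{1}_{\cb}^T\rangle$, and Lemma~\ref{lemma:quad_form}, a general identity converting Hermitian quadratic forms over $\{1,\iu\}^N$ into exactly such inner products; the matrix $\Hmle$ is then constructed by splitting $\llout$ into symmetric and skew-symmetric parts, discarding the constant $\langle \llout+\llout^T,J\rangle$, and rescaling by $4$. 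Your argument collapses both lemmas into a single entrywise computation (your phase analysis of $2\Re(\bar{\mathbf{x}}_u\Hmle_{uv}\mathbf{x}_v)$ is Lemma~\ref{lemma:quad_form} in disguise) and replaces the constructive matrix algebra with an undetermined-coefficients verification. What the paper's factoring buys is reusability and a forward derivation of $\Hmle$ from the model (the same quadratic-form identity underlies the flow interpretation in Section~\ref{sec:optimization}); what yours buys is a self-contained, more elementary proof that makes the exact affine relation and the rigidity of the weights explicit — in particular, your ``no leakage'' condition, that the source/sink asymmetry must be absorbed entirely by the $\pm\iu$ phases, is precisely what the paper's symmetrization step \eqref{eq:thm_02} accomplishes at the matrix level.
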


\subsection{A flow optimization interpretation of MLE}
\label{sec:optimization}
We now present an alternative interpretation of \eqref{opt:Herm-MLE} through the lens of flow-based optimization problems. 
This perspective not only provides additional intuition behind the objective but also paves the way for model-free generalizations of our approach.  
To formalize this connection, we first define the following flow-based graph statistics.
\begin{definition}
Given two clusters $\ca,\cb$ in a directed graph , we use $\TF(\ca,\cb)$ to denote the \emph{total flow} between $\ca$ and $\cb$ and $\NF(\ca,\cb)$ to denote the \emph{net flow} from $\ca$ to $\cb$, where
\begin{align*}
    \TF(\ca,\cb)=\sum_{\substack{ u \in \ca v \in \cb}} (A_{uv} + A_{vu}), \quad \NF (\ca,\cb) = \sum_{\substack{ u \in \ca v \in \cb}} (A_{uv} - A_{vu}).
\end{align*}
\end{definition}
Recall from Theorem~\ref{thm:MLE} that the MLE reduces to maximizing the following quadratic form
\begin{align}
\label{eq:obj_terms}
     \bold{x}^*\Hmle \bold{x} = w_r \bold{x}^*(A+A^T)\bold{x} + \iu w_i \bold{x}^*(A-A^T)\bold{x} + w_c \bold{x}^*(J-I)\bold{x}, 
\end{align}
where the vector $\bold{x} \in \{1,\iu\}^N$ indicates the cluster assignment for each vertex. 
Each term in this expression captures a different structural property of the graph.
\begin{itemize}[leftmargin = 12pt]
    \item \textbf{Edge density:} The first term, $\bold{x}^*(A+A^T)\bold{x}$, computes twice the number of edges within clusters. Equivalently, if we denote  $|\mathcal{E}|$ the total number of edges in the graph, we can write 
    \begin{align*}
        \bold{x}^*(A+A^T)\bold{x} = 2|\mathcal{E}|-2\TF(\ca,\cb).
    \end{align*}
    \item \textbf{Directional flow:} The second term ${\iu}\bold{x}^* (A-A^T) \bold{x}$ captures the net flow from $\ca$ to $\cb$, and simplifies to  
    \begin{align*}
        {\iu}\bold{x}^* (A-A^T) \bold{x} = 2 \NF(\ca,\cb)
    \end{align*}
    \item \textbf{Cluster size:} The last term, $\bold{x}^*(J-I)\bold{x}$, computes $|\ca|^2 + |\cb|^2$, which is regularization term that penalize{s} imbalanced cluster sizes.
\end{itemize}
Together, these three components define a regularized objective that balances flow-based graph statistics between $\ca$ and $\cb$.
Specifically, the MLE objective~\eqref{eq:obj_terms} corresponds to
\begin{align*}
    -w_r \TF (\ca,\cb) + w_i \NF(\ca,\cb) + 1/2 \cdot w_c (|\ca|^2 + |\cb|^2),
\end{align*}
where the weights $w_r$,  $w_i$ reflect the relative importance of edge density and directionality, respectively. The last term penalizes imbalanced partitions, but its impact is often minor as $w_c = \log(1-p) -\log(1-q)  =  O(p-q)$, which is typically much smaller than the other weights. 

The directionality weigh  $w_i = \log{\left(\frac{1-\eta}{\eta}\right)}$ is always non-negative for $\eta\in(0,0.5]$, and decreases monotonically as direction noise $\eta$ increases, This reflects a reduced emphasis on directionality when edge directions information becomes less reliable. The edge density weight weight $w_r$ captures both the uncertainty in edge directions and the disparity between intra- and inter-cluster connectivity. It tasks the form
$w_r = \log{\left( \frac{1}{4\eta(1-\eta)}\right)} + 2\log{ \left(\frac{p(1-q)}{q(1-p)}\right)}$. This term increases when $\eta$ decreases (i.e., when edge directions are more informative), and when the difference between $p$ and $q$ becomes more pronounced.

\textbf{Relation to other Hermitian formulations.} The statistical inference formulation provides a model-based justification for the clustering criterion and the corresponding matrix representation.  Our {newly derived}  Hermitian matrix arises naturally by optimizing sufficient statistics (total flow and net flow) weighted by their relative informativeness.  Such a framework enables a principled approach to designing new Hermitian data matrices for clustering, even without assuming a full generative model, but only using coarse prior knowledge about community structure.  
For example, when edge density is uninformative ($p \approx q$) and directionality is highly information ($\eta \approx 0$), the Hermitian matrix $H = A + A^T + \iu (A - A^T)$ serves as a practical approximation to the MLE-derived formulation in \eqref{eq:H-mle}. 
Prior work by \citet{cucuringu-2020-hermitian} proposed the matrix $H = i(A - A^\top)$, which corresponds to a special case of our formulation.   Their approach focuses solely on net flow and can be interpreted as MLE restricted to the cross-community block of the DSBM.

\section{Spectral relaxation of Herm-MLE: Algorithm and error analysis}
\label{sec:analysis}

\subsection{Spectral relaxation and clustering algorithm}
Exactly solving the combinatorial optimization problem \eqref{opt:Herm-MLE} is NP-hard.
For computational efficiency, we consider the following spectral relaxation.

\textbf{Relaxation step.} We drop integer constraints in $\mathbf{x}$ and relax it to the continuous complex domain. This leads to the following continuous optimization problem 
    \begin{align*}
    \max \quad & \bold{x}^* \Hmle \bold{x} \tag{SC-MLE}\label{opt:MLEsc}\\
    s.t. \quad & {\bold{x} \in \C^{N}}, \|\bold{x}\|_2^2 =N.
    \nonumber
\end{align*}
This continuous problem \eqref{opt:MLEsc} is analytically solvable, and its maximum is attained when $\bold{x}$ is a rescaled (by $\sqrt{N}$) version of the leading eigenvector of $\Hmle$.

\textbf{Projection step.}
Note that the solution to the spectral relaxation, the top eigenvector of $\Hmle$, is not unique as its multiplication with $e^{\iu \theta}$ is also a top eigenvector for any $\theta \in [0,2\pi]$. 
To recover discrete labels, we apply the $k$-means algorithm (with $k=2$) to the complex-valued embeddings, treating each complex-valued vertex embedding as a point in $\mathbb{R}^2$.  This projection step yields rotation-invariant cluster assignments from the relaxed solution.

\subsection{Error analysis}
The intuition behind the success of spectral relaxation clustering is that the expected Hermitian matrix $\E[\Hmle]$ has community-dependent structure, and its leading eigenvector $\bold{v} = \bold{v}_1(\E[\Hmle])$ exactly encodes the true community labels through two distinct values. In practice, we observe the empirical matrix $\Hmle$, which serves as a perturbed version of {the unknown} $\E[\Hmle]$. Classical matrix perturbation theory ensures that if $\Hmle$ is sufficiently close to $\E[\Hmle]$, the leading eigenvector $\topv = \bold{v}_1(\Hmle)$ remains informative {and is sufficiently aligned with the top eigenvector of  $\E[\Hmle]$, thus enabling} accurate recovery of the community structure.
We outline the proof strategy below with full details deferred to Appendix~\ref{appx_perturbation}:
\begin{itemize}[leftmargin = 12pt]
    \item \textbf{Ideal case:}
    We first characterize the top eigenvector of $\E[\Hmle]$ in Lemma~\ref{lemma:EH_eig}, which exactly recovers the true community assignment.  
    \item  \textbf{Eigenspace perturbation:} 
    We apply two classical results, the Davis-Kahan perturbation bound and Weyl's inequality, to upper bound the eigenspace {misalignment} $ \|{\topvE}{\topvE}^* - \topv \topv^*\|_F$.
    Lemma~\ref{lemma:top-eig_pert} shows that this misalignment is bounded in terms of the perturbation norm  $\|\Hmle - \E[\Hmle]\|$ and the eigengap $\lambda_1(E[\Hmle]) - \lambda_2(\E[\Hmle])$.  
    In contrast to prior work (e.g., \cite{cucuringu-2020-hermitian}), our result holds without any technical assumptions on the eigengap, due to a variant of the Davis–Kahan theorem\citep{vu-2013-minimax, yu-2015-useful}.
    \item \textbf{Matrix concentration:}
    We apply the Matrix-Bernstein inequality from random matrix theory, and provide in Lemma~\ref{lemma:bound_perturb} a high-probability upper bound on the perturbation norm $\| \Hmle - \E[\Hmle]\|$.
    \item \textbf{Final error bound:}
    Combining the above results with a standard error bound for the $k$-means projection step (Lemma~\ref{lemma:km_error}), we establish a clustering error bound in Theorem~\ref{thm:errorHermSC} (see Appendix~\ref{appx:proof_thm2} for proof).
\end{itemize}


Let $\sigma$ denote the true community assignment and $\hat{\sigma}_{\text{SC-MLE}}$ be the $(1+\epsilon)$-approximate solution obtained by applying $k$-means{++} to the spectral relaxation of the MLE problem. The misclustering error is measured by the Hamming distance between the ground truth and estimated solution \begin{align*}
    l(\sigma, \hat{\sigma}) = \sum_{u\in \mathcal{V}} \mathbb{1}\{\sigma_u \neq \hat{\sigma}_u\}.
\end{align*}

For the main result of the misclustering error bound, we introduce the following  standard assumption  $N \pmax = \Omega\left({\log{N}}\right)$
with $\pmax = \max\{p,q\}$. This technical assumption is to allow good concentration properties of the random graph (see Lemma~\ref{lemma:bernstein}), which is common in the realm of spectral methods equipped with theoretical guarantees.
\begin{restatable}[Error bound of \ref{opt:MLEsc}]{theorem}{ThmErrorSC}
\label{thm:errorHermSC}
    For graphs generated from the  DSBM $(n_1,n_2,p,q,\eta)$ with $N \pmax = \Omega\left({\log{N}}\right)$,  there exists  $C = \Theta\left(\sqrt{w_r^2 + w_i^2}\right)$ (see \eqref{eq:C}) and an absolute constant $\epsilon_0$, such that with probability at least $1-N^{-\epsilon_0}$, the error rate 
    \begin{align}
    \label{eq:Thm-error}
        \frac{l(\sigma, \hat{\sigma}_{\text{SC-MLE}})}{N}\leq  \frac{64(2+\epsilon)C^2{\pmax \log{N}} }{ d^2\Delta^2 }.
    \end{align}
    Here $\Delta$ and $d$ depends only on the population matrix $\E[\Hmle]$, where $\Delta$ lower bounds the eigengap $\lambda_1( \E[\Hmle]) - \lambda_2(\E[\Hmle])$ with expression given in \eqref{eq:eigengap};
    $d$ denotes the distance between the two cluster centroids of the population version $\E[\Hmle]$ with expression provided in \eqref{eq:centroids}.
\end{restatable}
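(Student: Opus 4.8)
The plan is to chain together the four ingredients already outlined in the proof strategy, each of which is isolated into its own lemma. First I would invoke Lemma~\ref{lemma:EH_eig} to pin down the population eigenvector $\topvE = \bold{v}_1(\E[\Hmle])$: since $\E[\Hmle]$ is block-constant across the two planted communities, its leading eigenvector takes exactly two distinct complex values, one on $\ca$ and one on $\cb$, so the two population ``centroids'' are separated by a distance $d$ (the quantity in \eqref{eq:centroids}). This is the noiseless template against which the empirical embedding is compared, and it is what guarantees that perfect recovery is information-theoretically encoded in the spectrum.

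Next I would control the deviation of the empirical top eigenvector from this template. The key is to bound the subspace misalignment $\|\topvE\topvE^* - \topv\topv^*\|_F$. By the Davis–Kahan variant of \citep{yu-2015-useful,vu-2013-minimax} (Lemma~\ref{lemma:top-eig_pert}), this is at most a constant multiple of $\|\Hmle - \E[\Hmle]\|$ divided by the eigengap $\lambda_1(\E[\Hmle]) - \lambda_2(\E[\Hmle])$, which is lower bounded by $\Delta$ from \eqref{eq:eigengap}; crucially this formulation sidesteps any a priori eigengap assumption. I would then feed in the matrix concentration estimate (Lemma~\ref{lemma:bound_perturb}), which via Matrix-Bernstein and the density assumption $N\pmax = \Omega(\log N)$ yields, with probability at least $1 - N^{-\epsilon_0}$, a bound of the shape $\|\Hmle - \E[\Hmle]\| = O\bigl(\sqrt{w_r^2 + w_i^2}\,\sqrt{N\pmax \log N}\bigr)$. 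Identifying the prefactor $\sqrt{w_r^2 + w_i^2}$ with $C$ gives $\|\topvE\topvE^* - \topv\topv^*\|_F \lesssim C\sqrt{N\pmax\log N}/\Delta$.

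Finally I would convert this eigenvector closeness into a misclustering count. Applying the standard $k$-means$++$ error bound (Lemma~\ref{lemma:km_error}) for a $(1+\epsilon)$-approximate solution: any vertex that is misclassified must have its embedding displaced from its correct centroid by at least $d/2$, so the number of misclassified vertices is at most $O\bigl((2+\epsilon)\|\topvE\topvE^* - \topv\topv^*\|_F^2 / d^2\bigr)$. Substituting the subspace bound from the previous step and dividing by $N$ produces exactly the stated rate $64(2+\epsilon)C^2\pmax\log N / (d^2\Delta^2)$, where the squaring of the Frobenius bound accounts for the $C^2$, $\Delta^2$, and the $\pmax\log N$ factors, while the factor of $N$ from concentration cancels against the normalization.

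The main obstacle I anticipate is the concentration step and its bookkeeping rather than the perturbation-to-error conversion, which is fairly mechanical. Specifically, $\Hmle$ is a weighted combination of $A-A^T$, $A+A^T$, and $J-I$ with data-dependent weights $w_i, w_r, w_c$, so I must decompose $\Hmle - \E[\Hmle]$ into a sum of independent mean-zero Hermitian contributions (one per vertex pair), verify the boundedness and variance conditions for Matrix-Bernstein with the correct dependence on $w_r$ and $w_i$, and track how the two weights combine into the single constant $C = \Theta(\sqrt{w_r^2 + w_i^2})$. Getting the variance proxy to scale as $\pmax$ (rather than a cruder worst-case bound) is what makes the final rate tight, and ensuring the $J-I$ term — whose weight $w_c = O(p-q)$ is comparatively negligible — does not dominate the eigengap analysis requires care. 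Once those constants are pinned down, assembling the three lemmas into \eqref{eq:Thm-error} is routine.
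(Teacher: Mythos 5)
Your proposal is correct and follows essentially the same route as the paper's proof: Lemma~\ref{lemma:EH_eig} for the population template, Lemma~\ref{lemma:top-eig_pert} (Davis--Kahan plus Weyl) combined with Lemma~\ref{lemma:bound_perturb} (Matrix Bernstein) for the high-probability bound $\|\topv\topv^* - \topvE\topvE^*\|_F \leq 2\sqrt{2}C\sqrt{N\pmax\log N}/\Delta$, and Lemma~\ref{lemma:km_error} for the $k$-means conversion, which assembles into exactly the stated constant $64(2+\epsilon)$. The only step you gloss over is the phase ambiguity of the top eigenvector: the $k$-means lemma is stated in terms of $\|\hat{U}-U\|_F$ rather than the projector distance, and the paper bridges this by noting that the $k$-means output is invariant under rotations of $\hat{U}$ and then invoking Lemma~\ref{lem:dist_comp} to bound $\min_{O}\|\hat{U}-OU\|_F$ by $\|\topv\topv^* - \topvE\topvE^*\|_F$, which is precisely the conversion your final bound implicitly assumes.
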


To better interpret the connection between the error bound in Theorem~\ref{thm:errorHermSC} and the noise level in DSBM,  we consider a simplified case that admits an explicit analytical form for the error bound. Specifically, in Corollary~\ref{coro:eta}, we specialize the error bound for a symmetric DSBM with homogeneous edge probabilities, a setting that lies below the detection threshold for undirected graphs.  The proof of Corollary~\ref{coro:eta} can be found in Appendix~\ref{appx_perturbation}.  
\begin{restatable}{corollary}{Coroeta}   
\label{coro:eta}   
    Consider directed graphs generated from the DSBM $(N/2,N/2,p,p,\eta)$ under the assumption $Np = \Omega(\log{N})$. As $N \rightarrow \infty$, the misclustering error of  spectral clustering is such that  
    \begin{align}
    \label{eq:coro1_eta_bound}
        \frac{l(\sigma,\hat{\sigma}_{\text{SC-MLE}})}{N}  = \Theta\left(\frac{\log{N}}{NpL^2} \right),
    \end{align}
 where $L = L(\eta)$ is a continuous, monotonically decreasing function of the edge directionality parameter $\eta$, with $L = 0$ when $\eta = 0.5$.  The explicit form of $L(\eta)$ is provided in \eqref{eq:L_eta}, and its behavior is illustrated in Figure~\ref{fig:LEta} in Appendix~\ref{appx_perturbation}.
\end{restatable}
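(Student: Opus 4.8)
The plan is to substitute the symmetric parameters into the bound of Theorem~\ref{thm:errorHermSC} and reduce every quantity appearing there to an explicit function of $\eta$. Setting $p=q$ immediately gives $w_c=\log(1-p)-\log(1-q)=0$ and $\pmax=p$, so the population matrix collapses to $\E[\Hmle]=w_r\,\E[A+A^T]+\iu w_i\,\E[A-A^T]$. First I would write $\E[\Hmle]$ in $2\times2$ block form over $(\ca,\cb)$. Using $\E[A_{uv}+A_{vu}]=p$ for every off-diagonal pair (within or across communities, since $p=q$) and $\E[A_{uv}-A_{vu}]=\pm(1-2\eta)p$ across communities, each diagonal block equals $w_r p(J_n-I_n)$ while the two off-diagonal blocks equal $pz\,J_n$ and $p\bar z\,J_n$, where $z\triangleq w_r+\iu w_i(1-2\eta)$ and $n=N/2$.

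Next I would diagonalize this block matrix exactly. On the block-constant subspace spanned by $(\mathbf{1}_n,\mathbf{0})$ and $(\mathbf{0},\mathbf{1}_n)$ the action reduces to a $2\times2$ Hermitian matrix with diagonal entries $p\,w_r(n-1)$ and off-diagonal entries $pzn,\,p\bar z n$, whose eigenvalues are $\lambda_\pm=p\bigl(w_r(n-1)\pm n|z|\bigr)$ with $|z|=\sqrt{w_r^2+w_i^2(1-2\eta)^2}$; every vector orthogonal to $\mathbf{1}_n$ in each block is an eigenvector with eigenvalue $-w_r p$ of multiplicity $N-2$. Since $|z|\ge w_r\ge0$, the leading eigenvalue is $\lambda_1=\lambda_+$, and its eigenvector is block-constant, equal to $c_1$ on $\ca$ and $c_2=c_1 e^{-\iu\arg z}$ on $\cb$ with $|c_1|=|c_2|=N^{-1/2}$. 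From this I read off the three inputs to Theorem~\ref{thm:errorHermSC}: the constant $C=\Theta(\sqrt{w_r^2+w_i^2})$ from \eqref{eq:C}, which is $\Theta(1)$ in $N$; the eigengap, for which the binding quantity is the distance from $\lambda_1$ to the bulk value $-w_r p$, namely $\Delta=\Theta\bigl(pN(w_r+|z|)\bigr)$ (this indeed lower-bounds $\lambda_1-\lambda_2=pN|z|$ because $w_r\le|z|$); and the centroid separation $d=|c_1-c_2|=2N^{-1/2}\lvert\sin(\tfrac12\arg z)\rvert$, where $\sin^2(\tfrac12\arg z)=\tfrac12(1-w_r/|z|)$.

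Substituting these into \eqref{eq:Thm-error} and simplifying, the powers of $N$ and $p$ collapse: the numerator contributes $C^2\pmax\log N=\Theta\bigl((w_r^2+w_i^2)\,p\log N\bigr)$, while $d^2\Delta^2=\Theta\bigl(N^{-1}(1-w_r/|z|)\cdot p^2N^2(w_r+|z|)^2\bigr)$, so the ratio is $\Theta\bigl(\tfrac{\log N}{pN}\bigr)$ times a factor depending only on $\eta$. Collecting that factor into $L^2(\eta)$ yields exactly the claimed rate $\Theta\bigl(\tfrac{\log N}{NpL^2}\bigr)$ and defines $L(\eta)$ through \eqref{eq:L_eta}; concretely $L^2$ equals $\tfrac{(1-w_r/|z|)(w_r+|z|)^2}{w_r^2+w_i^2}$ up to an absolute constant and up to the precise normalization of the eigengap, with $w_i=\log\tfrac{1-\eta}{\eta}$ and $w_r=\log\tfrac{1}{4\eta(1-\eta)}$.

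Finally I would establish the stated qualitative properties of $L$. Away from the endpoint $L$ is manifestly continuous. At $\eta=\tfrac12$ both $w_r$ and $w_i$ vanish, so $|z|\to0$ and $|z|-w_r\to0$, making the defining expression a genuine $0/0$ limit; expanding $w_r\sim4\delta^2$, $w_i\sim4\delta$, $1-2\eta=2\delta$ around $\eta=\tfrac12-\delta$ shows $L(\eta)\to0$, giving $L(\tfrac12)=0$ together with continuity at the endpoint. I expect the monotone-decrease claim to be the main obstacle: because $w_r(\eta)$ and $w_i(\eta)$ both grow without bound as $\eta\to0$ and enter $L$ nonlinearly through $|z|$, monotonicity is not transparent from the raw formula, and the behaviour is genuinely delicate near $\eta=0$. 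The cleanest route is to reparametrize—for instance through the phase via $\cos(\arg z)=w_r/|z|$, equivalently the ratio $t=w_i(1-2\eta)/w_r$—reducing $L$ to a single-variable function whose derivative can be sign-analyzed, and then to combine this with the endpoint limit to conclude decay on $(0,\tfrac12]$. A secondary bookkeeping point is confirming that the binding eigengap is the distance to the bulk rather than to $\lambda_-$, but since both candidates are $\Theta(pN)$ this affects only the precise $\eta$-profile of $L$, not the $\Theta(\log N/(NpL^2))$ scaling in $N$ and $p$.
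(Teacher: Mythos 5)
Your proposal is correct and follows essentially the same route as the paper: specialize Theorem~\ref{thm:errorHermSC} to $p=q$ (so $w_c=0$ and $C=\Theta(1)$ after normalization), compute the population eigenstructure to get $\Delta=\Theta\bigl(Np\,|z|\bigr)$ and $d=\Theta\bigl(N^{-1/2}\sin(\theta/2)\bigr)$ with $z=w_r+\iu\,w_i(1-2\eta)$, and collapse \eqref{eq:Thm-error} to $\Theta\bigl(\log N/(NpL^2)\bigr)$; since $w_r\le w_i$ on $(0,1/2]$, your expression for $L^2$ agrees with the paper's \eqref{eq:L_eta} up to absolute constants. On the one step you flag as the main obstacle, monotonicity of $L(\eta)$, the paper is no more rigorous than you are: it simply plots $L(\eta)$ with Mathematica (Figure~\ref{fig:LEta}) and reads off the qualitative behavior, so your endpoint expansion as $\eta\to 1/2$ is, if anything, slightly more careful than the paper's own treatment.
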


From \eqref{eq:coro1_eta_bound}, we observe that the upper bound on the misclustering error is inversely proportional to the average degree $Np$, which aligns with the intuition that a larger average degree provides more edge observations, rendering the generated graph more informative and leading to a smaller {mis}clustering error. Furthermore,  as the edge direction noise $\eta$ increases, the function $L(\eta)$ decreases, resulting in a higher misclustering error. In particular, as $\eta\rightarrow 0.5$, we have $L \rightarrow 0$, indicating a sharp increase in errors as directional information gradually disappears.  This behavior is consistent with the intuition that {a higher noise level in the edge orientations} weakens the structural information to identify communities, {rendering the recovery task} more difficult. 
Moreover, this homogeneous edge density setting highlights the importance of exploiting edge orientation in directed graph clustering: if edge directions are ignored, the model becomes statistically indistinguishable from an Erd\H{o}s--R\'enyi graph, making community detection impossible.

\section{Iterative algorithm and experiments}
\label{sec:experiment}

\subsection{Algorithm: \texttt{LE-SC}}
Directly applying the spectral relaxation of Herm-MLE, as explained in the previous section, might be impractical as it requires knowledge of the model parameters $p,q$, and $\eta$. To circumvent this, we adopt an iterative parameter learning approach based on pseudo maximum likelihood estimation \citep{gong-1981-pseudo}, where the nuisance parameters $p,q$, and $\eta$ are re-estimated using the method of moments in each iteration. This leads to a novel spectral clustering algorithm for directed graphs, which we summarize in Algorithm~\ref{alg:LE-SC}. 
\begin{algorithm}
\SetKwInOut{Input}{Input}\SetKwInOut{Output}{Output}
\caption{Likelihood Estimation Spectral Clustering (\texttt{LE-SC})}
\label{alg:LE-SC}
\Input{Directed graph $G(\mathcal{V},\mathcal{E})$, number of maximum iteration $T$}
\Output{Community labels $\hat{\sigma}$}
\textbf{Initialize:} Randomly set initial values for DSBM parameters $p$, $q$, and $\eta$\;
\For{$t = 1$ \KwTo $T$}{
Compute the Hermitian matrix $\Hmle$ using \eqref{eq:H-mle}\;
Compute the top eigenvector $\topv$ of $\Hmle$\;
Apply $k$-means to the embedding $[\Re(\topv), \Im(\topv)]$ to partition into two clusters: $\ca$ and $\cb$\;
    Update DSBM parameters based on current clustering:\;
    \Indp
    $p \gets \frac{|\mathcal{E}| - \TF(\ca, \cb)}{\binom{|\ca|}{2} + \binom{|\cb|}{2}}$\hfill // within-community edge density\;
    $q \gets \frac{\TF(\ca, \cb)}{|\ca| \cdot |\cb|}$\hfill // between-community edge density\;
    $\eta \gets \min \left\{ \frac{|\ca \rightarrow \cb|}{\TF(\ca, \cb)},\ \frac{|\cb \rightarrow \ca|}{\TF(\ca, \cb)} \right\}$\hfill // directionality asymmetry\;
    \Indm
}
\Return{Final community labels $\hat{\sigma}$.}
\end{algorithm}

Here, \texttt{LE-SC} partitions a directed graph into two clusters. To obtain more than two clusters, we recursively apply the bipartition procedure, each time splitting the largest remaining cluster until the desired number of clusters is reached. The full multi-cluster extension is summarized as Algorithm~\ref{alg:LE-SC-k} in Appendix~\ref{appx:multi-cluster}.
We refer to both the two-cluster and multi-cluster versions of our method as \texttt{LE-SC}.

\textbf{Convergence.} While there is no known theoretical guaranteens on the convergence of iterative pseudo maximum likelihood estimation for stochastic block models, our extensive experiments on DSBMs suggest that the algorithm is robust to different initializations and typically converges to the true parameters within 10 iterations ({an observation in line with what has been reported in} \citep{newman-2016-equivalence}). For a practical implementation, we recommend assigning the model parameters based on an initial clustering output from existing algorithms, in order to facilitate the convergence. Detailed experimental studies on convergence are provided in Appendix~\ref{appx:convergence}, and comparisons between \texttt{LE-SC} and spectral clustering with true model parameters are presented in Figure~\ref{fig:DSBM_k2}.

\textbf{Complexity analysis.} The algorithm \texttt{LE-SC} involves several iterations of computing the leading eigenvector of the Hermitian matrix \eqref{eq:H-mle}, where each eigenvector computation requires $\mathcal{O}(|\mathcal{E}|)$ operations. Note that although the Hermitian matrix includes a dense all-ones component, the eigenvector computation can still take advantage of the graph sparsity and be performed in $\mathcal{O}(|\mathcal{E}|)$ time by decomposing the matrix operations (see Appendix~\ref{appx:complexity} for a detailed discussion).
{In the setting of multiple clusters}, we apply \texttt{LE-SC} to recursively bi-partition the largest remaining cluster at each step. Therefore, clustering into $k$ groups requires an overall computational complexity of $\mathcal{O}(k|\mathcal{E}|)$.  All experiments were conducted on a MacBook Pro with an Apple M2 chip, 24 GB of RAM. The algorithm \texttt{LE-SC} completes in several seconds on input graphs with thousands of vertices.

\subsection{Experiments on synthetic DSBM graphs}
\textbf{Data generation overview.} We conduct experiments on directed graphs sampled from the DSBM ensemble, with each community having a fixed size of $1000$, and varying the number of communities as well as model parameters $p,q$, and $\eta$. Since spectral clustering typically performs well for dense graphs, we specifically focus on the more challenging sparse regime, where the edge probabilities $p$ and $q$ are slightly above $\log n/n$, the connectivity threshold of random graphs.

\textbf{Baselines.} We compare against several baseline spectral clustering algorithms: (a) Hermitian-based methods: \texttt{Herm} \citep{cucuringu-2020-hermitian} and \texttt{SimpHerm} \citep{laenen-2020-higher}; (b) SVD based methods: \texttt{DI-SIM(L)} and \texttt{DI-SIM(R)} from \citep{rohe-2016-coclustering}, and \texttt{D-SCORE} \citep{wang-2020-dscore};
and (c) symmetrization-based methods: naive symmetrization \texttt{Sym} (using $A+A^T$), bibliometric symmetrization \texttt{Bib-Sym} 
 (using $AA^T + A^TA$) \citep{satuluri-2011-symmetrizations}.

\textbf{Evaluation metric.}
We assess the clustering performance using the Adjusted Rand Index (ARI) \citep{gates-2017-ARI_NMI}, which quantifies the similarity between the clustering outcomes and ground-truth labels.  The ARI ranges from $-1$ to $1$, with higher values indicating better clustering performance: ARI value of $1$  indicates perfect recovery and $0$ implies that the recovery is almost like a random guess. In each synthetic experiment, we independently sample 10 directed graphs with a fixed parameter set, and report the averaged ARIs over these graph samples. 

\subsubsection{Two-community DSBMs.} 

We perform experiments on two-community DSBMs with varying model parameters and summarize {the results} in Figure~\ref{fig:DSBM_k2} the ARIs. Overall, our proposed algorithm $\texttt{LE-SC}$, consistently outperforms existing methods by a significant margin. In particular, we compare  $\texttt{LE-SC}$ with its oracle version, where the true parameters $p$, $q$, and $\eta$ are provided, bypassing the iterative pseudo-maximum likelihood estimation. The nearly identical performance between $\texttt{LE-SC}$ and $\texttt{LE-SC(oracle)}$ suggests that the theoretical guarantee established for the MLE in Theorem~\ref{thm:errorHermSC} may extend to the practical pseudo-likelihood estimation algorithm $\texttt{LE-SC}$.

We evaluate both homogeneous ($p = q$) and inhomogeneous ($p \neq q$) DSBM settings.
In the homogeneous case, community recovery can only rely on the information attached to the edge directions. In contrast, when $p \neq q$, clusters are informed by both the direction and the density of directed edges. Our method is self-adaptive, effectively leveraging both types of information. For example, when edge direction is highly informative (small $\eta$), \texttt{LE-SC} significantly outperforms symmetrization-based baselines by exploiting directionality. 
As the directional information diminishes (i.e., as $\eta$ approaches $0.5$),  the performance of \texttt{LE-SC} gradually decreases to that of the symmetrized baseline, demonstrating its adaptiveness to varying  information sources.

\begin{figure}[htp!]
    \centering
    \begin{subfigure}{0.22\textwidth}
        \includegraphics[width=\textwidth]{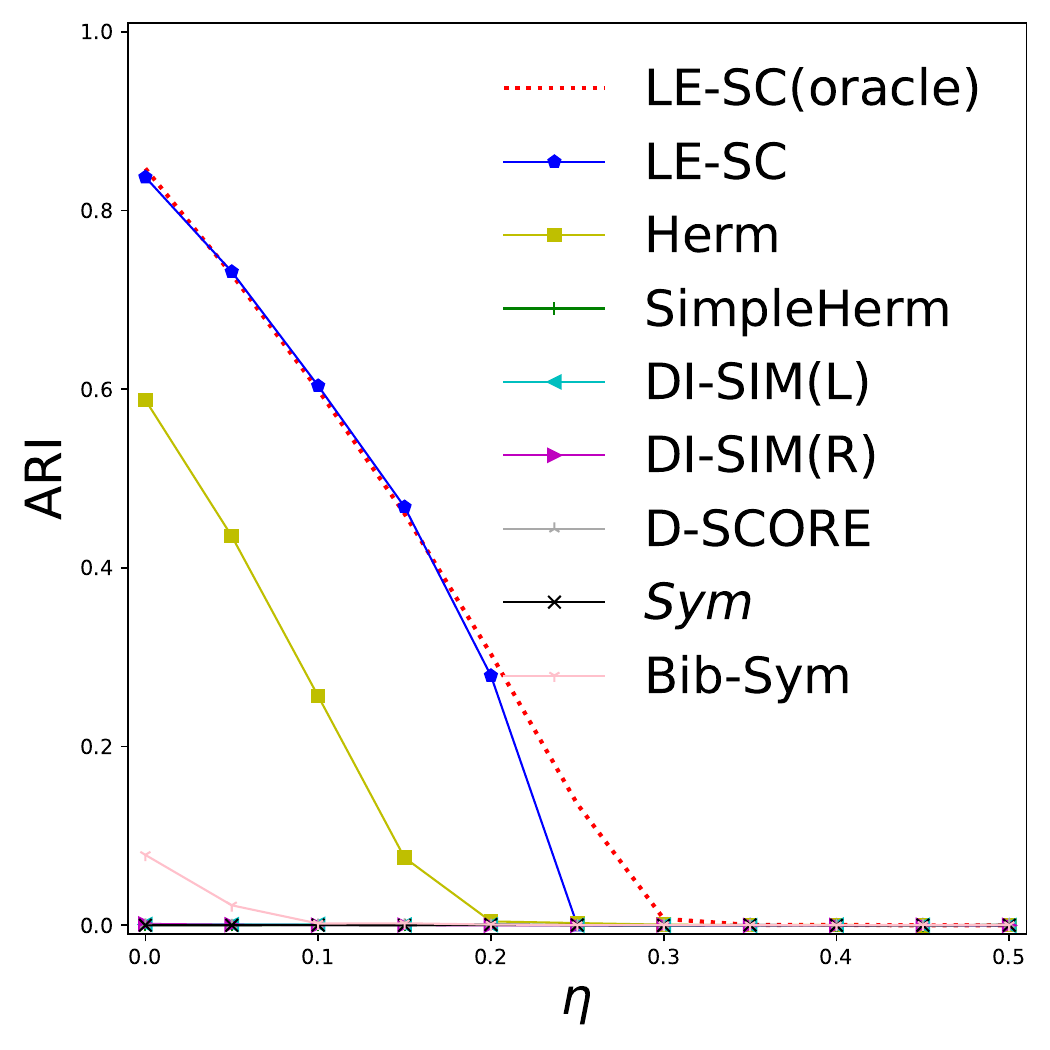}
        \caption{$p = q = 0.5\%$}
    \end{subfigure}
    \begin{subfigure}{0.22\textwidth}
        \includegraphics[width=\textwidth]{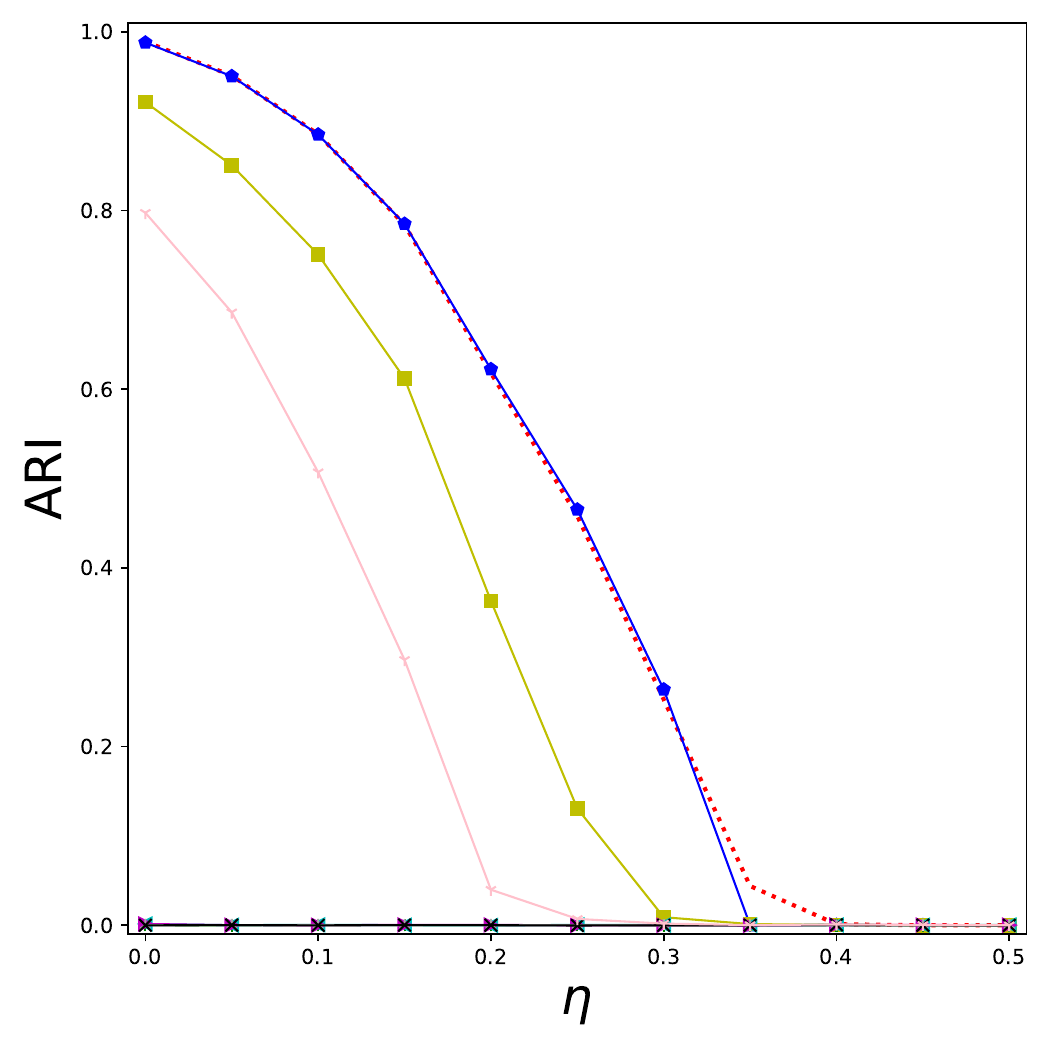}
        \caption{$p = q = 1\%$}
    \end{subfigure}
    \begin{subfigure}{0.22\textwidth}
        \includegraphics[width=\textwidth]{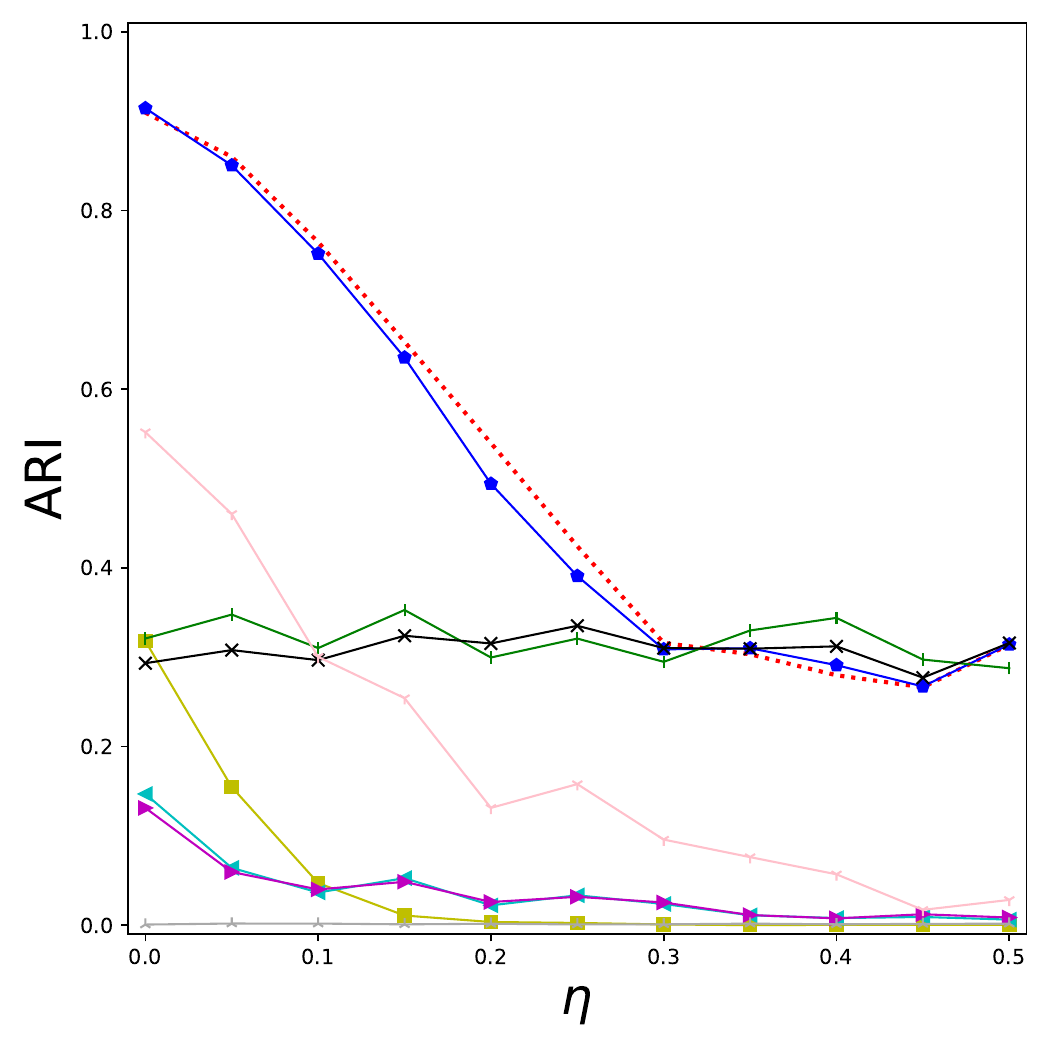}
        \caption{$p = 1\%$, $ q = 0.5\%$}
    \end{subfigure}
    \begin{subfigure}{0.22\textwidth}
        \includegraphics[width=\textwidth]{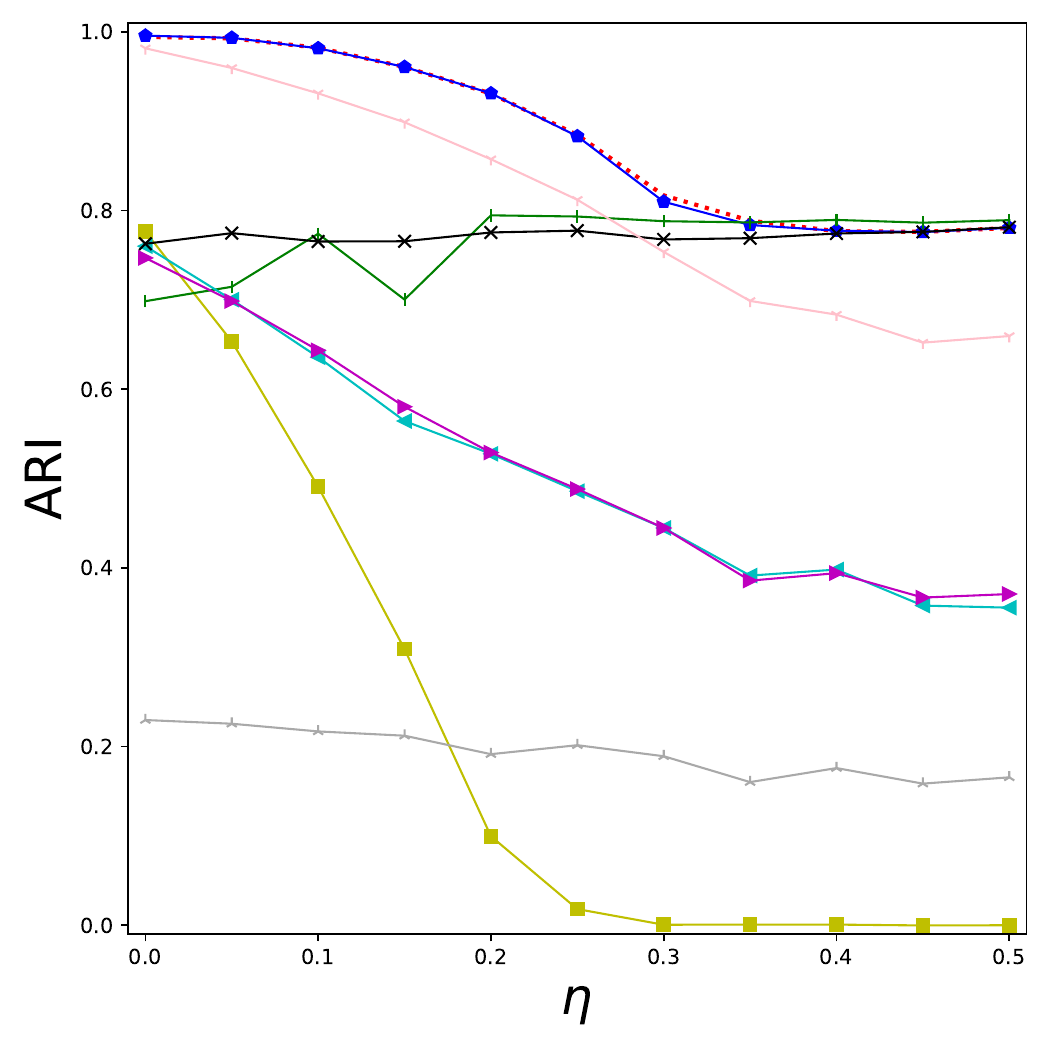}
        \caption{$p = 2\%$, $ q = 1\%$}
    \end{subfigure}
    \caption{Experiments on two-community DSBM with varying model parameters.}
    \label{fig:DSBM_k2}
\end{figure}
\subsubsection{Multi-community DSBMs}
For DSBMs with multiple communities, the interaction between communities form{s a higher-order} pattern, which we represent using a meta-graph: vertices are communities and directed edges encode the orientation parameter $1-\eta$ between the two community pairs, while the absence of an edge implies randomly assigned edge directions. 

While exhaustive testing on all possible multi-community DSBMs is infeasible, we present experimental results on DSBMs with meta-graph structures illustrated in Figure~\ref{fig:k3} and Figure~\ref{fig:k4}. 
Overall, our method achieves competitive performance compared to baselines across these settings.  We further conduct extensive experiments with different higher-order meta-graph structures and observe that the relative performance of various algorithms depends significantly on the structure of the meta-graph topology.  Although there is no known theoretical analysis on this, we provide more empirical studies and discussions in Appendix~\ref{appx:k-DSBM} to better understand the strengths and limitations of our method in multi-community settings.

\begin{minipage}{0.28\textwidth}
    \begin{figure}[H]
    \centering
    \begin{subfigure}{0.4\textwidth}
        \includegraphics[width=1.2\textwidth]{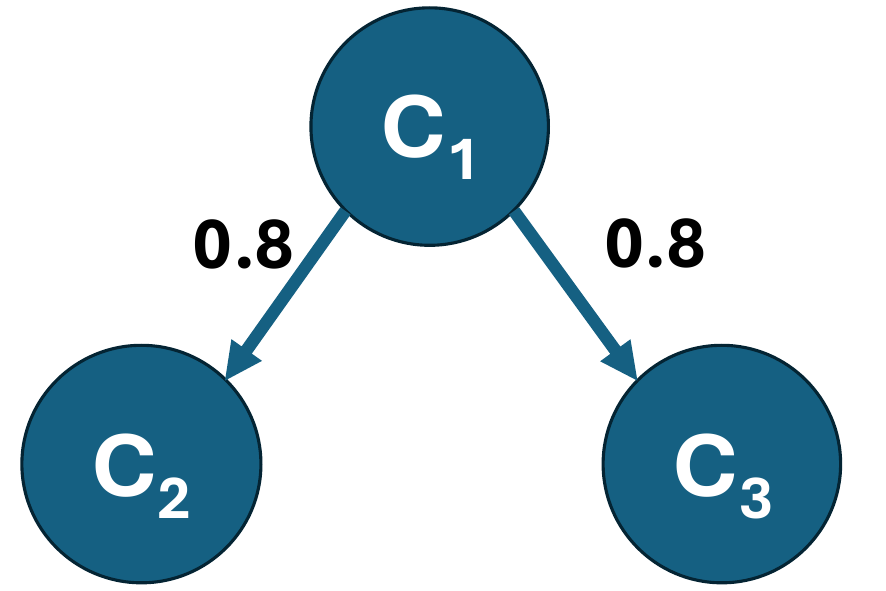}
        \vspace{-12pt}
        \caption{}\label{fig:k3}
    \end{subfigure}
    \hfill
    \begin{subfigure}{0.4\textwidth}
        \includegraphics[width=1\textwidth]{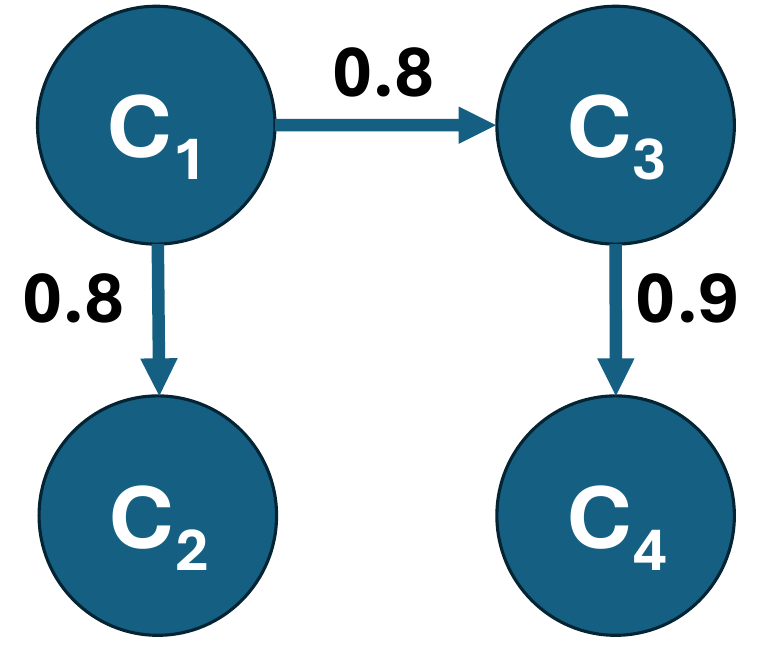}
        \vspace{-12pt}
        \caption{}\label{fig:k4}
    \end{subfigure}
    \caption{Meta-graphs.} \label{fig:meta}
    \end{figure}
    \end{minipage}
    \hfill
    \begin{minipage}{0.7\textwidth}
    \centering
    {\setlength{\tabcolsep}{3pt}
    {\small
    \begin{tabular}{lccccccccc}
    \toprule
 Group& \texttt{Sym} & \texttt{Bib-Sym}  & \texttt{DI-SIM} & \texttt{D-SCORE} & \texttt{Herm} & \texttt{SimpHerm} & \texttt{LE-SC} \\
    \midrule
    (a-1) & 0.00 & 0.16 & 0.00 & 0.00& \textbf{ \blue{0.31}} & 0.00 & \textbf{\red{0.41}} \\
    (a-2) & \textbf{\blue{0.66}} & {{0.65}} & 0.28 & 0.02 & {0.19} & 0.00 & \textbf{\red{0.83}} \\
    (b-1) & 0.00 & 0.21 & 0.01& 0.00& \textbf{ \red{0.45}} & 0.10 & \textbf{\blue{0.29}} \\
    (b-2) & 0.58 &  \textbf{\red{0.72 }}&  0.31 &0.01 & 0.32& 0.11& \textbf{\blue{0.59}}\\
\bottomrule
\end{tabular}
\captionof{table}{ARIs on clustering multi-community DSBMs. The prefix in Group denotes corresponding meta-graphs from Figure~\ref{fig:meta}, while suffix "1" denotes $p = q = 1\%$ and suffix "2" denotes $p = 2\%, q = 1\%$.}
\label{tab:DSBM_k}
}
}
\end{minipage}

\subsection{Larval Drosophila mushroom body connectome}
In neuron-neuron interaction networks, vertices represent neurons and directed edges correspond to synapses, where the edge orientation often reflects the functional roles of neuron types. 
We study the Larval Drosophila mushroom body connectome, a real-world neuron connectivity graph that contains three types of neurons: Kenyon Cells (KC), Output Neurons (MBON), and Projection Neurons (PN). According to neuroscience research \citep{eichler-2017-complete, priebe-2017-semiparametric}, these neuron types exhibit characteristic patterns of directed connectivity (Figure~\ref{fig:neuron_true}), and can be modeled using a meta-graph shown in Figure~\ref{fig:neuron_c}.
We apply spectral clustering algorithms to recover the neuron types based solely on the observed directed graph. The  adjacency matrices, 
{ordered by the inferred cluster membership}, 
along with the corresponding ARI scores, are shown in Figure~\ref{fig:adj_neoron}.
Among the eight spectral algorithms evaluated, our method \texttt{LE-SC} achieves the highest recovery accuracy, and its visualization reveals a clear directional structure between the (1,3) and (2,3) blocks, corresponding to strong directed connections from KC to MBON and PN to KC. 

\begin{figure}[h]
    \centering
    \begin{minipage}{0.3\textwidth}
        \centering
        \begin{subfigure}{\textwidth}
        \centering
            \includegraphics[width=0.5\linewidth]{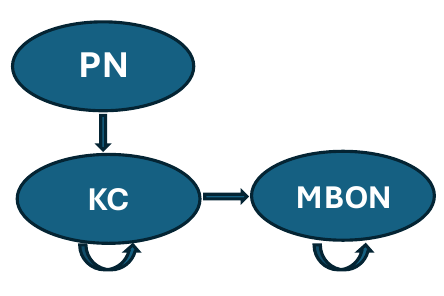}
        \caption{Relation between neuron types}
        \label{fig:neuron_c}
        \end{subfigure}
        \begin{subfigure}{\textwidth}
        \centering
            \includegraphics[width=0.5\linewidth]{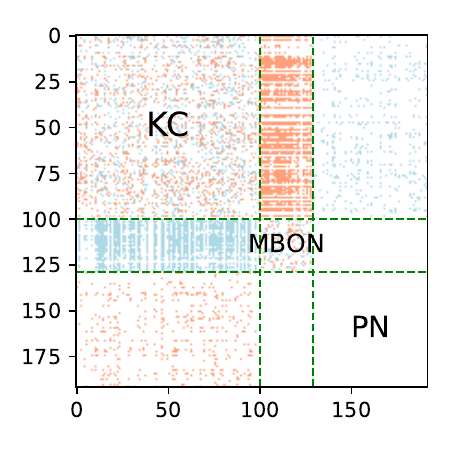}
        \caption{Gound truth}
        \label{fig:neuron_true}
        \end{subfigure}
    \end{minipage}
    \begin{minipage}{0.68\textwidth}
    \centering
\includegraphics[width=0.7\linewidth]{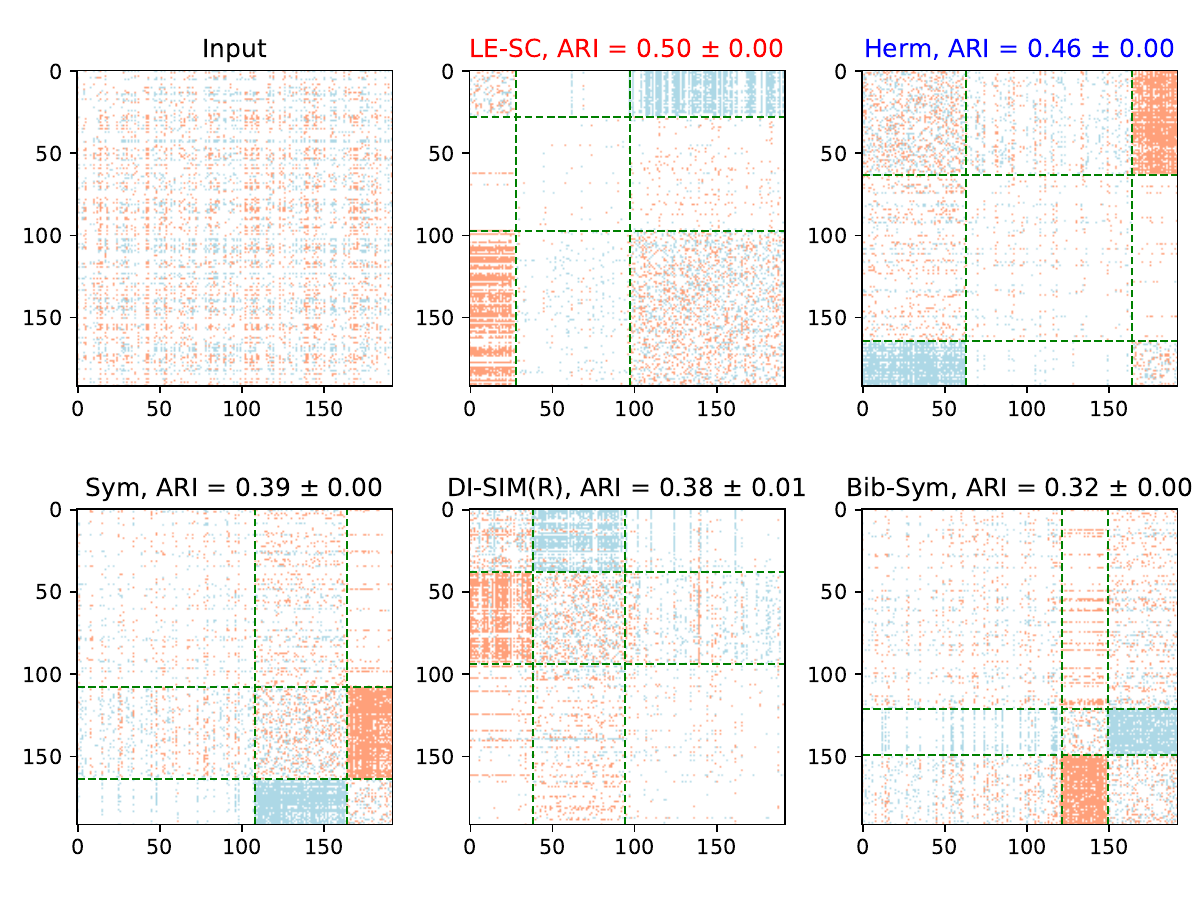}       
\end{minipage}
\caption{Visualization of graph adjacency {matrices} after clustering (results from the best {five} algorithms). A red pixel indicates an outgoing edge from the vertex indexed by the row to the vertex indexed by the column.}
\vspace{-12pt}
\label{fig:adj_neoron}
\end{figure}
\subsection{US migration network}
We consider migration patterns in the United States using data from the 2000 U.S. Census, which documents county-to-county migration between 1995 and 2000 \citep{census2002, cucuringu-2020-hermitian}.
We include 3074 mainland U.S. counties, and represent migration flows between counties using a directed and weighted graph. In this graph, each edge weight corresponds to the number of individuals migrating from one county to another.
To avoid a biased result dominated by extremely high degree vertices, we normalize the directed graph and use $D^{-1/2}A D^{-1/2}$, with the degree matrix $D$ accounting for both incoming and outgoing edges.
We then apply spectral clustering methods to partition the graph into $k=3$ clusters and visualize the outcomes in Figure~\ref{fig:migration}. Additional details and results for $k =2, 5$ and $10$ are provided in Appendix~\ref{sec:appx_US}. 

Notably, \texttt{LE-SC} {is the only algorithm, that}  
identifies a distinct cluster of economically significant metropolitan areas (the red cluster in Figure~\ref{fig:US_LESC_3}). 
This includes regions surrounding New York State, major cities in California, and urban hubs like Seattle, Dallas, Atlanta, Chicago, and Denver. 
The presence of these distant yet economically vibrant regions within the same cluster suggests that those migrations were likely to be driven more by economic opportunity than by geographic proximity, an insight not captured by existing baseline methods. 
The other two clusters identified by \texttt{LE-SC} (shown in blue and yellow) exhibit more geographically cohesive patterns.  For example, the blue cluster primarily contains counties from Minnesota and Wisconsin, likely reflecting regional migration trends.
Together, these results underscore the strength of \texttt{LE-SC} in detecting latent patterns in human mobility, revealing insights into migration dynamics beyond what existing baselines capture.

\begin{figure}[tph!]
    \centering
    \begin{subfigure}{0.24\textwidth}
        \includegraphics[width = \textwidth]{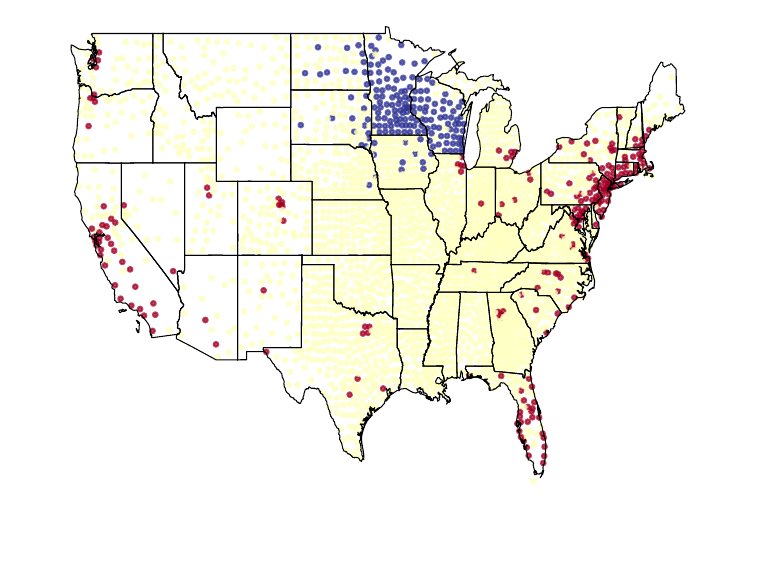}
        \vspace{-24pt}
        \caption{\texttt{LE-SC}}\label{fig:US_LESC_3}
    \end{subfigure}
    \begin{subfigure}{0.24\textwidth}
        \includegraphics[width = \textwidth]{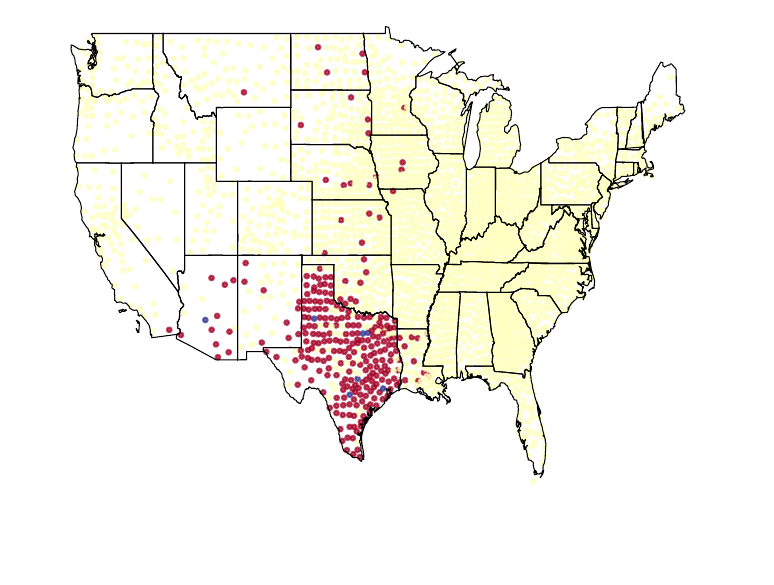}
        \vspace{-24pt}
        \caption{\texttt{Herm(RW)}}
    \end{subfigure}
    \begin{subfigure}{0.24\textwidth}
        \includegraphics[width = \textwidth]{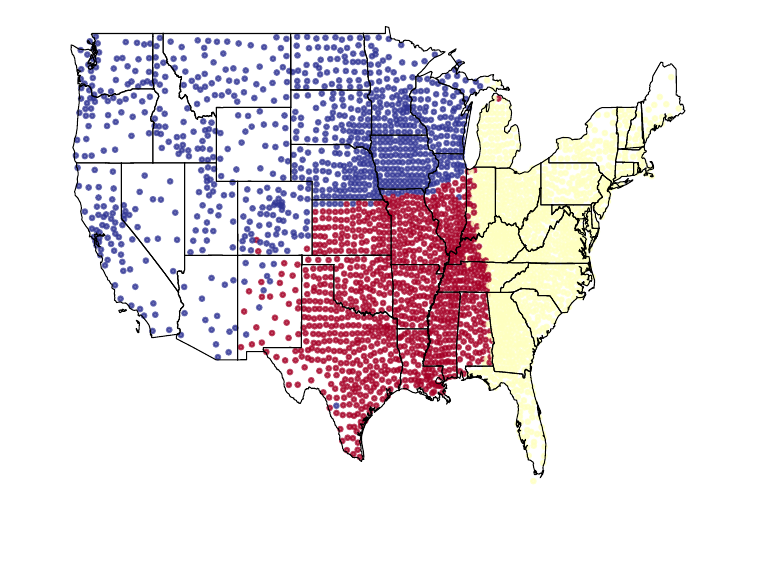}
        \vspace{-24pt}
        \caption{\texttt{DI-SIM(L)}}
    \end{subfigure}
    \begin{subfigure}{0.24\textwidth}
        \includegraphics[width = \textwidth]{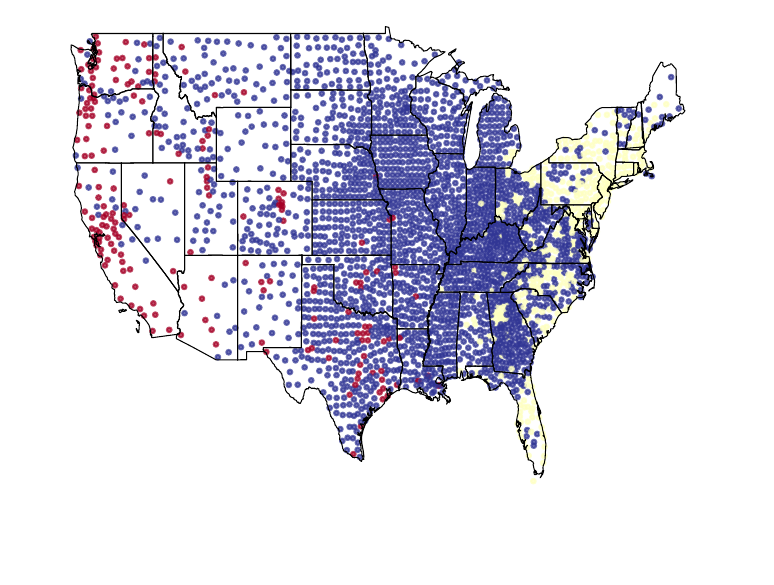}
        \vspace{-24pt}
        \caption{\texttt{Bib-Sym}}
    \end{subfigure}
    \begin{subfigure}{0.24\textwidth}
        \includegraphics[width = \textwidth]{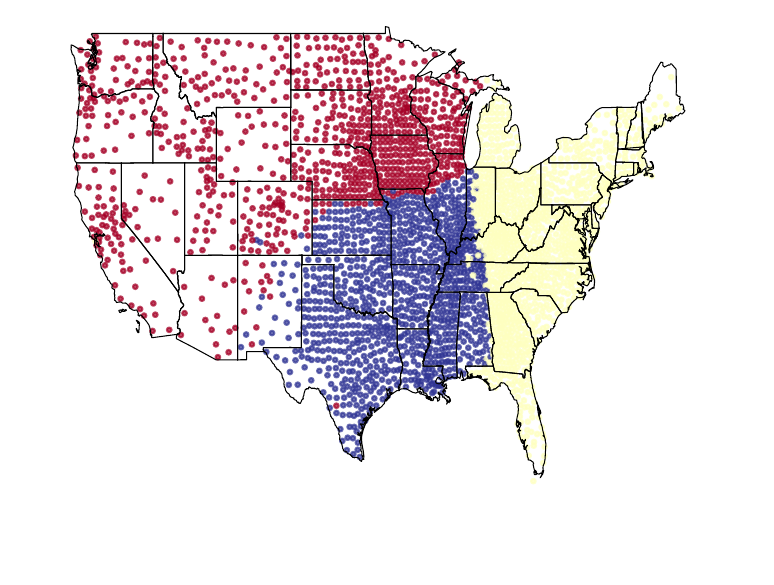}
        \vspace{-24pt}
        \caption{\texttt{DI-SIM(R)}}
    \end{subfigure}
    \begin{subfigure}{0.24\textwidth}
        \includegraphics[width = \textwidth]{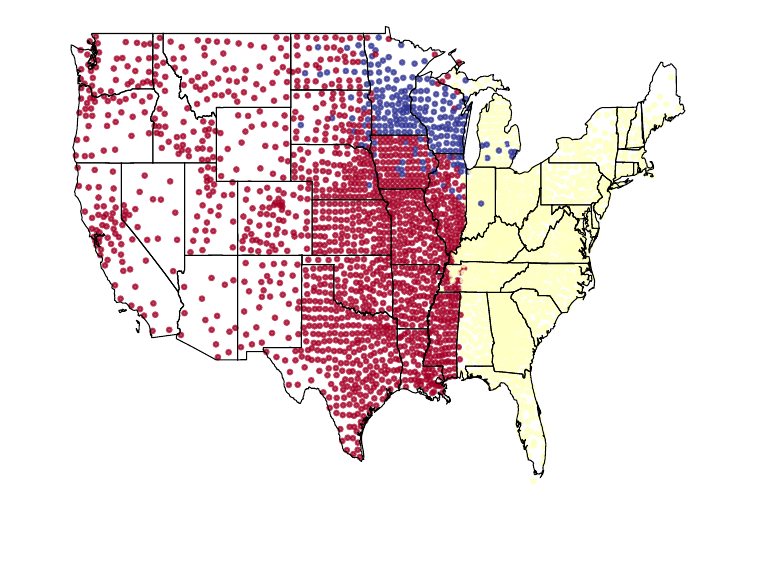}
        \vspace{-24pt}
        \caption{\texttt{Sym}}
    \end{subfigure}
    \begin{subfigure}{0.24\textwidth}
        \includegraphics[width = \textwidth]{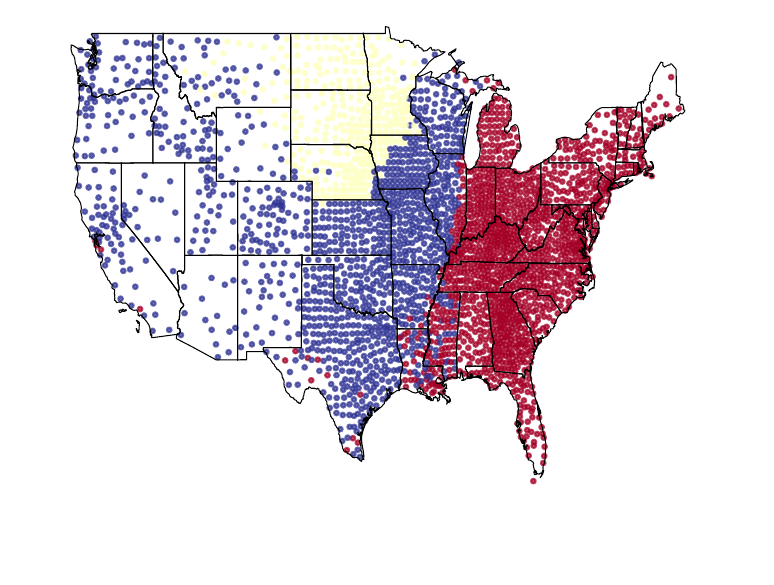}
        \vspace{-24pt}
        \caption{\texttt{SimpHerm}}
    \end{subfigure}
    \begin{subfigure}{0.24\textwidth}
        \includegraphics[width = \textwidth]{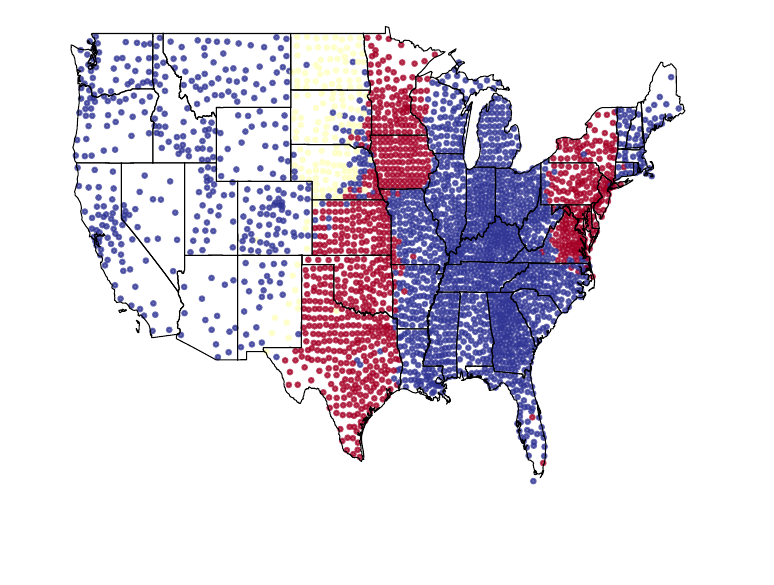}
        \vspace{-24pt}
        \caption{\texttt{D-Score}}
    \end{subfigure}
    \caption{US Counties clustered by migration data ($k=3$).}
    \vspace{-12pt}
    \label{fig:migration}
\end{figure}

\section{Discussion}
\label{sec:discussion}

This paper introduces a model-based methodology for clustering directed graphs, addressing the limitations of existing spectral methods that are largely heuristic, in the sense that they do not directly optimize a task-relevant objective function.
On the theoretical side, we derive a new asymptotic error bound for spectral methods under the DSBM, improving upon the existing bound from \cite{cucuringu-2020-hermitian} for the two-community case.
Our model-based framework enables a systematic understanding of how spectral clustering methods can recover communities by exploring structural heterogeneity induced by the community membership. On the practical side, we propose a fast and self-adaptive spectral algorithm that demonstrates superior clustering performance on a variety of synthetic and real-world graphs.

Despite these contributions, several limitations remain. For example, our model does not fully capture real-world complexities such as degree heterogeneity, which may limit the applicability of our algorithm in some settings. Extending the framework to more realistic models, such as degree-corrected DSBMs, is a promising direction for future work. 
Furthermore, our statistical inference formulation lends itself naturally to a Bayesian extension, allowing the incorporation of prior knowledge about community assignments when such side information is available.

\bibliography{ref}

\newpage
\appendix
\section{Summary on notations}
\label{appx_sumNot}

\begin{longtable}{ c  l } 
\toprule
Notation & Definition\\ \midrule\midrule
$G(\mathcal{V},\mathcal{E})$ & Graph with vertex set $\mathcal{V}$ and edge set $\mathcal{E}$ \\
$u\rightsquigarrow v$ & There is an edge pointing from vertex $u$ to vertex $v$\\ 
$u \not\sim v$ & There is no edge between vertex $u$ and vertex $v$ \\
$A$ & Graph adjacency matrix, $A\in \{0,1\}^{n\times n}$ and $A_{uv} =1 $ iff $u\rightsquigarrow v$ \\ 
$\ca$ & Source community\\
$\cb$ & Target community\\
$\mathcal{V}$ & Set of all vertices, $\mathcal{V} = \ca \cup \cb$\\
$c(\cdot)$ & Community labelling function, $c(\cdot): \mathcal{V} \rightarrow \{1,2\}$ where $c(u) = 1$ iff $u \in \ca$ \\
$\Hmle$ & Hermitian matrix derived from MLE on DSBM (see \eqref{eq:H-mle})\\ 
$\sigma$ & A general community indicator vector, $\sigma_u = \sigma_v$ iff $c(u) = c(v)$\\
$\mathcal{L}(A;\sigma)$ & Log-likelihood function \\
$\TF(\ca,\cb)$ & Total flow between $\ca$ and $\cb$, $\displaystyle \TF(\ca,\cb)=\sum_{\substack{ u \in \ca v \in \cb}} (A_{uv} + A_{vu})$\\
$\NF (\ca,\cb)$ & {Net flow} from $\ca$ to $\cb$, $\displaystyle \NF (\ca,\cb) = \sum_{\substack{ u \in \ca v \in \cb}} (A_{uv} - A_{vu})$ \\ 
$|\ca \rightarrow \cb|$ & Number of edges from $\ca$ to $\cb$\\
$H^T$ & Transpose of $H$\\
$H^*$ & Conjugate transpose of $H$\\
$H_{j*}$ & The $j$-th row vector of $H$\\
$[H_1,H_2]$ & Concatenating columns of $H_1$ and $H_2$\\
$\|H\|$ & Spectral norm of $H$, $\|H\| = |\lambda_1(H)|$\\
$\|H\|_F$  &Frobenius norm of $H$, $\|H\|_F = \sqrt{\sum_{j} \lambda_j^2(H)}$\\
$\langle H_1,H_2 \rangle$ & Frobenius inner product, $\langle H_1,H_2 \rangle = \tr(H_1^* H_2)$  \\ 
$\diag(H)$ & Create a diagonal matrix by taking the main diagonal elements of $H$\\
$\Re(H)$ & Take the real part of the matrix $H$;\\
$\Im(H)$ & Take the imaginary part of the matrix $H$\\
$\bold{v_j}(H)$ & The $j$-th eigenvector of $H$\\
ARI & Adjusted Rand Index\\
$M \in \{0,1\}^{N\times2}$ & Membership matrix\\
$\mathbb{1}(\cdot)$ & Indicator function, $\mathbb{1}(p) = 1$ if claim $p$ is true, otherwise $\mathbb{1}(p) = 0$\\
$\mathbb{1}_{\ca}$ & Binary indicator vector for community $\ca$, $\mathbb{1}_u = 1$ if $u\in \ca$  otherwise $\mathbb{1}_u = 0$\\
$\mathbb{1}_{\cb}$ & Binary indicator vector for community $\cb$, $\mathbb{1}_u = 1$ if $u\in \cb$  otherwise $\mathbb{1}_u = 0$\\
$g_n = {o}(f_n)$ &  $g_n$ is asymptotically dominated by $f_n$, i.e., $\displaystyle  \lim_{n \rightarrow \infty} \frac{g_n}{f_n} =0$\\
$g_n = {O}(f_n)$ & $g_n$ is asymptotically bounded above by $f_n$, i.e., $\displaystyle  \limsup_{n \rightarrow \infty} \frac{g_n}{f_n} < \infty$\\
$g_n = \Theta(f_n)$ & $\displaystyle  \limsup_{n \rightarrow \infty} \frac{g_n}{f_n} < \infty$ and $\displaystyle  \liminf_{n \rightarrow \infty} \frac{g_n}{f_n} >0$\\
$g_n = \Omega(f_n)$ & $g_n$ bounded below by $f_n$ asymptotically, i.e., $\displaystyle  \limsup_{n \rightarrow \infty} \frac{g_n}{f_n} >0$\\
$g_n = \omega(f_n)$ & $g_n$ dominate $f_n$ asymptotically, i.e., $\displaystyle  \limsup_{n \rightarrow \infty} \frac{g_n}{f_n} = \infty$\\
$\prob(\cdot)$ & Probability measure\\
$\E[\cdot]$ & Expectation \\
\bottomrule
\caption{Summary on notations} 
\label{tab:sumNot}
\end{longtable}

\section{Proof of MLE}
\label{pf-thm:MLE}

{We detail the derivation of the} optimization problem \eqref{opt:MLE} from the maximum likelihood estimator. To start with, we explicitly express the likelihood function as a matrix, which simply relies on subdividing the likelihood function according to which community an edge belongs to.

\begin{restatable}{lemmma}{MLEonDSBM}
\label{lemma:MLE}
Consider a directed graph with adjacency matrix $A$ sampled from the model DSBM$(n_1,n_2,p,q,\eta)$. Then,  applying the maximum likelihood estimation is equivalent to solving the following 
{combinatorial} optimization problem
    \begin{align*}
        \max \quad & \frac{1}{2} \left\langle M_{intra}, \mathbb{1}_{\ca} \mathbb{1}_{\ca}^T + \mathbb{1}_{\cb} \mathbb{1}_{\cb}^T\right\rangle
        + \left\langle M_{inter}, \mathbb{1}_{\ca} \mathbb{1}_{\cb} ^T\right\rangle \tag{MLE}\label{opt:MLE} \\
        s.t. \quad &\mathbb{1}_{\ca} \in \{0,1\}^N\\
        &\mathbb{1}_{\ca} + \mathbb{1}_{\cb} = \mathbb{1}
    \end{align*}
    where $\mathbb{1}_{\ca}, \mathbb{1}_{\cb} \in \{0,1\}^N$ are the indicator vectors for cluster $\ca$ and $\cb$ separately, and
    \begin{align*}
        M_{intra} &= \log{(1/2p)}(A+A^T) + \log{(1-p)} (J - I-A-A^T), \\
        M_{inter} &= \log{(q(1-\eta))} A + \log{(\eta q)} A^T + \log(1-q) (J -I -A-A^T), 
    \end{align*}
    are derived from the log-likelihood functions for intra-community and inter-community edges.
\end{restatable}
\begin{proof}
    Let $A$ be the adjacency matrix of a directed graph generated from DSBM$(n_1,n_2,p,q,\eta)$. For a particular 
    {clusterization of}
    the graph, we use $c$ to denote its community labeling function $c: \mathcal{V} \rightarrow \{\ca, \cb\}$, and we  use the vectors $\mathbb{1}_{\ca}, \mathbb{1}_{\cb} \in \{0,1\}^N$ to indicate community  $\ca$ and $\cb$ separately, where $\mathbb{1}_{\ca}+ \mathbb{1}_{\cb} = \mathbb{1}$. The log likelihood function of $A$ given  $\mathbb{1}_{\ca}$ and $ \mathbb{1}_{\cb}$ can be decomposed as follows 
    \begin{align}          
          \mathcal{L} (A;\sigma) &= \log{\prob(A|\mathbb{1}_{\ca},\mathbb{1}_{\cb})} = \sum_{u<v} \log{\prob(A_{uv}|c(u), c(v))}\nonumber\\
          \label{eq-MLE:1}
          & = \sum_{\substack{u<v\\c(u)=c(v)}} \log{\prob(A_{uv}|c(u), c(v))} + \sum_{\substack{u<v\\c(u)=\ca, c(v)=\cb}} \log{\prob(A_{uv}|c(u), c(v) )},
    \end{align}
    where the first term in \eqref{eq-MLE:1} is only summing over intra-community pair, and the second term 
    {handles} 
    the inter-community pair.

    For an intra-community vertex pair $u,v$, the log-likelihood function is
    \begin{align*}
        \log{\prob(A_{uv}|c(u)=c(v))} = 
        \begin{cases}
            \log{(1/2 p)} \quad &\text{if } u\rightsquigarrow v,\\
            \log{(1/2 p)} \quad &\text{if } v\rightsquigarrow u,\\
            \log{(1-p)} \quad &\text{if } u\not\sim v.
        \end{cases}
    \end{align*}
    This intra-community log-likelihood function coincides with the matrix 
    \begin{align}
    \label{eq:Mintra}
        M_{intra} \triangleq \log{(1/2p)}(A+A^T) + \log{(1-p)} (J-I-A-A^T), 
    \end{align}
    on the entries that represent intra-community pairs, thus allowing us to convert the intra-community summation in \eqref{eq-MLE:1} into the following matrix multiplication form
    \begin{align}
        \label{eq-MLE:2}
        \sum_{\substack{u<v\\c(u)=c(v)}} \log{\prob(A_{uv}|c(u), c(v))}  =  \frac{1}{2} \left\langle M_{intra}, \mathbb{1}_{\ca} \mathbb{1}_{\ca}^T + \mathbb{1}_{\cb} \mathbb{1}_{\cb}^T\right\rangle.
    \end{align}

    For an inter-community vertex $u \in \ca, v\in \cb$, the log-likelihood function is
    \begin{align*}
        \log{\prob(A_{uv}|c(u)=\ca, c(v)=\cb)} = 
        \begin{cases}
            \log{((1-\eta) q)} \quad &\text{if } u\rightsquigarrow v,\\
            \log{(\eta q)} \quad &\text{if } v\rightsquigarrow u,\\
            \log{(1-q)} \quad &\text{if } u\not\sim v.
        \end{cases}
    \end{align*}
    Similar to 
    {the approach  followed} 
    for the intra-community case, we convert the inter-community summation in \eqref{eq-MLE:1} into the  matrix multiplication form
    \begin{align}
        \label{eq-MLE:3}
        \sum_{\substack{u<v\\c(u)=\ca, c(v)=\cb}} \log{\prob(A_{uv}|c(u), c(v))}  = \left\langle M_{inter}, \mathbb{1}_{\ca} \mathbb{1}_{\cb}^T\right\rangle.
    \end{align}
    where
    \begin{align}
    \label{eq:Minter}
        M_{inter} \triangleq \log{((1-\eta)q)}A + \log{(\eta q)} A^T + \log{(1-q)} (J-I-A-A^T).
    \end{align}
   
    Combining \eqref{eq-MLE:2}, \eqref{eq-MLE:3} and \eqref{eq-MLE:1}, we have 
    \begin{align}            
        \log{\prob(A|\mathbb{1}_{\ca},\mathbb{1}_{\cb})}= \frac{1}{2} \left\langle M_{intra}, \mathbb{1}_{\ca} \mathbb{1}_{\ca}^T + \mathbb{1}_{\cb} \mathbb{1}_{\cb}^T\right\rangle
          + \left\langle M_{inter}, \mathbb{1}_{\ca} \mathbb{1}_{\cb} ^T\right\rangle.
    \end{align}
\end{proof}

To 
{arrive at a more compact expression for} the optimization formulation, we introduce an equivalent Hermitian matrix optimization framework. The transformation from the real-valued matrix optimization to the Hermitian optimization builds on the following observation.

\begin{restatable}{lemma}{HermQuad}
\label{lemma:quad_form}
    Consider an arbitrary Hermitian matrix $H = \Re(H) + \iu \Im(H)$, where $\Re(H) \in \R^{n\times n}$ with all $0$ diagonal entries is symmetric, and  $\Im(H) \in \R^{n\times n}$ is skew-symmetric. Let $\bold{x}\in \{\iu ,1\}^n$ be the complex community indicator vector, where $\bold{x}_u = \iu$ for $u \in \ca$. Then, the quadratic form $\bold{x}^*H\bold{x}$ is the sum of entries in $\Re(H)$ that are in the same community, plus the sum of entries in $\Im(H)$ that belong  to different communities, i.e., 
    \begin{align*}
        \bold{x}^*H\bold{x} = \hspace{-12pt}\sum_{\substack{u,v \in \ca\\ \text{or } u,v  \in \cb}} \hspace{-12pt}\Re(H)_{uv} 
        + \hspace{-18pt}\sum_{\substack{u \in \ca v \in \cb\\ \text{or } u\in \cb, v \in \ca}} \hspace{-18pt}\Im(H)_{uv} = 2 \hspace{-12pt}\sum_{\substack{u<v\\u,v \in \ca\\ \text{or } u,v \in \cb}} \hspace{-12pt}\Re(H)_{uv} 
        + 2 \hspace{-18pt}\sum_{\substack{u<v,\\ u \in \ca v \in \cb\\ \text{or } u\in \cb, v \in \ca}} \hspace{-18pt}\Im(H)_{uv}.
    \end{align*}
    In other words,
    \begin{align}
    \label{eq:obs}
        \bold{x^*} H \bold{x} = \langle\Re(H),\mathbb{1}_{\ca} \mathbb{1}_{\ca}^T + \mathbb{1}_{\cb} \mathbb{1}_{\cb}^T \rangle 
        + 2 \langle \Im(H), \mathbb{1}_{\ca} \mathbb{1}_{\cb}^T \rangle. 
    \end{align}
\end{restatable}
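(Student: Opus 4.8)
The plan is to expand the Hermitian quadratic form directly as a double sum over ordered index pairs, $\bold{x}^* H \bold{x} = \sum_{u,v} \overline{\bold{x}_u} H_{uv} \bold{x}_v$, and to evaluate it block by block according to the community membership of $u$ and $v$. First I would dispose of the diagonal: since $\Re(H)$ has zero diagonal by hypothesis and $\Im(H)$ is skew-symmetric (hence also zero on the diagonal), every term with $u=v$ vanishes, so only off-diagonal pairs contribute. I then partition the off-diagonal pairs into the four blocks $\{u,v\in\ca\}$, $\{u,v\in\cb\}$, $\{u\in\ca,v\in\cb\}$, and $\{u\in\cb,v\in\ca\}$.

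The key computation is the phase factor $\overline{\bold{x}_u}\bold{x}_v$ in each block. Using $\bold{x}_u=\iu$ on $\ca$ and $\bold{x}_u=1$ on $\cb$, one gets $\overline{\bold{x}_u}\bold{x}_v = 1$ on the two same-community blocks, $\overline{\bold{x}_u}\bold{x}_v = \overline{\iu}\cdot 1 = -\iu$ on the $(\ca,\cb)$ block, and $\overline{\bold{x}_u}\bold{x}_v = \overline{1}\cdot\iu = \iu$ on the $(\cb,\ca)$ block. Substituting $H_{uv} = \Re(H)_{uv} + \iu\,\Im(H)_{uv}$ into each block then reduces the whole quadratic form to a sum of real and imaginary parts of $H$ weighted by these constants.

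It remains to collect terms. On the same-community blocks the factor is $1$, so each contributes $\Re(H)_{uv} + \iu\,\Im(H)_{uv}$; summing over a community-symmetric index set, the skew-symmetry of $\Im(H)$ cancels the imaginary parts in pairs, leaving $\sum_{u,v\ \mathrm{same}}\Re(H)_{uv} = \langle \Re(H),\mathbb{1}_{\ca}\mathbb{1}_{\ca}^T + \mathbb{1}_{\cb}\mathbb{1}_{\cb}^T\rangle$. On the inter-community blocks, the $(\ca,\cb)$ block contributes $-\iu\,\Re(H)_{uv} + \Im(H)_{uv}$ and the $(\cb,\ca)$ block contributes $\iu\,\Re(H)_{uv} - \Im(H)_{uv}$; here the symmetry of $\Re(H)$ makes the two $\pm\iu\,\Re(H)$ sums cancel (confirming that $\bold{x}^*H\bold{x}$ is real, as it must be for a Hermitian form), while the skew-symmetry of $\Im(H)$ turns the $-\Im(H)_{uv}$ from the $(\cb,\ca)$ block into a second copy of $+\Im(H)_{uv}$ over $\{u\in\ca,v\in\cb\}$. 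The two orientations therefore reinforce, producing $2\sum_{u\in\ca,v\in\cb}\Im(H)_{uv} = 2\langle\Im(H),\mathbb{1}_{\ca}\mathbb{1}_{\cb}^T\rangle$. Adding the two contributions yields exactly \eqref{eq:obs}, and the entrywise expressions in the statement then follow by reading each Frobenius inner product as an entry sum and using the symmetry of $\Re(H)$ and the skew-symmetry of $\Im(H)$ to pass between ordered sums and the $u<v$ sums.

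I expect the only real obstacle to be the sign bookkeeping on the inter-community blocks: one must check that the phase factor $\mp\iu$ together with the skew-symmetry of $\Im(H)$ causes the two orientations to \emph{add} (giving the factor $2$) rather than cancel, and dually that for same-community pairs the imaginary parts cancel while the real parts add. This is purely careful casework with no analytic content; once the four phase factors are computed correctly, the identity drops out.
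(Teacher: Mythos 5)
Your proof is correct, and it is essentially the only proof there is: the paper states this lemma as a bare observation and never writes a proof, so your direct block-by-block expansion with the four phase factors $1,\,1,\,-\iu,\,\iu$ supplies exactly the verification the paper omits, and it establishes \eqref{eq:obs}, which is the form actually invoked later in the proof of Theorem~\ref{thm:MLE}. One caveat about your final step ("the entrywise expressions then follow"): your own computation shows that the $\Im(H)$ contributions from the $(\ca,\cb)$ and $(\cb,\ca)$ blocks reinforce only after re-indexing via skew-symmetry into $2\sum_{u\in\ca,\,v\in\cb}\Im(H)_{uv}$; read literally, the paper's middle expression, which sums $\Im(H)_{uv}$ over \emph{all} ordered cross-community pairs, would vanish by that same skew-symmetry, and the rightmost $u<v$ form matches \eqref{eq:obs} only under the implicit convention that vertices of $\ca$ are indexed before those of $\cb$. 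So treat the first display of the lemma as loosely worded and regard \eqref{eq:obs} as the statement you have proved — which is the one that matters downstream.
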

With the real-valued MLE optimization formula derived in Lemma~\ref{lemma:MLE} and the observation made in Lemma~\ref{lemma:quad_form}, we are ready to prove the Hermitian optimization formulation of the MLE on DSBM in Theorem~\ref{thm:MLE}.
\ThmHermMLE*
\begin{proof}
    From Lemma~\ref{lemma:MLE}, we derive the log-likelihood 
    \begin{align}            
    \label{eq:ll}
        \log{\prob(A|\mathbb{1}_{\ca},\mathbb{1}_{\cb})}= \frac{1}{2} \left\langle M_{intra}, \mathbb{1}_{\ca} \mathbb{1}_{\ca}^T + \mathbb{1}_{\cb} \mathbb{1}_{\cb}^T\right\rangle
          + \left\langle M_{inter}, \mathbb{1}_{\ca} \mathbb{1}_{\cb} ^T\right\rangle.
    \end{align}
    Compared this equation with the observation \eqref{eq:obs} in Lemma~\ref{lemma:quad_form}, we need the matrix corresponding to the intra-cluster log-likelihood matrix to be a symmetric matrix and need the inter-cluster log-likelihood matrix to be a skew-symmetric matrix, so that we can directly apply \eqref{eq:obs} to convert the real-valued objective into a more compact complex-valued expression. 

    From the definition in \eqref{eq:Mintra} and \eqref{eq:Minter}, we have that $\llin = \llin ^T$, however the inter-cluster log-likelihood matrix $\llout$ is not slew-symmetric. Note that we can be decomposed  the inter-cluster log-likelihood matrix $\llout$ into a symmetric matrix plus a skew-symmetric  matrix as follows
    \begin{align*}
        \llout = \frac{1}{2}(\llout + \llout^T) + \frac{1}{2} (\llout-\llout^T),
    \end{align*}
    where $\llout + \llout^T$ is symmetric and $\llout-\llout^T$ is skew-symmetric.
    Correspondingly, we have
    \begin{align}
    \label{eq:thm_01}
        \left\langle M_{inter}, \mathbb{1}_{\ca} \mathbb{1}_{\cb} ^T\right\rangle = 
        \frac{1}{2}\langle \llout + \llout^T, \mathbb{1}_{\ca} \mathbb{1}_{\cb} ^T \rangle 
        + \frac{1}{2} \langle \llout-\llout^T, \mathbb{1}_{\ca} \mathbb{1}_{\cb} ^T\rangle,
    \end{align}
    where the first term sums over the inter-cluster entries of a symmetric matrix and the second term sums over the inter-cluster entries of a skew-symmetric matrix.  Due to the symmetry in the first term of \eqref{eq:thm_01}, we can further write it as
    \begin{align}
    \label{eq:thm_02}
        \frac{1}{2}\langle \llout + \llout^T, \mathbb{1}_{\ca} \mathbb{1}_{\cb} ^T \rangle  = \frac{1}{4}\langle \llout + \llout^T, J - \mathbb{1}_{\ca} \mathbb{1}_{\ca} ^T -  \mathbb{1}_{\cb} \mathbb{1}_{\cb} ^T \rangle 
    \end{align}
    Combining \eqref{eq:thm_01}, \eqref{eq:thm_02} and \eqref{eq:ll}, we can rewrite the log-likelihood objective as
    \begin{align*}
    &\log{\prob(A|\mathbb{1}_{\ca},\mathbb{1}_{\cb})} = \frac{1}{2} \left\langle M_{intra}, \mathbb{1}_{\ca} \mathbb{1}_{\ca}^T + \mathbb{1}_{\cb} \mathbb{1}_{\cb}^T\right\rangle
          + \left\langle M_{inter}, \mathbb{1}_{\ca} \mathbb{1}_{\cb} ^T\right\rangle\\
        &= \frac{1}{2} \left\langle M_{intra}, \mathbb{1}_{\ca} \mathbb{1}_{\ca}^T + \mathbb{1}_{\cb} \mathbb{1}_{\cb}^T\right\rangle + 
         \frac{1}{2}\langle \llout + \llout^T, \mathbb{1}_{\ca} \mathbb{1}_{\cb} ^T \rangle 
        + \frac{1}{2} \langle \llout-\llout^T, \mathbb{1}_{\ca} \mathbb{1}_{\cb} ^T\rangle\\
        &=\frac{1}{2} \left\langle M_{intra}, \mathbb{1}_{\ca} \mathbb{1}_{\ca}^T + \mathbb{1}_{\cb} \mathbb{1}_{\cb}^T\right\rangle +\frac{1}{4}\langle \llout + \llout^T, J - \mathbb{1}_{\ca} \mathbb{1}_{\ca} ^T -  \mathbb{1}_{\cb} \mathbb{1}_{\cb} ^T \rangle  \\
        & \quad + \frac{1}{2} \langle \llout-\llout^T, \mathbb{1}_{\ca} \mathbb{1}_{\cb} ^T\rangle
    \end{align*}
    Because the term $\langle \llout+\llout^T, J \rangle$ is always a constant and resealing the objective function by a constant factor $4$ does not affect the optimal solution, therefore solving the \eqref{opt:MLE} in Lemma~\ref{lemma:MLE} is equivalent to solve the following
    \begin{align*}
        \max \quad &  \left\langle 2M_{intra}- (\llout+\llout^T), \mathbb{1}_{\ca} \mathbb{1}_{\ca}^T + \mathbb{1}_{\cb} \mathbb{1}_{\cb}^T\right\rangle
        + 2\left\langle M_{inter} - \llout^T, \mathbb{1}_{\ca} \mathbb{1}_{\cb} ^T\right\rangle  \\
        s.t. \quad &\mathbb{1}_{\ca} \in \{0,1\}^N\\
        &\mathbb{1}_{\ca} + \mathbb{1}_{\cb} = \mathbb{1}
    \end{align*}
Using \eqref{eq:obs} from Lemma~\ref{lemma:quad_form}, we convert the above real-valued optimization problem to the following complex-valued equivalence
\begin{align*}
        \max \quad &  \bold{x}^*\Hmle \bold{x}  \\
        s.t. \quad &\bold{x} \in \{\iu,1\}^N\\
    \end{align*}
where the Hermitian matrix $\Hmle$ has
\begin{align*}
    \Re(\Hmle) &=  2M_{intra}- (\llout+\llout^T)\\
    &=  \log{\frac{p^2}{4\eta(1-\eta)q^2}}(A+A^T) + 2\log{\frac{1-p}{1-q}} (J -I -A- A^T)\\
    & = \log\left( \frac{p^2(1-q)^2}{4\eta(1-\eta)q^2(1-p)^2}\right) (A+A^T) + 2\log\left( \frac{1-p}{1-q}\right)(J-I),\\
    \Im(\Hmle) & = \ M_{inter} - \llout^T\\
    & = \log\left(\frac{1-\eta}{\eta} \right) (A-A^T).
\end{align*}
\end{proof}
\section{Proofs in perturbation analysis}
\label{appx_perturbation} 

\subsection{Proof overview}
The intuition behind the success of spectral relaxation clustering is that the expected Hermitian matrix $\E[\Hmle]$ has community-dependent structure, and its leading eigenvector $\bold{v}^* = \bold{v}_1(\E[\Hmle])$ exactly encodes the true community labels through two distinct values. In practice, we observe the empirical matrix $\Hmle$, which serves as a perturbed version of $\E[\Hmle]$. Classical matrix perturbation theory ensures that if $\Hmle$ is sufficiently close to $\E[\Hmle]$, the leading eigenvector $\topv = \bold{v}_1(\Hmle)$ remains informative and enables accurate recovery of the community structure. Our proof is structured as follows:
\begin{enumerate}[leftmargin = 18pt]
    \item[(i)] 
    Using matrix perturbation theory, we first bound the eigenvector perturbation $ \|{\topvE}{\topvE}^* - \topv \topv^*\|_F$;
    \item[(ii)] 
    Using the Matrix-Bernstein inequality from random matrix theory, we provide a high-probability upper bound on the random perturbation $\| \Hmle - \E[\Hmle]\|$;  
    \item[(iii)] 
    Combining the results from the above two steps, we perform an error analysis on the $k$-means clustering step, and derive the final spectral clustering error bound.
\end{enumerate}

We first characterize the eigenspace properties of $\E[\Hmle]$ in the following Lemma~\ref{lemma:EH_eig}.

\begin{restatable}{lemma}{LemmaEHeig}
    \label{lemma:EH_eig}
For the DSBM$(n_1,n_2,p,q,\eta)$, the population matrix $\E[\Hmle]$ has a unique largest eigenvalue. The top eigenvector $\topvE$ has exactly two distinct values that indicate the community labels, where the distance $d$ between them can be easily computed using \eqref{eq:centroids}.
Moreover, the eigengap is
\begin{align}
    |\lambda_1(\E[\Hmle]) -\lambda_2(\E[\Hmle])| =  \min \{2\Delta, |1/2 N(w_rp+w_c)| + \Delta\} \geq  \Delta, 
\end{align}
where 
\begin{align}
    \label{eq:eigengap}
    \Delta  = 1/2\sqrt{N^2(w_rp+w_c)^2 -4n_1n_2\left((w_rp+w_c)^2 - |w_r+w_c+\iu w_i(1-2\eta)q|^2\right)}.
\end{align}
\end{restatable}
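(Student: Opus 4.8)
The plan is to compute $\E[\Hmle]$ in closed form and exploit its low-rank block structure. Taking entrywise expectations under the DSBM gives, for $u\neq v$, $\E[(A+A^T)_{uv}] = p$ within a community and $q$ across communities, $\E[(A-A^T)_{uv}] = 0$ within a community and $\pm(1-2\eta)q$ across (the sign depending on orientation), while $\E[(J-I)_{uv}]=1$ and the diagonal is zero. Hence $\E[\Hmle]$ is the Hermitian two-block matrix with zero diagonal, intra-community off-diagonal entries $a := w_r p + w_c$, entries $b := w_r q + w_c + \iu w_i(1-2\eta)q$ on the $\ca\to\cb$ block, and $\bar b$ on the $\cb\to\ca$ block. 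Writing this as $\E[\Hmle] = B - aI_N$ with
$$B = a\,\mathbb{1}_{\ca}\mathbb{1}_{\ca}^T + a\,\mathbb{1}_{\cb}\mathbb{1}_{\cb}^T + b\,\mathbb{1}_{\ca}\mathbb{1}_{\cb}^T + \bar b\,\mathbb{1}_{\cb}\mathbb{1}_{\ca}^T$$
exposes that $B$ has rank at most two, with both row and column space equal to $\mathrm{span}\{\mathbb{1}_{\ca},\mathbb{1}_{\cb}\}$, so all of its structure is captured by a $2\times 2$ reduction.

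Next I would reduce the eigenproblem. Any eigenvector of $B$ with nonzero eigenvalue lies in $\mathrm{span}\{\mathbb{1}_{\ca},\mathbb{1}_{\cb}\}$, so writing it as $\alpha\mathbb{1}_{\ca}+\beta\mathbb{1}_{\cb}$ and using $\mathbb{1}_{\ca}^T\mathbb{1}_{\ca}=n_1$, $\mathbb{1}_{\cb}^T\mathbb{1}_{\cb}=n_2$, $\mathbb{1}_{\ca}^T\mathbb{1}_{\cb}=0$ collapses the problem to the $2\times 2$ eigenproblem for $\tilde B = \bigl(\begin{smallmatrix} a n_1 & b n_2 \\ \bar b n_1 & a n_2\end{smallmatrix}\bigr)$. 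Solving its characteristic equation $\lambda^2 - aN\lambda + (a^2-|b|^2)n_1n_2 = 0$ yields $\mu_\pm = \tfrac{aN}{2}\pm\Delta$ with $\Delta$ as in \eqref{eq:eigengap}; here the discriminant simplifies cleanly to $a^2(n_1-n_2)^2 + 4n_1n_2|b|^2\ge 0$. Consequently $\E[\Hmle]=B-aI_N$ has exactly the three eigenvalues $\lambda_A = \mu_+ - a$, $\lambda_B = \mu_- - a$, and $\lambda_C=-a$ of multiplicity $N-2$ (on the orthogonal complement of $\mathrm{span}\{\mathbb{1}_{\ca},\mathbb{1}_{\cb}\}$).

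To finish I would verify that $\lambda_A$ is the unique largest-magnitude eigenvalue and read off its eigenvector. The simplified discriminant shows $\Delta>0$ away from degenerate parameter choices, so $\lambda_A\neq\lambda_B$; a direct magnitude comparison of $\lambda_A$, $\lambda_B$, $\lambda_C=-a$ (using $\Delta\le|\tfrac{aN}{2}|$ when $|b|\le|a|$, and treating $|b|>|a|$ separately) then shows $|\lambda_A|$ strictly exceeds the other two. Its eigenvector is $\topvE = \alpha\mathbb{1}_{\ca}+\beta\mathbb{1}_{\cb}$ with $(\alpha,\beta)$ the top eigenvector of $\tilde B$, hence constant on each community and taking exactly two distinct values; normalizing $\|\topvE\|^2=N$ and setting $d=|\alpha-\beta|$ gives the centroid distance in \eqref{eq:centroids}. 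For the gap, the two nontrivial distances from $\lambda_1=\lambda_A$ to the rest of the spectrum are $|\lambda_A-\lambda_B| = 2\Delta$ and $|\lambda_A-\lambda_C| = |\tfrac12 N(w_r p+w_c)|+\Delta$, so the eigengap equals their minimum; since each candidate is at least $\Delta$, the bound $|\lambda_1-\lambda_2|\ge\Delta$ follows.

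The main obstacle I anticipate is the magnitude bookkeeping in the final step: the indices in $\lambda_1,\lambda_2$ are assigned by $|\cdot|$, so I must track the sign of $a=w_r p + w_c$ and the comparison of $|b|$ with $|a|$ over the entire parameter range to be certain that $\lambda_A$ is always the leading eigenvalue and that the second-largest magnitude is the one realizing the minimum in the gap formula. The remaining ingredients — the entrywise expectation and the $2\times 2$ reduction — are routine linear algebra.
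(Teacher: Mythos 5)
Your computational core is the same as the paper's proof and is correct: the paper also writes $\E[\Hmle]$ as a rank-two matrix plus a multiple of the identity, namely $\E[\Hmle] = MQM^T - (w_rp+w_c)I$ with $M = [\mathbb{1}_{\ca},\mathbb{1}_{\cb}]$, which is exactly your $B - aI_N$; it then reduces to a $2\times2$ eigenproblem, solves the same characteristic equation, and obtains the same three eigenvalues $\mu_\pm - a$ (with $\mu_\pm = aN/2 \pm \Delta$) and $-a$ of multiplicity $N-2$. The only cosmetic difference is that the paper orthonormalizes the membership matrix and diagonalizes the Hermitian matrix $DQD$, $D = \diag(\sqrt{n_1},\sqrt{n_2})$, whereas you work with the coefficient matrix $\tilde B = QD^2$, which is similar to $DQD$ (conjugation by $D$) and hence has the same spectrum; your simplification of the discriminant to $a^2(n_1-n_2)^2 + 4n_1n_2|b|^2$ is a nice touch the paper does not spell out. (A minor point: you normalize $\|\topvE\|_2^2 = N$, so your $d = |\alpha - \beta|$ differs from \eqref{eq:centroids}, which uses a unit eigenvector, by a factor of $\sqrt{N}$.)

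The gap is exactly where you predicted it, and your proposed resolution is wrong. Since the paper orders eigenvalues by magnitude, the leading eigenvalue is \emph{not} always $\lambda_A = \mu_+ - a$: when $a = w_rp + w_c < 0$, the eigenvalue $\lambda_B = \mu_- - a = a(N-2)/2 - \Delta$ is negative with magnitude $|a|(N-2)/2 + \Delta$, which strictly exceeds $|\lambda_A|$, so $\lambda_B$ is the leading eigenvalue, contradicting your claim that ``$|\lambda_A|$ strictly exceeds the other two.'' For the same reason, your identity $|\lambda_A-\lambda_C| = |\tfrac12 Na| + \Delta$ rests on $|aN/2+\Delta| = |aN/2|+\Delta$, which holds only for $a \geq 0$. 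The paper handles this with a case split: the top eigenvalue is $\mu_+ - a$ when $N(w_rp+w_c)\geq 0$ and $\mu_- - a$ otherwise; in either case the top eigenvector still has the form $\alpha\mathbb{1}_{\ca}+\beta\mathbb{1}_{\cb}$ (two distinct values), and the distance from the top eigenvalue to the rest of the spectrum is $\min\{2\Delta,\,|Na/2|+\Delta\}$ by the same algebra, so all conclusions of the lemma are unaffected. Alternatively, your ``always $\lambda_A$'' claim could be rescued by proving that the MLE weights always satisfy $w_rp+w_c \geq 0$: indeed $w_rp+w_c = p\log\frac{1}{4\eta(1-\eta)} + 2\left(p\log\frac{p(1-q)}{q(1-p)} + \log\frac{1-p}{1-q}\right)$, where the first term is nonnegative and the second, viewed as a function of $p$ with $q$ fixed, is minimized with value $0$ at $p=q$; but this argument appears in neither your write-up nor the paper, so as written the sign bookkeeping remains an unproved step in your proposal.
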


We consider $\Hmle$ as a perturbed version of $\E[\Hmle]$, and denote $R = \Hmle - \E[\Hmle]$. The perturbation on the eigenspace and eigenvalues is characterized by the following two well-known {results, namely the} Davis-Kahan perturbation bound (Theorem~\ref{lemma:D-K}) and Weyl's inequality (Theorem~\ref{thm:wyle}), from which we derive an upper bound on the eigenspace {misalignment} distance $ \|{\topvE}{\topvE}^* - \topv \topv^*\|_F$.
\begin{restatable}{lemma}{LemTopEigPert}
    \label{lemma:top-eig_pert}
Given a directed graph from DSBM$(n_1,n_2, p,q,\eta)$ and its Hermitian matrix representation $\Hmle$, the projection matrix of the top eigenvector is such that
    \begin{align*}
        \|{\topvE}{\topvE}^* - \topv \topv^*\|_F \leq 2\sqrt{2}\frac{\|R\|}{\lambda_1(\E[\Hmle]) - \lambda_2(\E[\Hmle])}.
    \end{align*}
\end{restatable}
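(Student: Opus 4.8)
The plan is to bound the Frobenius distance between the rank-one projectors $\topvE\topvE^*$ and $\topv\topv^*$ by combining a variant of the Davis--Kahan theorem with Weyl's inequality. The key point, as advertised in the excerpt, is to use the version of Davis--Kahan from \citep{vu-2013-minimax, yu-2015-useful} that does \emph{not} require a lower bound on the eigengap as a separate hypothesis, but instead produces the gap directly in the denominator. Since we are comparing the top eigenvector of the perturbed matrix $\Hmle$ with that of the population matrix $\E[\Hmle]$, I would set $R = \Hmle - \E[\Hmle]$ and treat $\Hmle = \E[\Hmle] + R$ as a Hermitian perturbation of the Hermitian matrix $\E[\Hmle]$.

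First I would recall that for Hermitian matrices the Davis--Kahan $\sin\Theta$ theorem controls the principal angle $\theta$ between the top eigenvectors by
\begin{align*}
    \sin\theta(\topvE, \topv) \leq \frac{\|R\|}{\delta},
\end{align*}
where $\delta$ is the relevant spectral gap separating the top eigenvalue of $\E[\Hmle]$ from the remainder of the spectrum of $\Hmle$. The useful variant (Yu--Wang--Samworth) lets us take $\delta = \lambda_1(\E[\Hmle]) - \lambda_2(\E[\Hmle])$, the \emph{population} eigengap, at the cost of only a constant factor, so no assumption relating $\|R\|$ to the gap is needed. By Lemma~\ref{lemma:EH_eig} this gap is strictly positive (indeed at least $\Delta$), so the top eigenvector of $\E[\Hmle]$ is simple and the bound is meaningful. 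The second ingredient is Weyl's inequality, $|\lambda_j(\Hmle) - \lambda_j(\E[\Hmle])| \leq \|R\|$, which is what actually justifies passing from the perturbed gap to the population gap when applying the Davis--Kahan variant.

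The final step is to convert the $\sin\Theta$ bound into a Frobenius-norm bound on the difference of projectors. For two unit vectors the standard identity gives $\|\topvE\topvE^* - \topv\topv^*\|_F = \sqrt{2}\,\sin\theta(\topvE,\topv)$, since the difference of two rank-one orthogonal projectors has exactly two nonzero singular values, each equal to $\sin\theta$. Combining this identity with the Davis--Kahan bound above yields
\begin{align*}
    \|\topvE\topvE^* - \topv\topv^*\|_F = \sqrt{2}\,\sin\theta(\topvE,\topv) \leq \sqrt{2}\cdot 2\,\frac{\|R\|}{\lambda_1(\E[\Hmle]) - \lambda_2(\E[\Hmle])} = 2\sqrt{2}\,\frac{\|R\|}{\lambda_1(\E[\Hmle]) - \lambda_2(\E[\Hmle])},
\end{align*}
where the factor of $2$ is precisely the constant from the gap-free Davis--Kahan variant. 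This matches the claimed inequality exactly.

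The main obstacle I anticipate is bookkeeping of the correct constant and verifying that the chosen Davis--Kahan formulation genuinely applies here: one must check that $\E[\Hmle]$ is Hermitian (true by construction in \eqref{eq:H-mle}), that the top eigenvalue is simple (guaranteed by Lemma~\ref{lemma:EH_eig}), and that the variant's hypotheses—which typically require the gap measured against the eigenvalues on the correct side—are satisfied when we use the population gap $\lambda_1 - \lambda_2$ rather than a perturbed gap. Provided the simple-eigenvalue structure holds, the proof is essentially a clean two-line assembly; the only subtlety is ensuring the projector-to-angle identity and the Davis--Kahan constant together reproduce the factor $2\sqrt{2}$ rather than some other constant.
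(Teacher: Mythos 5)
Your proposal is correct and lands on exactly the right constant, but it takes a different route from the paper's own proof in one substantive respect: you invoke the gap-free Davis--Kahan variant of \citet{yu-2015-useful} as a black box (with its built-in factor $2$ and the population eigengap $\lambda_1(\E[\Hmle]) - \lambda_2(\E[\Hmle])$ in the denominator), whereas the paper's appendix proof never actually uses that theorem as a citation --- despite the main text advertising it --- and instead re-derives the gap-free bound from scratch. Concretely, the paper applies the classical Davis--Kahan bound with the mixed gap $|\lambda_1(\E[\Hmle]) - \lambda_2(\Hmle)|$, uses Weyl's inequality to lower bound this by $|\lambda_1(\E[\Hmle]) - \lambda_2(\E[\Hmle])| - \|R\|$, and then removes the resulting positivity requirement by a two-case analysis: if $\|R\| \geq \tfrac{1}{2}|\lambda_1(\E[\Hmle]) - \lambda_2(\E[\Hmle])|$ one simply uses that a difference of two rank-one orthogonal projectors has spectral norm at most $1$, and otherwise the denominator is at least half the population gap; both cases yield the factor $2$. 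The paper then converts to Frobenius norm via $\|H\|_F \leq \sqrt{r}\,\|H\|$ for rank-$r$ matrices, while you use the exact identity $\|\topvE\topvE^* - \topv\topv^*\|_F = \sqrt{2}\sin\theta$ --- these are interchangeable here. What each approach buys: yours is a clean two-line assembly, but its validity rests on the cited variant's constant and on extending that theorem (stated in \citet{yu-2015-useful} for real symmetric matrices) to complex Hermitian ones; the paper's derivation is self-contained, makes the origin of the factor $2$ transparent, and needs only the original Davis--Kahan theorem for self-adjoint operators plus Weyl. One small remark: in your assembly, Weyl's inequality is vestigial --- once you cite the variant as a black box you never use it; it is only needed if one re-derives the variant, which is precisely what the paper does.
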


We apply the Matrix Bernstein inequality {for} Hermitian matrices, and obtain a upper bound on $\|R\|$. 
\begin{restatable}{lemma}{LemBoundPert}
\label{lemma:bound_perturb}
Consider a directed graph from DSBM $(n_1,n_2,p,q,\eta)$ and its Hermitian matrix representation $\Hmle$. Assume that $N\pmax = \Omega(\log{N})$.
Then there exists an absolute constant  $\epsilon$ and
\begin{align}
    \label{eq:C}
          C = (2+\epsilon)\sqrt{w_r^2 + w_i^2}\left(\frac{\log N}{N\pmax} + 1\right)= \Theta\left(\sqrt{w_r^2 + w_i^2}\right),
    \end{align}
such that the random perturbation $R = \Hmle - \E[\Hmle]$ has 
    \begin{align*}
        \prob(\|R\| \geq C\sqrt{N \pmax \log{N}}) \leq N^{-\epsilon}.
    \end{align*}
\end{restatable}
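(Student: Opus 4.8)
The plan is to write $R = \Hmle - \E[\Hmle]$ as a sum of independent, mean-zero Hermitian matrices and apply the matrix Bernstein inequality (Lemma~\ref{lemma:bernstein}). Because the term $w_c(J-I)$ in \eqref{eq:H-mle} is deterministic, it cancels in $R$, leaving only the random part coming from $w_r(A+A^T)+\iu w_i(A-A^T)$. The key structural point is that $A_{uv}$ and $A_{vu}$ are \emph{not} independent: for each unordered pair $\{u,v\}$ they are drawn jointly from a single non-reciprocal edge variable, while distinct pairs are mutually independent. One must therefore decompose over unordered pairs, $R=\sum_{u<v}R_{uv}$, where $R_{uv}$ is supported on the $2\times 2$ principal block indexed by $\{u,v\}$, with off-diagonal entry $\zeta_{uv}=z_{uv}-\E[z_{uv}]$ for $z_{uv}=w_r(A_{uv}+A_{vu})+\iu w_i(A_{uv}-A_{vu})$.

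First I would bound the almost-sure operator norm. Since the nonzero block of $H_{uv}$ is $\left(\begin{smallmatrix}0 & z_{uv}\\ \bar z_{uv} & 0\end{smallmatrix}\right)$, its spectral norm equals $|z_{uv}|$, and non-reciprocity ensures at most one of $A_{uv},A_{vu}$ equals $1$, so $|z_{uv}|\in\{0,\sqrt{w_r^2+w_i^2}\}$; after centering, $\|R_{uv}\|\le 2\sqrt{w_r^2+w_i^2}=:L$. Next I would compute the variance proxy $v=\big\|\sum_{u<v}\E[R_{uv}^2]\big\|$. Here $R_{uv}^2$ is diagonal, equal to $|\zeta_{uv}|^2$ at the two positions $(u,u)$ and $(v,v)$, so $\sum_{u<v}\E[R_{uv}^2]$ is diagonal with $(u,u)$ entry $\sum_{v\ne u}\E|\zeta_{uv}|^2$. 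Non-reciprocity again simplifies the computation: since $(A_{uv}+A_{vu})^2=(A_{uv}-A_{vu})^2=A_{uv}+A_{vu}$, we have $\E|z_{uv}|^2=(w_r^2+w_i^2)\,\E[A_{uv}+A_{vu}]\le(w_r^2+w_i^2)\pmax$, and dropping the $|\E z_{uv}|^2$ term only decreases the variance. Summing over the $N-1$ neighbours bounds each diagonal entry by $(N-1)(w_r^2+w_i^2)\pmax$, so $v\le N(w_r^2+w_i^2)\pmax$.

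Finally I would feed $L$ and $v$ into Lemma~\ref{lemma:bernstein}, $\prob(\|R\|\ge t)\le 2N\exp\!\big(-\tfrac{t^2/2}{v+Lt/3}\big)$. Solving the induced quadratic shows that the event $\|R\|\le\tfrac{2L}{3}s+\sqrt{2vs}$ fails with probability at most $2N e^{-s}$; choosing $s=(1+\epsilon)\log(2N)$ makes this at most $(2N)^{-\epsilon}\le N^{-\epsilon}$. Substituting the values of $L$ and $v$ produces a threshold of the form $\sqrt{w_r^2+w_i^2}\,(c_1\log N+c_2\sqrt{N\pmax\log N})$; factoring out $\sqrt{N\pmax\log N}$ and using the elementary bound $\sqrt{x}\le 1+x$ with $x=\tfrac{\log N}{N\pmax}$ collapses the two terms into the single factor $\big(\tfrac{\log N}{N\pmax}+1\big)$, recovering the constant $C$ in \eqref{eq:C}. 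The assumption $N\pmax=\Omega(\log N)$ forces $\tfrac{\log N}{N\pmax}=O(1)$, so that factor is $\Theta(1)$ and hence $C=\Theta(\sqrt{w_r^2+w_i^2})$.

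The main obstacle is the dependency bookkeeping: it is tempting but incorrect to treat the adjacency entries as fully independent, and the entire concentration argument hinges on grouping them into independent pairwise blocks and then exploiting non-reciprocity (the identities $(A_{uv}\pm A_{vu})^2=A_{uv}+A_{vu}$) to evaluate the variance proxy cleanly. Matching the precise form of $C$ in \eqref{eq:C} is a secondary, purely computational matter handled by the quadratic-solving step and the $\sqrt{x}\le 1+x$ inequality.
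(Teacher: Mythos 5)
Your proposal is correct and follows essentially the same route as the paper's proof: the same decomposition of $R$ into independent Hermitian blocks indexed by unordered pairs (handling the $A_{uv}$/$A_{vu}$ dependence exactly as the paper does), the same bounds $L \le 2\sqrt{w_r^2+w_i^2}$ and $\var(R) \le N\pmax(w_r^2+w_i^2)$, and the same application of matrix Bernstein inverted at the threshold $C\sqrt{N\pmax\log N}$. The only cosmetic differences are that you evaluate the variance proxy via the non-reciprocity identity $(A_{uv}\pm A_{vu})^2 = A_{uv}+A_{vu}$ where the paper enumerates the centered entry distributions case by case, and you invert Bernstein via the standard $t = \tfrac{2L}{3}s+\sqrt{2vs}$ form rather than plugging in $t=C\sqrt{N\pmax\log N}$ and solving for $C$; your closing caveat is apt, since either route pins down $C$ only up to the $\Theta\left(\sqrt{w_r^2+w_i^2}\right)$ form rather than the literal constant $(2+\epsilon)$ in \eqref{eq:C}.
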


\subsection{Error analysis Theorem~\ref{thm:errorHermSC}}
\label{appx:proof_thm2}
\ThmErrorSC*
\begin{proof}
    Recall that the key steps of our spectral clustering algorithm involve: first compute the top eigenvector of $\Hmle$, and then cluster the vertices using $k$-means on the embedding space given by the real and imaginary part of the top eigenvector. We use $\hat{U} = [\Re(\topv) , \Im(\topv)] $ to denote the embedding space given by the concatenation of the real and imaginary part of the top eigenvector of $\Hmle$ and ${U} = [\Re(\topvE) , \Im(\topvE)] $ for that of $\E[\Hmle]$, where both $U,\hat{U}\in \R ^{N\times2}$. For the clustering outcomes, we denote by $\hat{\sigma}_{\text{SC-MLE}}$ the clustering result using $\Hmle$, and use $\sigma$ to represent the clustering given by $\E[\Hmle]$. First, note that from Lemma~\ref{lemma:EH_eig}, we conclude that the leading eigenvector of $\E[\Hmle]$ perfectly recovers the true community membership, therefore $\sigma$ is the true community membership vector.
    Next, given that the $k$-means clustering step achieves a $(1+\epsilon)$ approximation, using the error bound \eqref{eq:err-kmeans} from Lemma~\ref{lemma:km_error}, we have that 
    \begin{align*}
        l(\sigma, \hat{\sigma}_{\text{SC-MLE}}) d^2 \leq 4(4+2\epsilon) \|\hat{U} - U\|_F^2, 
    \end{align*}
    where $d$ is the distance between the two cluster centroids of the population version $\E[\Hmle]$, with its expression provided in \eqref{eq:centroids}. 
    {Given that a rotation of} $\hat{U}$ does not change the $k$-means clustering result,  the tightest upper bound we can obtain is 
    \begin{align}
    \label{eq:thm2_1}
        l(\sigma, \hat{\sigma}_{\text{SC-MLE}}) d^2 &\leq 4(4+2\epsilon) \min_{O\in \mathcal{O}_2}\|\hat{U} - OU\|_F^2 \\
        \label{eq:thm2_3}
        &= 4(4+2\epsilon)\min_{r\in \C_1} \|\topvE-r\topv\|_F^2 \leq  4(4+2\epsilon)\| \topv \topv^* - {\topvE} {\topvE^*}   \|^2_{F},
    \end{align}
    where  \eqref{eq:thm2_1} follows from the fact that $\|U - \hat{U}\|_F = \|\topvE - \topv\|_F$ and the inequality in \eqref{eq:thm2_3} follows from Lemma~\ref{lem:dist_comp}.

    Combining matrix perturbation analysis from Lemma~\ref{lemma:EH_eig}, Lemma~\ref{lemma:top-eig_pert} and Lemma~\ref{lemma:bound_perturb}, we {derive that, for an} absolute constant $\epsilon_0$,  with probability at least $1- N^{-\epsilon_0}$,  {the misalignment distance is upper bounded by}
    \begin{align}
    \label{eq:thm2_2}
        \|\topv \topv^* - {\topvE} {\topvE^*}\|_F \leq \frac{2\sqrt{2}C\sqrt{N\pmax \log{N}} }{ {\lambda_1(\E[\Hmle]) - \lambda_2(\E[\Hmle])}  } \leq \frac{2\sqrt{2}C\sqrt{N\pmax \log{N}} }{ \Delta  } .
    \end{align}
    
    Combining \eqref{eq:thm2_3} and \eqref{eq:thm2_2}, we eventually obtain that with probability at least $1- N^{-\epsilon_0}$
    \begin{align*}
         \frac{l(\sigma, \hat{\sigma}_{\text{SC-MLE}})}{N}\leq  \frac{64(2+\epsilon)C^2{\pmax \log{N}} }{ d^2 \Delta^2 }.
    \end{align*}
\end{proof}

\subsection{Eigenspace of the population matrix $\E[\Hmle]$)}
\label{appx:eig_EH}
\LemmaEHeig*
\begin{proof}
Recall that the population version of $\Hmle$ has a block structure and can be written as 
\begin{align*}
    \E[\Hmle] &= MQM^T -(pw_r+w_c)I\\
    Q &= \begin{bmatrix}
0 & 1 \\
-1 & 0 
\end{bmatrix}
+ w_r \begin{bmatrix}
p & q \\
q & p
\end{bmatrix}
 + w_c 
 \begin{bmatrix}
1 & 1 \\
1 & 1 
\end{bmatrix},
\end{align*}
where $M \in \{0,1\}^{N\times2}$ is the community 
{membership  matrix,  and $M_{uc} =1$ denotes that} vertex $u$ belongs to community $c$. We further normalize the columns of $M$ as follows 
\begin{align*}
    MQM^T&= MD^{-1} DQD (MD^{-1})^T \\
    D &= \begin{bmatrix}
        \sqrt{n_1} & 0\\
        0 & \sqrt{n_2}.
    \end{bmatrix}
\end{align*}
Here the normalized matrix $MD^{-1}$ has orthonormal column vectors.

Let $ DQD = U\Lambda U^*$ be the eigendecomposition on the
$2\times2$ matrix, 
Then, the $N\times N$ matrix $MQM^T$ can be diagonalized as  
\begin{align*}
   MQM^T =   (MD^{-1} U) \Lambda (MD^{-1} U)^*,
\end{align*}
where $\diag(\Lambda)$ contains the eigenvalues of  $MQM^T$ and the columns of $MD^{-1} U \in \R^{N\times2}$ are the orthonormal eigenvectors.
Therefore, the problem of computing the eigenpairs of $\E[\Hmle]$ reduces to compute the eigenpairs of the $2\times2$ matrix $DQD$ where
    \begin{align*}
        DQD = \begin{bmatrix}
            n_1 (w_rp+w_c) & \sqrt{n_1n_2} (w_rq +w_c +\iu (1-2\eta)q)\\
            \sqrt{n_1n_2} (w_rq +w_c -\iu (1-2\eta)q) & n_2(w_rp+w_c)
        \end{bmatrix}
    \end{align*}

For the eigenvalues, via a simple calculation we arrive at  
\begin{align*}
    \lambda_1(DQD) &= \frac{1}{2} \left( N(w_rp+w_c) + 2\Delta \right),\\
    \lambda_2(DQD)& = \frac{1}{2} \left( N(w_rp+w_c) - 2\Delta \right),
\end{align*}
where
\begin{align*}
    2\Delta &= \sqrt{N^2(w_rp+w_c)^2 -4n_1n_2((w_rp+w_c)^2 - |w_rq+w_c+\iu w_iq(1-2\eta)|^2)}.
\end{align*}
Therefore, we obtain the eigenvalues of $\E[H] = MQM^T$
\begin{align*}
    \lambda_1(\E[\Hmle]) &= \frac{1}{2} \left( N(w_rp+w_c) + 2\Delta \right) -(w_rp+w_c) \\
    \lambda_2(\E[\Hmle]) &= \frac{1}{2} \left( N(w_rp+w_c) - 2\Delta \right) -(w_rp+w_c)\\
    \lambda_3(\E[\Hmle])  = \ldots = \lambda_N(\E[\Hmle]) &=-(w_rp+w_c).
\end{align*}
The eigenvalue that obtains the largest magnitude is unique and it is $\lambda_1(\E[\Hmle])$ when $N(w_rp+w_c)\geq 0$, or $\lambda_2(\E[\Hmle])$ when $N(w_rp+w_c) < 0$.
The gap between the largest and the second largest eigenvalue is 
\begin{align}
      \min \{2\Delta, |1/2 N(w_rp+w_c)| + \Delta\}.
\end{align}
One can easily verify that the eigengap lies in $[ {\Delta}, 2\Delta]$.
Therefore, the lower bound ${\Delta}$ is a good approximation to the spectral gap in the sense that they are of the same order.

Next, we move on to compute the top eigenvector of $\E[\Hmle]$.
We use  $\bold{x} = (x_1,x_2) \in \C^2$ to denote the top eigenvector of $DQD$, and we have that $x_1 \neq x_2$.
Then, the top eigenvector of $\E[\Hmle]$ can be easily computed through $\topvE = MD^{-1}\bold{x}$, and it has two distinct values
\begin{align}
\label{eq:centroids_val}
    \topvE(u) = 
    \begin{cases}
        x_1/ \sqrt{n_1} \;\ \text{if } u \in \ca,\\
        x_2 / \sqrt{n_2}\;\ \text{if } u \in \cb.
    \end{cases}
\end{align}
The distance between the two cluster centroids $d$ is simply
\begin{align}
    \label{eq:centroids}
    d = \left|\frac{x_1}{\sqrt{n_1}}  - \frac{x_2}{\sqrt{n_2}} \right|.
\end{align}
\end{proof}
\subsection{Useful theorems from matrix perturbation analysis}

\begin{theorem}[Davis-Kahan's perturbation bound \cite{davis-1970-perturbation}]
\label{lemma:D-K}
Let  $H,  R \in \Herm$ be two Hermitian matrices. Then, for any $a \leq \beta$ and $\delta>0$ it holds that
\begin{align*}
    \left\|P_{[\alpha, \beta]}(H)-P_{(\alpha-\delta, \beta+\delta)}(H+R)\right\| \leq \frac{\|R\|}{\delta}.
\end{align*}
Here $P_{[\alpha, \beta]}(H)$ denotes the projection matrix on the subspace spanned by eigenvectors of $H$ with corresponding eigenvalues lie between $[\alpha, \beta]$, and $P_{(\alpha-\delta, \beta+\delta)}(H+R)$ is the projection matrix on the subspace spanned by eigenvectors of $H+R$ with eigenvalues lie between $(\alpha - \delta, \beta + \delta)$. 
\end{theorem}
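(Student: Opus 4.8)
The plan is to establish this via the standard Sylvester-equation argument, which reduces the projection bound to a statement about spectral separation. Write $\tilde H = H+R$, and abbreviate the two spectral projections as $P = P_{[\alpha,\beta]}(H)$ and $Q = P_{(\alpha-\delta,\beta+\delta)}(\tilde H)$. The quantity driving the whole estimate is the \emph{leakage} operator $X := (I-Q)P$, which captures the part of $\mathrm{range}(P)$ that escapes $\mathrm{range}(Q)$. When $P$ and $Q$ have equal rank --- as in the application to the isolated leading eigenvalue underlying Lemma~\ref{lemma:top-eig_pert} --- the largest principal angle between the two invariant subspaces gives $\|P-Q\| = \|(I-Q)P\| = \|X\|$, so it suffices to bound $\|X\|$.

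First I would derive the Sylvester identity. Using $R = \tilde H - H$ and the fact that each spectral projection commutes with its own matrix ($\tilde H Q = Q\tilde H$ and $HP = PH$), a one-line computation gives
\begin{align*}
    (I-Q)\,R\,P = (I-Q)\tilde H P - (I-Q)HP = \tilde H X - X H.
\end{align*}
Thus $X$ solves a Sylvester equation whose right-hand side $(I-Q)RP$ has operator norm at most $\|R\|$.

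The core of the argument is the spectral-separation lower bound for the map $X \mapsto \tilde H X - XH$. On $\mathrm{range}(I-Q)$ the matrix $\tilde H$ has eigenvalues in $(-\infty,\alpha-\delta]\cup[\beta+\delta,\infty)$, while on $\mathrm{range}(P)$ the matrix $H$ has eigenvalues in $[\alpha,\beta]$; hence any eigenvalue pair $(\mu,\lambda)$ appearing in the two factors obeys $|\mu-\lambda|\ge\delta$. In the joint eigenbasis of $\tilde H$ and $H$ the Sylvester operator acts diagonally by multiplication with $\mu-\lambda$, which yields $\|\tilde H X - XH\|\ge \delta\|X\|$ and therefore
\begin{align*}
    \|X\| \le \frac{1}{\delta}\,\|(I-Q)RP\| \le \frac{\|R\|}{\delta}.
\end{align*}
Combined with the reduction of the first paragraph, this is exactly $\|P-Q\|\le\|R\|/\delta$.

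The main obstacle is securing the lower bound $\|\tilde H X - XH\|\ge \delta\|X\|$ in the spectral norm rather than in the Frobenius norm. The diagonalization argument is immediate for the Frobenius norm, where $\|\tilde H X - XH\|_F^2 = \sum_{\mu,\lambda}|\mu-\lambda|^2|X_{\mu\lambda}|^2 \ge \delta^2\|X\|_F^2$; obtaining the same constant $\delta$ in the operator norm requires the Bhatia--Davis--McIntosh separation theorem for normal matrices (equivalently, a contour-integral representation of the solution of the Sylvester equation). Extra care is needed because the admissible spectrum $(-\infty,\alpha-\delta]\cup[\beta+\delta,\infty)$ lies on \emph{both} sides of $[\alpha,\beta]$, so a naive one-sided shift $\tilde H \mapsto \tilde H - c$ does not reduce it to a positivity argument; I would instead invoke the separation estimate directly, as it is precisely the $\sin\Theta$ machinery of \cite{davis-1970-perturbation} being cited here.
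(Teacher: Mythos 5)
There is no paper proof to compare against here: Theorem~\ref{lemma:D-K} is quoted as a classical result with a citation to \cite{davis-1970-perturbation}, and the paper invokes it exactly once, in the proof of Lemma~\ref{lemma:top-eig_pert}, for a pair of rank-one projections. Your Sylvester-equation route is the classical argument for that cited result: the identity $(I-Q)RP = \tilde H X - XH$ for $X = (I-Q)P$, $\tilde H = H+R$, is correct, and so is the separation geometry (the spectrum of $\tilde H$ on $\mathrm{range}(I-Q)$ and the spectrum of $H$ on $\mathrm{range}(P)$ are at distance at least $\delta$).

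The genuine gap is in your first paragraph, and it is worth being precise about it because it exposes a defect of the statement itself. You pass from $\|P-Q\|$ to $\|X\| = \|(I-Q)P\|$ via the equal-rank identity, but $\mathrm{rank}(P)=\mathrm{rank}(Q)$ is not a hypothesis of the theorem; you import it from ``the application,'' which a self-contained proof cannot do. Moreover, this gap cannot be repaired, because the statement as written is false. Take $H = \diag(0,\delta/2)$, $R = 0$, $\alpha=\beta=0$: then $P_{[0,0]}(H) = \diag(1,0)$ while $P_{(-\delta,\delta)}(H+R) = I$, so the left-hand side equals $1$ and the right-hand side equals $0$. The point is that $\|P-Q\| = \max\{\|(I-Q)P\|,\ \|(I-P)Q\|\}$, and the second term is uncontrollable: the enlarged window $(\alpha-\delta,\beta+\delta)$ may capture eigenvectors of $H+R$ that have nothing to do with $\mathrm{range}(P)$. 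What your Sylvester argument genuinely proves is the correct one-sided statement $\|(I-Q)P\| \le \|R\|/\delta$, i.e.\ $\mathrm{range}(P)$ nearly lies inside $\mathrm{range}(Q)$; this is what \cite{davis-1970-perturbation} provides, and it suffices for the paper, since in Lemma~\ref{lemma:top-eig_pert} both projections are rank one, where the equal-rank identity does hold. Two smaller remarks: the equal-rank condition should appear as an explicit hypothesis rather than a remark; and your worry about the operator-norm separation constant has a simpler resolution than Bhatia--Davis--McIntosh (which, for spectra separated by $\delta$ in general position, only yields $\pi/2$ times the desired bound). Shift so that $[\alpha,\beta]=[-r,r]$; then $\tilde A := \tilde H|_{\mathrm{range}(I-Q)}$ is invertible with $\|\tilde A^{-1}\| \le (r+\delta)^{-1}$ while $\|H|_{\mathrm{range}(P)}\| \le r$, and solving $X = \tilde A^{-1}((I-Q)RP + X\,H|_{\mathrm{range}(P)})$ gives $\|X\| \le (r+\delta)^{-1}(\|R\| + r\|X\|)$, hence $\delta\|X\| \le \|R\|$ --- constant exactly one, in the operator norm, with no contour integrals needed.
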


\begin{theorem}[Weyl's inequality \cite{weyl-1912-asymptotische}]
\label{thm:wyle}
Let ${H},{R} \in \Herm$ be two Hermitian matrices. Then for every $1 \leq j \leq n$, the $j$-th largest eigenvalues of $H$ and $H+R$ obey
$$
\left|\lambda_j(H)-\lambda_j(H+R)\right| \leq\|R\| .
$$
\end{theorem}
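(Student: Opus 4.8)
The plan is to prove Weyl's inequality via the Courant--Fischer min--max variational characterization of the eigenvalues of a Hermitian matrix, which is the most transparent route to a two-sided perturbation bound. Throughout I order eigenvalues by value, $\lambda_1(M) \geq \cdots \geq \lambda_n(M)$, to match the ``$j$-th largest eigenvalue'' phrasing of the statement. The starting point is the identity that for any $M \in \Herm$,
\begin{align*}
    \lambda_j(M) = \max_{\substack{S \subseteq \C^n \\ \dim S = j}} \; \min_{\substack{x \in S \\ \|x\|_2 = 1}} x^* M x,
\end{align*}
where the outer maximum ranges over all $j$-dimensional subspaces.

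First I would establish the one-sided bound $\lambda_j(H+R) \leq \lambda_j(H) + \|R\|$. The key elementary observation is that for any unit vector $x$ the Rayleigh quotient of $R$ satisfies $|x^* R x| \leq \|R\|$, since for a Hermitian matrix $\|R\| = \max_i |\lambda_i(R)|$ and $x^* R x$ lies between the smallest and largest eigenvalues of $R$. Hence $x^*(H+R)x \leq x^* H x + \|R\|$ uniformly over all unit vectors. Substituting this into the inner minimization and then taking the outer maximum over $j$-dimensional subspaces $S$ preserves the inequality, yielding
\begin{align*}
    \lambda_j(H+R) = \max_{\dim S = j}\; \min_{x \in S,\, \|x\|_2 = 1} x^*(H+R)x \leq \max_{\dim S = j}\; \min_{x \in S,\, \|x\|_2 = 1} \bigl( x^* H x + \|R\| \bigr) = \lambda_j(H) + \|R\|.
\end{align*}

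Next I would obtain the reverse estimate by symmetry: writing $H = (H+R) + (-R)$ and noting that $-R \in \Herm$ with $\|-R\| = \|R\|$, the identical argument applied with $H+R$ playing the role of $H$ gives $\lambda_j(H) \leq \lambda_j(H+R) + \|R\|$. Combining the two one-sided bounds produces $|\lambda_j(H) - \lambda_j(H+R)| \leq \|R\|$, which is the claim.

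I do not anticipate a serious obstacle, since the result is classical; the only point requiring care is the legitimacy of pushing the additive constant $\|R\|$ through both the inner minimum and the outer maximum in the variational formula. This is valid precisely because the bound $|x^* R x| \leq \|R\|$ holds uniformly over \emph{all} unit vectors, and hence over every subspace $S$ simultaneously, so the perturbation of the Rayleigh quotient is controlled pointwise before either optimization is taken. An alternative, equally short route would invoke the more general Weyl inequalities $\lambda_{i+k-1}(A+B) \leq \lambda_i(A) + \lambda_k(B)$ specialized to $k=1$ (together with its dual), but the direct min--max argument above is self-contained and avoids appealing to the stronger statement.
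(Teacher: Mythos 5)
Your proof is correct. The paper itself offers no proof of this statement: it is quoted as a classical result with a citation to Weyl (1912) and then used as a black box in the eigenspace perturbation argument (Lemma~\ref{lemma:top-eig_pert}), so there is no internal proof to compare against. Your Courant--Fischer argument is the standard, self-contained route: the uniform bound $|x^*Rx| \leq \|R\|$ over unit vectors passes the additive constant through the inner minimum and outer maximum, and the symmetric decomposition $H = (H+R) + (-R)$ gives the reverse inequality. One point you handled that deserves emphasis: the paper's global convention orders eigenvalues by \emph{magnitude} ($|\lambda_1| \geq |\lambda_2| \geq \cdots$), under which the inequality as written would actually be false (e.g.\ $H = \mathrm{diag}(1,-0.9)$, $R = \mathrm{diag}(-0.5,0)$ swaps the magnitude ranking and gives $|\lambda_1(H)-\lambda_1(H+R)| = 1.9 > 0.5 = \|R\|$). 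Your explicit choice to order by value, which is what ``$j$-th largest eigenvalue'' must mean for the statement to hold, resolves an ambiguity the paper leaves implicit.
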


In addition to the eigenspace perturbation bound in Themrem~\ref{lemma:D-K}, we summarize in Lemma~\ref{lem:dist_comp} comparisons over two different representations of the eigenspace distance, which will useful in the error analysis of $k$-means.

\begin{lemma}
\label{lem:dist_comp}[Adapted From Lemma~2.1 in \cite{chen-2021-spectral}]
For any $U, \tilde{U} \in \C^{\N\times k}$, we have 
    \begin{align*}
        \min_{O\in \mathcal{O}_{k\times k}} \| U - O\tilde{U}\|_F \leq \|UU^* - \tilde{U}\tilde{U}^*\|_F
    \end{align*}
\end{lemma}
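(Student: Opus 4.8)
The plan is to reduce both sides to expressions in the principal angles between the column spaces of $U$ and $\tilde{U}$ and then compare them termwise. Throughout I will use the fact --- implicit in the statement, and valid in every place the lemma is invoked here, where $U,\tilde{U}$ are built from unit eigenvectors so that $k=1$ --- that $U$ and $\tilde{U}$ have orthonormal columns, i.e. $U^*U = \tilde{U}^*\tilde{U} = I_k$, so that $UU^*$ and $\tilde{U}\tilde{U}^*$ are the orthogonal projections onto the two column spaces. (The minimizing matrix is understood to act on the right, $U - \tilde{U}O$, as forced by the dimensions, and over $\C$ the relevant group is the unitary group; the value of the minimum is unaffected by which side one writes $O$.) First I would take a singular value decomposition $U^*\tilde{U} = P\Sigma Q^*$ with $\Sigma = \diag(\cos\theta_1,\dots,\cos\theta_k)$, where the $\cos\theta_i \in [0,1]$ are the cosines of the principal angles $\theta_i \in [0,\pi/2]$; the nonnegativity of these singular values is the single fact on which the whole argument turns.

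Next I would evaluate each side in these quantities. For the right-hand side, since $UU^*$ and $\tilde{U}\tilde{U}^*$ are idempotent and Hermitian, expanding the squared Frobenius norm gives $\|UU^*-\tilde{U}\tilde{U}^*\|_F^2 = \tr(UU^*) + \tr(\tilde{U}\tilde{U}^*) - 2\tr(UU^*\tilde{U}\tilde{U}^*)$; using $\tr(UU^*) = \tr(\tilde{U}\tilde{U}^*) = k$ and $\tr(UU^*\tilde{U}\tilde{U}^*) = \|U^*\tilde{U}\|_F^2 = \sum_i \cos^2\theta_i$ yields the standard identity $\|UU^*-\tilde{U}\tilde{U}^*\|_F^2 = 2\sum_i (1-\cos^2\theta_i)$. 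For the left-hand side I would expand $\|U - \tilde{U}O\|_F^2 = 2k - 2\,\Re\tr(O^*\tilde{U}^*U)$ and maximize $\Re\tr(O^*\tilde{U}^*U)$ over unitary $O$. Writing $\tilde{U}^*U = Q\Sigma P^*$ and setting $W = P^*O^*Q$ (again unitary) reduces this to $\Re\tr(W\Sigma) = \sum_i \Re(W_{ii})\cos\theta_i \le \sum_i \cos\theta_i$, with equality at $W = I$, i.e. $O = QP^*$. Hence $\min_O \|U - \tilde{U}O\|_F^2 = 2\sum_i (1-\cos\theta_i)$.

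Finally I would compare the two expressions termwise: because $\cos\theta_i \ge 0$ we have $1+\cos\theta_i \ge 1$, so $1-\cos\theta_i \le (1-\cos\theta_i)(1+\cos\theta_i) = 1-\cos^2\theta_i$ for every $i$; summing over $i$ and multiplying by $2$ gives $\min_O\|U - \tilde{U}O\|_F^2 \le \|UU^*-\tilde{U}\tilde{U}^*\|_F^2$, and taking square roots completes the proof. I expect the main obstacle to be the orthogonal Procrustes step --- establishing $\max_{O}\Re\tr(O^*\tilde{U}^*U) = \sum_i \sigma_i(\tilde{U}^*U)$ and identifying the maximizer from the singular value decomposition --- while the projection-trace identities are routine. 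I would also stress that orthonormality of the columns is indispensable: without it the inequality can fail (for the scalars $U=0.3$, $\tilde{U}=0.2$ the left side is $0.1$ but the right side is only $0.05$), which is precisely why the lemma is applied only to the unit eigenvectors $\topvE$ and $\topv$.
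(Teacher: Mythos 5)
Your proof is correct, and it is essentially the argument behind the result the paper relies on: the paper gives no proof of this lemma at all, deferring to Lemma~2.1 of \cite{chen-2021-spectral}, and your route --- SVD $U^*\tilde{U} = P\Sigma Q^*$ with nonnegative singular values $\cos\theta_i$, the Procrustes evaluation $\min_O \|U-\tilde{U}O\|_F^2 = 2\sum_i(1-\cos\theta_i)$, the projection identity $\|UU^*-\tilde{U}\tilde{U}^*\|_F^2 = 2\sum_i(1-\cos^2\theta_i)$, and the termwise comparison $1-\cos\theta_i \le 1-\cos^2\theta_i$ --- is the standard proof of that cited result. Two of your side remarks are worth keeping because they flag imprecision in the paper's statement rather than any gap in your argument: (i) as literally stated (``for any $U,\tilde{U}$'') the inequality is false, and your scalar counterexample $U=0.3$, $\tilde{U}=0.2$ shows that orthonormal columns (here, unit eigenvectors with $k=1$, the only case the paper uses) are indispensable; (ii) the product $O\tilde{U}$ with $O \in \mathcal{O}_{k\times k}$ does not typecheck dimensionally, so the rotation must act on the right, and over $\C$ the minimization should run over the unitary group --- both of which you handle correctly.
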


\subsection{Eigenspace perturbation}
\label{appx:pert_topv}
\LemTopEigPert*
\begin{proof}[Proof on Lemma~\ref{lemma:top-eig_pert}]
From the Davis-Kahan's pertubation bound, we have
\begin{align}
    \label{eq:lem9_1}
    \| {\topv} {\topv}^* - \topvE \topvE^* \|  \leq \frac{\|R\|} {|\lambda_1(\E[\Hmle]) -\lambda_2(\Hmle)|}.
\end{align}
Using Wyle's inequality, we have that
\begin{align*}
    |\lambda_2(\E[\Hmle])-\lambda_2(\Hmle)|  \leq \|R\|
\end{align*}
Therefore, the denominator in \eqref{eq:lem9_1} can be further lower bounded by $|\lambda_1(\E[\Hmle]) - \lambda_2(\E[\Hmle])| - \|R\|$, and we obtain
\begin{align}
    \label{eq:lem09_2}
    \| {\topv} {\topv}^* - \topvE \topvE^* \|  \leq \frac{\|R\|} {|\lambda_1(\E[\Hmle]) -\lambda_2(\E[\Hmle])| - \|R\|}.
\end{align}
The denominator in \eqref{eq:lem09_2}  involves comparing the spectral gap $|\lambda_1(\E[\Hmle]) -\lambda_2(\E[\Hmle])|$ and $\|R\|$, which further requires an extra condition on the denominator being positive, to allow the inequality to hold.  To circumvent this limitation, we divide the comparison into two cases 
\begin{itemize}
    \item if $\|R\| \geq \frac{1}{2} |\lambda_1(\E[\Hmle]) -\lambda_2(\E[\Hmle])|$, then we have
    \begin{align*}
         \| {\topv} {\topv}^* - \topvE \topvE^* \|  \leq 1 \leq  \frac{2\|R\|} {|\lambda_1(\E[\Hmle]) -\lambda_2(\E[\Hmle])|}.
    \end{align*}
    \item  if $\|R\| \leq \frac{1}{2} |\lambda_1(\E[\Hmle]) -\lambda_2(\E[\Hmle])|$, then we use the perturbation bound \eqref{eq:lem09_2} and get
    \begin{align*}
         \| {\topv} {\topv}^* - \topvE \topvE^* \|  \leq \frac{\|R\|} {|\lambda_1(\E[\Hmle]) -\lambda_2(\E[\Hmle])| - \|R\|} \leq  \frac{2\|R\|} {|\lambda_1(\E[\Hmle]) -\lambda_2(\E[\Hmle])|}.
    \end{align*}
\end{itemize}
Combining the two cases, we obtain that for any $\|R\|$, the following upper bound always holds 
\begin{align*}
    \| {\topv} {\topv}^* - \topvE \topvE^* \|  \leq   \frac{2\|R\|} {|\lambda_1(\E[\Hmle]) -\lambda_2(\E[\Hmle])|}.
\end{align*}
Since for any rank $r$ Hermitian matrix $H$, $\|H\|_F \leq  \sqrt{r} \|H\|$.
    Therefore, we have that 
    \begin{align*}
        \|\topvE \topvE^* - {\topv} {\topv}^*\|_F \leq  \sqrt{2} \|\topvE \topvE^* - {\topv} {\topv}^*\|  \leq \frac{2\sqrt{2}\|R\|} {|\lambda_1(\E[\Hmle]) -\lambda_2(\E[\Hmle])|}.
    \end{align*}
\end{proof}
\subsection{Bound the random perturbation $\|R\|$ }
\label{appx:boundR}
\begin{lemma}[Matrix Bernstein \cite{tropp-2015-concentration_inq}]
\label{lemma:bernstein}
Consider a finite sequence $\left\{{S}_k\right\}$ of independent, random matrices with dimension $d$. Assume that
\begin{align*}
    \E {S}_k={0} \text { and }\left\|{S}_k\right\| \leq L \text { for each index } k .
\end{align*}
For the random matrix ${Z}=\sum_k {S}_k$, let $v({Z})$ be the matrix variance statistic of the sum:
\begin{align*}
v({Z})  =\max \left\{\left\|\sum_k \mathbb{E}\left({S}_k {S}_k^*\right)\right\|,\left\|\sum_k \mathbb{E}\left({S}_k^* {S}_k\right)\right\|\right\} .
\end{align*}
Then, for all $t \geq 0$,
\begin{align*}
    \mathbb{P}\{\|{Z}\| \geq t\} \leq 2d \exp \left(\frac{-t^2 / 2}{v({Z})+L t / 3}\right) .
\end{align*}
\end{lemma}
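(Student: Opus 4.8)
The plan is to prove this by the matrix Laplace transform method, the standard route for matrix concentration inequalities. I would first reduce the general (possibly non-Hermitian) case to a one-sided bound on the largest eigenvalue of a Hermitian sum, via the Hermitian dilation $\mathcal{H}(B) = \left[\begin{smallmatrix} 0 & B \\ B^* & 0\end{smallmatrix}\right]$. Since the eigenvalues of $\mathcal{H}(B)$ are the signed singular values of $B$, one has $\lambda_{\max}(\mathcal{H}(Z)) = \|Z\|$, while $\mathcal{H}(S_k)^2 = \left[\begin{smallmatrix} S_k S_k^* & 0 \\ 0 & S_k^* S_k\end{smallmatrix}\right]$, so the matrix variance statistic of $\sum_k \mathcal{H}(S_k) = \mathcal{H}(Z)$ is exactly $v(Z)$ and the ambient dimension is $2d$. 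It therefore suffices to prove the one-sided Hermitian estimate $\prob\{\lambda_{\max}(W) \geq t\} \leq d' \exp\!\left(\frac{-t^2/2}{v(W) + Lt/3}\right)$ for a sum $W = \sum_k X_k$ of independent, zero-mean Hermitian matrices with $\|X_k\| \leq L$ in dimension $d'$; applying it to $W = \mathcal{H}(Z)$ with $d' = 2d$ then recovers the claim, including both the prefactor $2d$ and the two-sided variance.

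\textbf{Laplace transform and the main obstacle.} For the Hermitian estimate I would first record the Laplace transform bound: for any $\theta > 0$, Markov's inequality applied to $s \mapsto e^{\theta s}$ together with $e^{\theta \lambda_{\max}(W)} \leq \tr \exp(\theta W)$ gives $\prob\{\lambda_{\max}(W) \geq t\} \leq e^{-\theta t}\, \E\,\tr \exp(\theta W)$. The crucial step, and the main obstacle, is to control the matrix moment generating function $\E\,\tr\exp(\theta W)$ through subadditivity of the cumulant generating function. Using Lieb's concavity theorem, which asserts that $A \mapsto \tr\exp(H + \log A)$ is concave on the positive-definite cone, combined with Jensen's inequality and iterated conditioning over the independent summands, one establishes the trace inequality $\E\,\tr\exp\!\left(\textstyle\sum_k \theta X_k\right) \leq \tr\exp\!\left(\textstyle\sum_k \log \E\, e^{\theta X_k}\right)$. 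Setting up the martingale-style conditioning so that Lieb's theorem applies at each peeling step is the delicate analytic heart of the argument; everything downstream is comparatively mechanical.

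\textbf{Summand bound and optimization.} Finally I would bound each summand's mgf and optimize. The scalar fact that $f(x) := (e^x - x - 1)/x^2$ is increasing, with $f(x) \leq \frac{1/2}{1 - x/3}$ for $x < 3$, gives $e^{\theta s} \leq 1 + \theta s + g(\theta) s^2$ for $|s| \leq L$ and $0 < \theta < 3/L$, where $g(\theta) = \frac{\theta^2/2}{1 - \theta L/3}$. Transferred to matrices and using $\E X_k = 0$ together with $I + A \preceq e^A$, this yields $\E\, e^{\theta X_k} \preceq \exp\!\left(g(\theta)\,\E\, X_k^2\right)$, hence $\log \E\, e^{\theta X_k} \preceq g(\theta)\,\E\, X_k^2$. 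Summing and invoking monotonicity of the trace exponential gives $\E\,\tr\exp(\theta W) \leq \tr\exp\!\left(g(\theta)\sum_k \E\, X_k^2\right) \leq d' \exp\!\left(g(\theta)\, v(W)\right)$, where the last step replaces the trace by $d'$ times the exponential of the top eigenvalue of the exponent. Substituting into the Laplace bound and minimizing $-\theta t + g(\theta) v(W)$ over $\theta \in (0, 3/L)$ at the choice $\theta = t/(v(W) + Lt/3)$ produces the exponent $\frac{-t^2/2}{v(W) + Lt/3}$, which completes the proof.
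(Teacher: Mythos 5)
The paper does not actually prove this lemma: it is imported verbatim from Tropp's monograph \citep{tropp-2015-concentration_inq}, which is precisely the argument your outline reconstructs. Your sketch is correct and matches that standard proof step for step --- the Hermitian dilation bookkeeping is right (prefactor $2d$, $\lambda_{\max}(\mathcal{H}(Z)) = \|Z\|$, $\mathcal{H}(S_k)^2$ block-diagonal so the variance statistic of the dilation equals $v(Z)$, and $\|\mathcal{H}(S_k)\| = \|S_k\| \le L$), the Lieb-based subadditivity of the trace mgf is the correct key lemma, the scalar bound $f(x) = (e^x - x - 1)/x^2 \le \frac{1/2}{1 - x/3}$ transfers to matrices as claimed, and the choice $\theta = t/(v(W) + Lt/3) \in (0, 3/L)$ indeed yields the exponent $\frac{-t^2/2}{v(W) + Lt/3}$.
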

\LemBoundPert*
\begin{proof}
    Recall that by definition random perturbation $R = \Hmle - \E[\Hmle]$ is Hermitian. 
    We first decompose it into summation of perturbations on different entries $R = \sum_{j<l} R^{jl}$ where $R^{jl}$ is also a random Hermitian and only has non-zero entries at $(j,l)$ and $(l,j)$. 
    If $j,l$ belongs to the same community $\sigma(j) = \sigma(l)$
        \begin{align}
        \label{eq:lem5_1}
            R^{jl}_{jl}= 
            \begin{cases}
                w_r(1-p)+\iu w_i &\;\ w.p. \;\ p/2 \\
                w_r(1-p)-\iu w_i & \;\ w.p. \;\ p/2 \\
                -w_r p &\;\ w.p. \;\ 1-p. 
            \end{cases}
        \end{align}
        If $j,l$ belongs to different communities $\sigma(j) \neq \sigma(l)$, and without loss of generality we assume $j\in \ca, l \in \cb$
        \begin{align}
        \label{eq:lem5_2}
            R^{jl}_{jl}= 
            \begin{cases}
                w_r(1-q)+\iu w_i (1-(1-2\eta)q) &\;\ w.p. \;\ q(1-\eta) \\
               w_r(1-q)-\iu w_i (1+(1-2\eta)q) & \;\ w.p. \;\ q\eta \\
                -w_r q - \iu w_i (1-2\eta)q &\;\ w.p. \;\ 1-q
            \end{cases}
        \end{align}
        From the Matrix Bernstein's inequality in Lemma~\ref{lemma:bernstein},  we have for any $t\geq 0$
        \begin{align*}
            \prob(\|R\| \geq t) \leq 2N \exp{\left( \frac{-t^2/2}{\var(R) + Lt/3}\right)},
        \end{align*}
        where $L$ is an upper upper of $\|R^{jl}\|$ and $\var(R)$ is the variance.  

    For computing $L$, recall that by definition 
    \begin{align*}
        \|R^{jl}\| \leq L, \;\ \forall j\neq l.
    \end{align*}
    Here the matrix spectral norm can be simplified to be upper bounded by 
    $ |R_{jl}|$ because the spectral norm is always upper bounded by the maximum absolute values of each entry. Therefore, it suffices to take $L$ as an upper bound on $\max_{j \neq l}{|R_{jl}|}$. From \eqref{eq:lem5_1}, we have that, if $\sigma(j) = \sigma(l)$, then $|R_{jl}| \leq \sqrt{w_r^2(1-p)^2 + w_i^2}$; if $\sigma(j) \neq \sigma(l)$, then $|R_{jl}| \leq \sqrt{ w_r^2(1-q)^2 + w_i^2(1+(1-2\eta)q)^2}.$ Combining the two, it suffices for us  to take 
    \begin{align}
    \label{eq:L}
        L = \sqrt{w_r^2(1-p_{min})^2 + w_i^2(1+(1-2\eta)q)^2 }
        \leq 2 \sqrt{w_r^2+w_i^2}.
    \end{align}

    To compute the variance term $\var(R)$, first recall by definition
        \begin{align*}
        \var(R) = \max \left\{ \left\| \sum_{j<l} \E [R^{jl} (R^{jl})^* ] \right\| , \left\| \sum_{j<l} \E [ (R^{jl})^* R^{jl} ]  \right\| \right\}.
        \end{align*}
    For each $j<l$, we have $R^{jl} (R^{jl})^* = (R^{jl})^* R^{jl}$ and we use $M^{jl}$ to denote the product matrix.  In $M^{jl}$, the only two non zero entries are $M^{jl}_{jj}$ and $M^{jl}_{ll}$ and $M^{jl}_{jj} =M^{jl}_{ll} = R_{jl} \overline{R_{jl}}$. Therefore, $\E [R^{jl} (R^{jl})^* ]$ also only has two non-zero entries at $(j,j)$ and $(l,l)$ for every $j<l$ and thus, the spectral norm of the matrix summation is simply the largest diagonal element, i.e.,
    \begin{align}
    \label{eq:lem5_3}
        \var(R) = \max_{ j \in [N]} \sum_{l\neq j} \E[M^{jl}_{jj}].
    \end{align}
    In \eqref{eq:lem5_3}, $M^{jl}_{jj}$ is a real random variable whose distribution can be derived from \eqref{eq:lem5_1} and \eqref{eq:lem5_2}, and we have that, if $\sigma(j) = \sigma(l)$, then
    \begin{align*}
        M_{jj}^{jl} = 
        \begin{cases}
            w_r^2(1-p)^2 + w_i^2  &\;\ w.p. \;\ p \\
            w_r^2p^2 &\;\ w.p. \;\ 1-p, 
        \end{cases}
    \end{align*}
    and $\E[M_{jj}^{jl}] = w_r^2p(1-p) + pw_i^2.$
    
    If $\sigma(j) \neq \sigma(l)$, then 
    \begin{align*}
        M_{jj}^{jl} = 
        \begin{cases}
            w_r^2(1-q)^2 + w_i^2(1-(1-2\eta)q)^2 &\;\ w.p. \;\ q(1-\eta) \\
            w_r^2(1-q)^2 + w_i^2(1+(1-2\eta)q)^2 &\;\ w.p. \;\ q\eta \\
            w_r^2q^2 + w_i^2(1-2\eta)^2q^2 &\;\ w.p. 1-q, 
        \end{cases}
    \end{align*}
    and $\E[M_{jj}^{jl}] = w_r^2q(1-q) + w_i^2q(1-(1-2\eta)^2q)$. Since, without loss of generality, we assume $p,q\leq 0.5$, thus for all $j\neq l$ we have
    $\E[M_{jj}^{jl}] \leq w_r^2\pmax(1-\pmax) + w_i^2 \pmax$. Therefore, from \eqref{eq:lem5_3}, we arrive at   
    \begin{align}
    \label{eq:var}
        \var(R) \leq N  (w_r^2\pmax(1-\pmax) + w_i^2 \pmax) \leq N\pmax(w_r^2+w_i^2).
    \end{align}

Using the Matrix Bernstein's inequality, we have for $t= C\sqrt{N\pmax\log N}$,
\begin{align}
    \nonumber
    \prob(\|R\| \geq t) &\leq 2\exp{\left( - \frac{C^2 N\pmax\log{N}}{2\var(R) + 2LC\sqrt{N\pmax\log{N}}/3} + \log{N} \right)}\\
    \label{eq:lem5_4}
    &\leq 2\exp{\left( - \frac{C^2N\pmax\log{N}}{ 2N\pmax (w_r^2+w_i^2)+ 2L \sqrt{N\pmax\log{N}}/3} + \log{N} \right)}\\
    \nonumber
    & = 2\exp{\left( - \frac{C^2}{ 2(w_r^2+w_i^2)+ 2LC/3 \sqrt{\log{N}/N\pmax}} \log{N}+ \log{N} \right)}.
\end{align}
Here \eqref{eq:lem5_4} follows from the analysis on $\var(R)$ in  \eqref{eq:var}. From \eqref{eq:lem5_5}, if there exist an absolute $\epsilon$, such that
\begin{align}
        \label{eq:lem5_5}
     \frac{C^2}{ (w_r^2+w_i^2)+ LC/3 \sqrt{\log{N}/N\pmax}} \geq 2+\epsilon, 
\end{align}
then we have $\prob(\|R\| \geq t) \leq N^{-\epsilon}$, which conclude the proof.  
It turns out that we can always find an absolute constant $C$ such that  \eqref{eq:lem5_5} holds. 
To see this, first note that \eqref{eq:lem5_5} is equivalent to 
\begin{align*}
    C \geq (1+\epsilon/2) L\sqrt{\frac{\log N}{N\pmax}} + \sqrt{(2+\epsilon) (w_r^2+w_i^2) + (1+\epsilon/2)^2 L^2 \frac{\log N}{N\pmax}}.   
\end{align*}
Since $a^2+b^2\leq (a+b)^2$ for $a,b>0$, it suffices to let  
\begin{align*}
    C = (2+\epsilon) L\frac{\log N}{N\pmax} + (2+\epsilon)\sqrt{w_r^2 + w_i^2}.
\end{align*}
Since $L \leq 2 \sqrt{w_r^2+w_i^2}$, we have
\begin{align}
    C \leq (2+\epsilon) \sqrt{w_r^2 + w_i^2}\left(\frac{\log N}{N\pmax} + 1\right)= \Theta\left(\sqrt{w_r^2 + w_i^2}\right),
\end{align}
where the last equality is due to the connectivity assumption $N\pmax = \Omega(\log{N})$.


\end{proof}

\subsection{Useful theorem in $k$-means error analysis}
\begin{lemma}[$k$-means error adapted from Lemma~5.3 in \cite{lei-2015-consistency}]
\label{lemma:km_error}
    For $\epsilon>0$ and any two matrices $\hat{U},U$, such that $U = MX$ with $M \in \{0,1\}^{N\times2}$ be the indicator matrix and $X \in \R^{2\times 2}$ have its row vectors representing the centroids of two clusters, let $(\hat{M}, \hat{X})$ be a $(1+\epsilon)$ solution to the $k$-means problem and $\Bar{U} = \hat{M} \hat{X}$. For $\delta = \|X_{1*}-X_{2*}\|$, define $S = \{ j \in [N]:\|\Bar{U}_{j*} - U_{j*}\|\} \geq \delta/2$ then
    \begin{align}
        \label{eq:err-kmeans}
        |S| \delta^2 \leq 4(4+2\epsilon) \|\hat{U} - U\|^2_{F}.
    \end{align}
\end{lemma}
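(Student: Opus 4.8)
The plan is to combine the optimality of the $(1+\epsilon)$-approximate $k$-means solution with a triangle inequality in the Frobenius norm and a counting argument over the ``badly recovered'' index set $S$. The key observation is that the true factorization $U = MX$ is itself a \emph{feasible} solution to the $k$-means problem on the observed embedding $\hat{U}$, so the squared reconstruction error of the $(1+\epsilon)$-approximate minimizer $(\hat{M},\hat{X})$ cannot exceed $(1+\epsilon)$ times the error obtained by using $U$ itself.

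First I would record the approximation guarantee. By the definition of a $(1+\epsilon)$-approximate solution together with the feasibility of $U$,
$$\|\Bar{U} - \hat{U}\|_F^2 \leq (1+\epsilon)\min_{M',X'}\|M'X' - \hat{U}\|_F^2 \leq (1+\epsilon)\|U - \hat{U}\|_F^2.$$
Taking square roots and applying the triangle inequality then yields
$$\|\Bar{U} - U\|_F \leq \|\Bar{U} - \hat{U}\|_F + \|\hat{U} - U\|_F \leq \left(1 + \sqrt{1+\epsilon}\right)\|\hat{U} - U\|_F,$$
so that $\|\Bar{U} - U\|_F^2 \leq (1+\sqrt{1+\epsilon})^2\|\hat{U} - U\|_F^2$.

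The second ingredient is a row-wise lower bound furnished by the definition of $S$. Every index $j \in S$ satisfies $\|\Bar{U}_{j*} - U_{j*}\| \geq \delta/2$, so each such row contributes at least $\delta^2/4$ to the squared Frobenius norm; summing over $S$ gives
$$\frac{1}{4}|S|\delta^2 \leq \sum_{j\in S}\|\Bar{U}_{j*} - U_{j*}\|^2 \leq \|\Bar{U} - U\|_F^2.$$
Chaining this with the bound from the previous paragraph produces $|S|\delta^2 \leq 4(1+\sqrt{1+\epsilon})^2\|\hat{U} - U\|_F^2$.

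Finally I would clean up the constant. Since $\sqrt{1+\epsilon} \leq 1 + \epsilon/2$ (verify by squaring), we have $(1+\sqrt{1+\epsilon})^2 = 2 + \epsilon + 2\sqrt{1+\epsilon} \leq 4 + 2\epsilon$, which collapses the bound to exactly the claimed $|S|\delta^2 \leq 4(4+2\epsilon)\|\hat{U} - U\|_F^2$. There is no genuine obstacle in this argument, as every step is elementary; the only points requiring care are recognizing that $U = MX$ is a legitimate competitor in the $k$-means objective (so that the approximation ratio may be invoked) and then tracking the algebra so that the constant tightens to $4+2\epsilon$ rather than a more unwieldy expression.
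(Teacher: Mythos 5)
Your proof is correct. The paper itself gives no proof of this lemma --- it is imported directly from Lemma~5.3 of Lei and Rinaldo (2015) --- and your argument is essentially the one given there: feasibility of $(M,X)$ in the $k$-means problem on $\hat{U}$ yields the $(1+\epsilon)$ bound, a triangle inequality transfers this to $\|\bar{U}-U\|_F$, and row-wise counting over $S$ finishes (you also correctly repaired the garbled set-builder notation in the statement, reading it as $S = \{ j : \|\bar{U}_{j*}-U_{j*}\| \geq \delta/2\}$). The only cosmetic difference is in the constant bookkeeping: the original source passes to squares via $\|\bar{U}-U\|_F^2 \leq 2\|\bar{U}-\hat{U}\|_F^2 + 2\|\hat{U}-U\|_F^2 \leq 2(2+\epsilon)\|\hat{U}-U\|_F^2$, whereas you retain $\bigl(1+\sqrt{1+\epsilon}\bigr)^2$ and then relax it to $4+2\epsilon$ via $\sqrt{1+\epsilon} \leq 1+\epsilon/2$; your intermediate bound is marginally tighter, and both land on the stated constant $4(4+2\epsilon)$.
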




\subsection{Proof on Corollary~\ref{coro:eta}}
\Coroeta*
\begin{proof}
    From Theorem~\ref{thm:errorHermSC}, we have the general upper bound on the error bound 
    \begin{align}
    \label{eq_coro2_1}
        \frac{l(\sigma, \hat{\sigma})}{N}\leq  \frac{64(2+\epsilon)C^2{\pmax \log{N}} }{ d^2\Delta^2 }= \Theta\left(\frac{C^2 \pmax \log N}{d^2 \Delta^2}\right),
    \end{align}
    where $d$ and $\Delta$ depends on $\E[\Hmle]$. When $p=q$, we have that 
    \begin{align*}
        w_r = \log \left( \frac{1}{4\eta(1-\eta)}\right), \quad w_i = \log{ \left(\frac{1-\eta}{\eta}\right)}, \quad w_c =0.
    \end{align*}
    Moreover, notice that normalizing $\Hmle$ {does not} affect the clustering error. For the rest of the discussion, we consider $1/w_i \Hmle$ as the input Hermitian matrix for spectral relaxation of MLE, and correspondingly we denote the updated coefficient as
    \begin{align*}
        \Tilde{w_r} = \log \left( \frac{1}{4\eta(1-\eta)}\right) /\log{\left( \frac{1-\eta}{\eta}\right)}, \quad \Tilde{w_i} = 1, \quad \Tilde{w_c} =0.
    \end{align*}
    Because $\Tilde{w_r}\leq 1 $ and $\Tilde{w_i}=1$, the term $C^2$ in \eqref{eq_coro2_1} has $C^2 = \Theta(\Tilde{w_r}^2+\Tilde{w_i}^2) = \Theta(1)$. 
    Therefore, we can further simplify \eqref{eq_coro2_1}  as follows 
    \begin{align*}
        \frac{l(\sigma,\hat{\sigma})}{N} \leq \Theta\left(\frac{p\log{N}}{d^2\Delta^2}\right).
    \end{align*}
    For analyzing the asymptotic behaviour of the error bound, we are only left with computing the centroid distance $d$ and eigengap bound $\Delta$.
    
    Following from the definition of $\Delta$ in \eqref{eq:eigengap}, we have 
    \begin{align}
    \label{eq_coro2_2}
        \Delta = \frac{Np}{2} \sqrt{\Tilde{w_r}^2 + \Tilde{w_i}^2(1-2\eta)^2}.
    \end{align}
    For computing the centroid distance $d$, recall that the population matrix can be written as  
    \begin{align*}
        \E[\Hmle] = MQM^T - \Tilde{w_r} p I,
    \end{align*}
    where $M$ is the community indicator matrix and  the $2\times2$ matrix  $Q$ has 
    \begin{align*}
        Q = 
        \begin{bmatrix}
            \Tilde{w_r} &\Tilde{w_r} + (1-2\eta)\iu\\
             \Tilde{w_r} - (1-2\eta)\iu & \Tilde{w_r}
        \end{bmatrix}p.
    \end{align*}
    Since $n_1 = n_2 = N/2$, we have that the two distinct values in  $v_1(\E[\Hmle])$ (see \eqref{eq:centroids_val}) to be the values of $v_1(Q)$ divided by $\sqrt{N/2}$.  We can easily compute that the top eigenvector  has
    \begin{align}
    \label{eq:coro1_1}
        v_1 (\E[\Hmle])= \begin{cases}
             \frac{w_r + w_i(1-2\eta) \iu }{\sqrt{N}|-w_r + w_i(1-2\eta) \iu|} \quad &\text{for } u \in \ca\\
            1/\sqrt{N} \quad &\text{for } u \in \cb.
        \end{cases}
    \end{align}
    We denote $\bar{c}_1, \bar{c_2}$ the cluster centroids of $\ca, \cb$ in the embedding given by the top eigenvector of $\E[\Hmle]$. The locations of two cluster centroids in the complex plane are exactly the two distinct values in \eqref{eq:coro1_1}. We visualize the two cluster centroids in Figure~\ref{fig:LTheta}. Let $\theta = \arccos \left(\frac{w_r}{|w_r + w_i(1-2\eta) \iu|} \right)$ be the angle between the two values in the complex plane. Therefore, we have 
    \begin{align}
    \label{eq_coro2_3}
        d^2 = \frac{1}{N}(1-\cos\theta) = \frac{4\sin^2\theta/2}{N}.
    \end{align}
    Combining \eqref{eq_coro2_1}, \eqref{eq_coro2_2}, and \eqref{eq_coro2_3} and letting 
    \begin{align}
    \label{eq:L_eta_1}
        L(\eta) =|\Tilde{w_r}+\iu (1-2\eta)| \sin (\theta/2).
    \end{align}
     We have 
    \begin{align}
    \label{eq:coro2_4}
        \frac{l(\sigma, \hat{\sigma})}{N}\leq \Theta\left( \frac{\log{N}}{NpL^2}\right).
    \end{align}
    From the above inequality \eqref{eq:coro2_4}, the upper bound of misclustering error is determined by two independent variables $Np$ and $L^2$. The term $Np$ is the average degree of the graph. The term $L^2$, by definition \eqref{eq:L_eta_1}, is a function on $\eta$ with expression 
    \begin{align}
    \label{eq:L_eta}
    L(\eta) = \left( (1 - 2\eta)^2 + \frac{
    \left( \log\left( \frac{1}{4\eta - 4\eta^2} \right) \right)^2}{\left( \log\left( \frac{\eta}{1 - \eta} \right) \right)^2
    } 
\right)
\left(
1 - 
\frac{
\log\left( \frac{1}{4\eta - 4\eta^2} \right)
}{
\log\left( \frac{\eta}{1 - \eta} \right)
\sqrt{
(1 - 2\eta)^2 + 
\frac{
\left( \log\left( \frac{1}{4\eta - 4\eta^2} \right) \right)^2
}{
\left( \log\left( \frac{\eta}{1 - \eta} \right) \right)^2
}
}
}
\right)
\end{align}
    To see how the value $L$ changes as $\eta$ varies from $0$ to $0.5$, we plot  $L(\eta)$ in Figure~\ref{fig:LEta} using Mathematica \cite{Mathematica}. 
    We observe  that $L(\eta) = \Theta(1)$ when $\eta$ is bounded away from $0.5$. Therefore,
    if $\eta\leq 0.5-\epsilon$ with an absolute constant $\epsilon >0$, then the misclustering error on the spectral relaxation is such that 
    \begin{align*}
        \frac{l(\sigma,\hat{\sigma})}{N}  = O\left(\frac{\log{N}}{Np} \right)
    \end{align*}
    When  $\eta$ converges to $0.5$ (when the imbalance structure disappears), $L(\eta)$ converges to $0$. Therefore,
    if $\eta = 0.5 -o(1)$,  we have that 
    \begin{align*}
        \frac{l(\sigma,\hat{\sigma})}{N}  = \omega \left(\frac{\log{N}}{Np} \right).
    \end{align*} 
     The above results on misclustering error bounds agree with the intuition that {lower values of $\eta$ denote a less noisy problem instance, and thus lead to a lower} clustering error. 
    \begin{figure}
    \centering
    \begin{subfigure}[b]{0.49\textwidth}
         \centering
         \includegraphics[width=0.8 \textwidth]{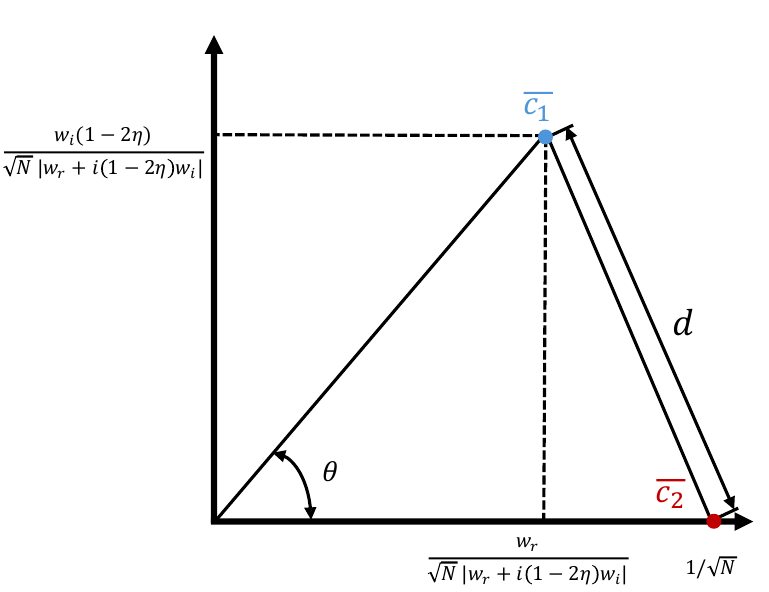}
         \caption{Visualization on $l$ and $\theta$.  
         }
         \label{fig:LTheta}
     \end{subfigure}
     \hfill
     \begin{subfigure}[b]{0.49\textwidth}
         \centering
         \includegraphics[width=0.8\textwidth]{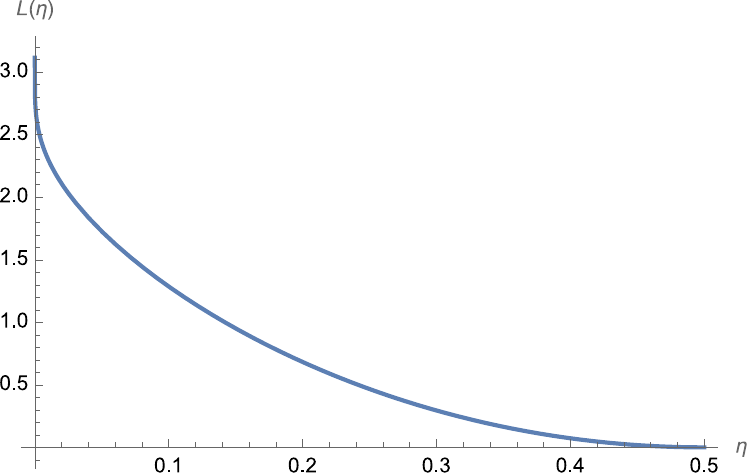}
         \caption{Plot of $l$ as function of $\eta$. 
         }
         \label{fig:LEta}
     \end{subfigure}
     \caption{Visualization of important parameters for representing the error bound.}
\end{figure}

\end{proof}
\section{More Experimental details}

\subsection{Convergence of iterative learning on model parameters}
\label{appx:convergence}

We conduct empirical tests on this iterative approach on directed graphs generated from the DSBM ensemble. In Figure~\ref{fig:iter}, we show how the learned model parameters vary as one repeats the updating process in \texttt{LE-SC}.  Through our study on the synthetic data sets from the DSBM, we observe that in most cases, this iterative algorithm converges near the truth model parameters very fast (within 10 iterations). 
\begin{figure}[thp!]
    \centering
    \includegraphics[width=0.9\textwidth]{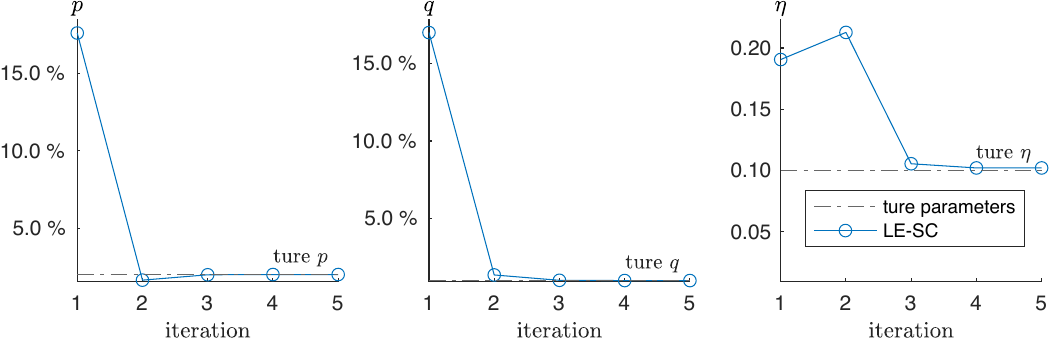}
    \caption{Illustration on the convergence of the iterative algorithm. We first sample a directed graph from DSBM with $n_1=n_2 =1000, p = 2\%, q = 1\%, \eta = 0.1$. Then, starting from a random initialization on the model parameters, we apply \texttt{LE-SC} to learn the DSBM parameters. The lines with circles represent model parameter learning using the spectral clustering algorithm \texttt{LE-SC}.}
    \label{fig:iter}
\end{figure}

We also conduct experiments to examine how different initialization strategies influence the clustering outcomes. Below, we summarize the recommended strategies.
\begin{enumerate}
    \item[a.] When edge direction is the primary consideration for clustering, we recommend initializing the Hermitian matrix as $\Hmle_0 = \iu(A - A^T) + (A + A^T)$, which approximates the MLE in the low direction noise regime ($\eta \approx 0$). Alternatively, one may use $\Hmle_0 = \iu(A - A^T)$, which, as discussed in Section~\ref{sec:optimization}, corresponds to the net flow optimization approach.
    \item[b.] When edge density is the main focus, we suggest initializing with $\Hmle_0 = A + A^T$, as it aligns with the total flow optimization scheme.
    \item [c.]  We also recommend using alternative clustering algorithms to generate initial clusters, such as \texttt{DI-SIM} \cite{rohe-2016-coclustering}, \texttt{BibSym} \citep{satuluri-2011-symmetrizations}, and \texttt{D-SCORE} \cite{wang-2020-dscore}.
\end{enumerate}

\subsection{Complexity analysis for eigenvector computation.}
\label{appx:complexity}

The power method is a fast and scalable algorithm for computing the eigenvectors of sparse matrices through iterative updates. 
Our algorithm \texttt{LE-SC} involves computing the eigenvector of a dense Hermitian matrix $\Hmle$ \eqref{eq:H-mle}, which can be decomposed into a sparse part $w_r(A+A^T) + \iu w_i (A-A^T)$ and an all-one component $w_c J$.  To apply the power method to compute the top eigenvector of this Hermitian matrix as follows
\begin{enumerate}
    \item Randomly initialize $b_0$
    \item For $k\geq 1$, $b_k = (w_r(A+A^T) + \iu w_i (A-A^T))b_{k-1} + w_c J b_{k-1}$
    \item Repeat until $\|b_k - b_{k-1}\|_2 \leq \epsilon$
\end{enumerate}
The convergence of the above iterative steps depended linearly on the $\lambda_2(\Hmle)/\lambda_1(\Hmle)$. Each iteration involves a matrix-vector multiplication with time complexity $\mathcal{O}(|\mathcal{E}|)$ for the sparse part and $O({N})$ for the all-one matrix.  Therefore, the overall computational complexity of computing the top eigenvector of $\Hmle$ is $O(|\mathcal{E}|)$, enabling the method to efficiently scale to large and sparse graphs.

\subsection{\texttt{LE-SC} for multiple clusters}
\label{appx:multi-cluster}
\begin{algorithm}[tph!]
\SetKwInOut{Input}{Input}
\SetKwInOut{Output}{Output}
\caption{Multi-Cluster Likelihood Estimation Spectral Clustering \texttt{LE-SC}}
\label{alg:LE-SC-k}
\Input{Directed graph $G = (\mathcal{V}, \mathcal{E})$; target number of clusters $k$; maximum iterations per bipartition $T$}
\Output{Community labels $\hat{\sigma} : \mathcal{V} \rightarrow \{1, \dots, k\}$}
\BlankLine
Initialize clustering $\mathcal{C} \gets \{\mathcal{V}\}$ and label counter $\ell \gets 1$\;
\While{$|\mathcal{C}| < k$}{
    Select the largest cluster $\mathcal{S} \in \mathcal{C}$\;
    Apply \texttt{LE-SC} (Algorithm~\ref{alg:LE-SC}) to subgraph $G[\mathcal{S}]$ with $T$ iterations to bipartition $\mathcal{S}$ into $(\mathcal{S}_1, \mathcal{S}_2)$\;
    Replace $\mathcal{S}$ in $\mathcal{C}$ with $\mathcal{S}_1$ and $\mathcal{S}_2$\;
}
Assign unique labels $1, \dots, k$ to the final clusters in $\mathcal{C}$ to obtain $\hat{\sigma}$\;
\Return{$\hat{\sigma}$}
\end{algorithm}

\subsection{Experiment: DSBM with different higher-order structure}
\label{appx:k-DSBM}
When there are multiple communities in the DSBM, the edge probability between communities can vary and form different higher-order structures. We use a directed meta graph to describe such a higher-order structure of community-communitty interaction.  In a meta graph, each vertex represents a community, and edges between communities are directed and weighted. Each directed edge in the meta graph indicates a structured edge generating probability, where the edge weight is the probability that an edge is oriented from a source community to a target community. A non-edge means the edge direction between the two communities is totally random, i.e, 1/2 probability for each direction.  To give a concrete example, the meta graph in Figure~\ref{fig:5_cycle} is a 5-cycle. The directed edge between $C_1$ and $C_2$ indicates that edges between them are generated with probability $1-\eta$ pointing from $C_1$ to $C_2$ and with probability $\eta$ backwards.

\begin{figure}[]
    \centering
    \begin{subfigure}{0.45\textwidth}
        \includegraphics[width=0.3\textwidth]{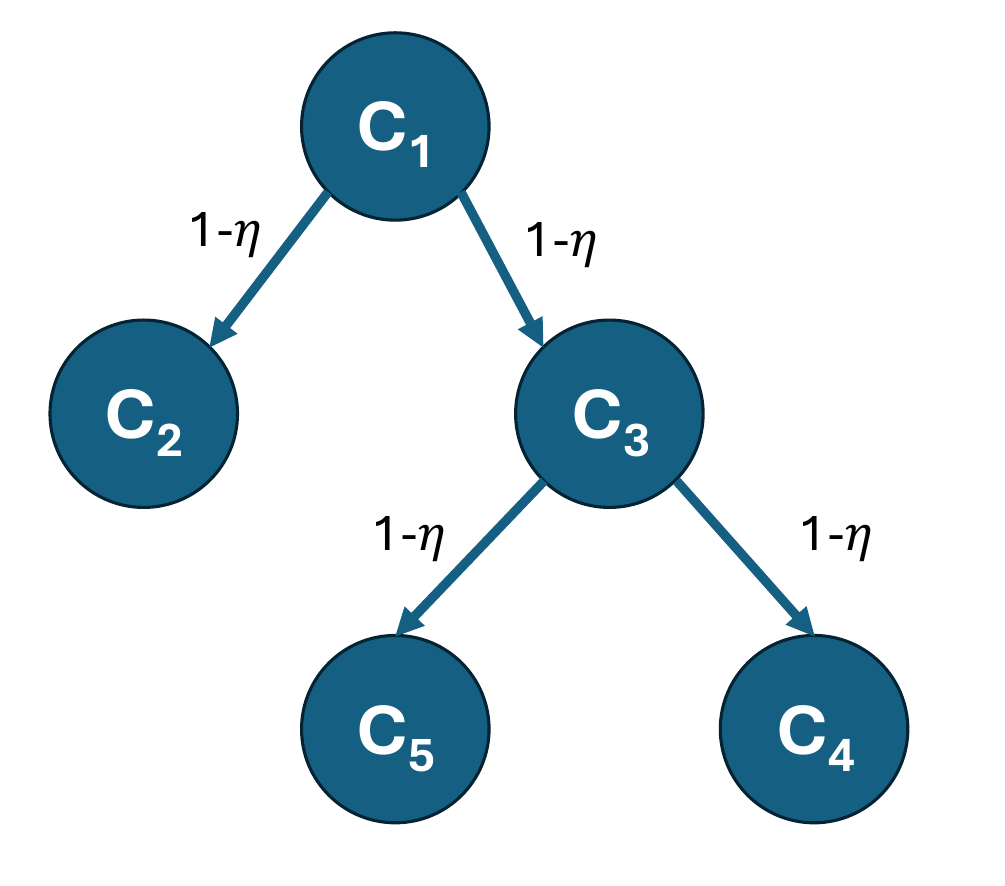}\includegraphics[width=0.5\textwidth]{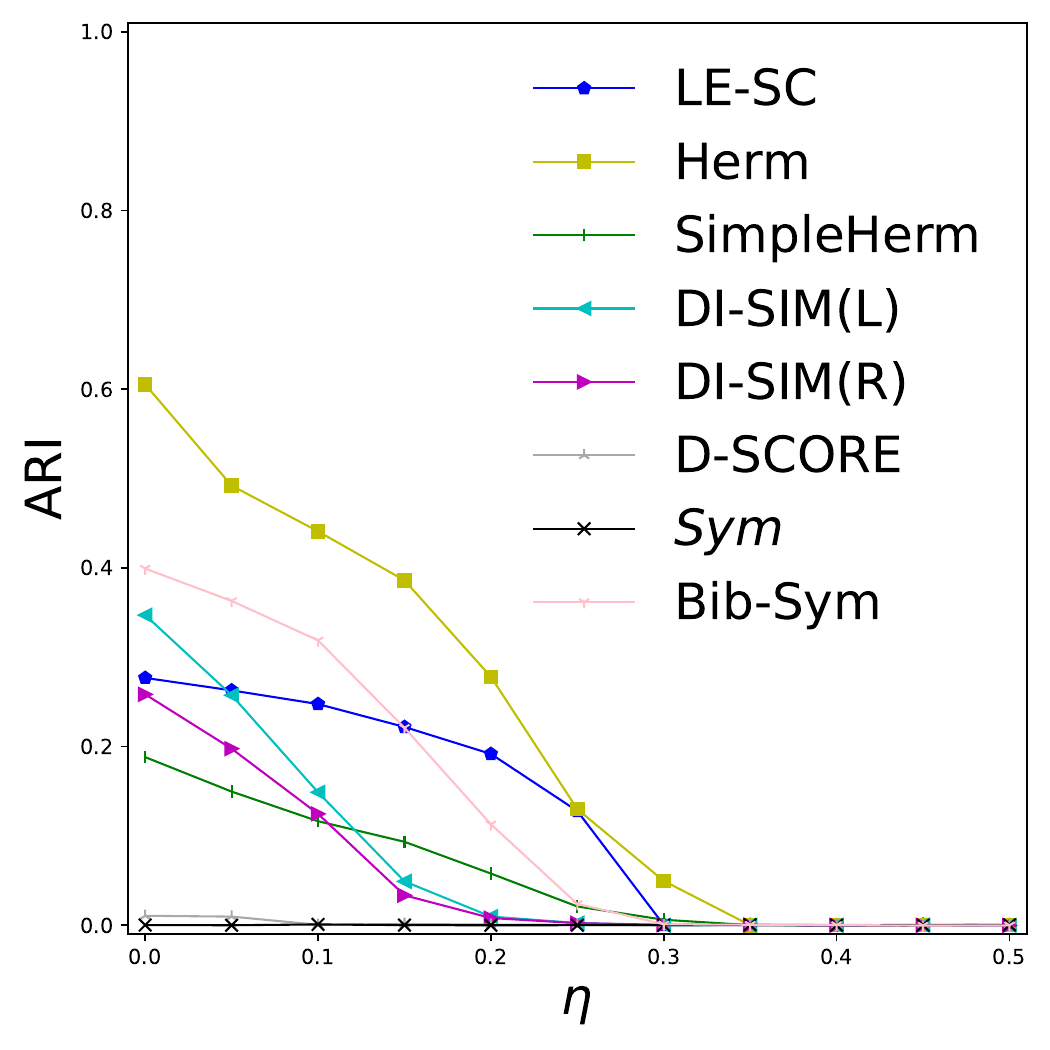}
        \caption{$p = q = 1\%$}
    \end{subfigure}
    \begin{subfigure}{0.45\textwidth}
        \includegraphics[width = 0.3\textwidth]{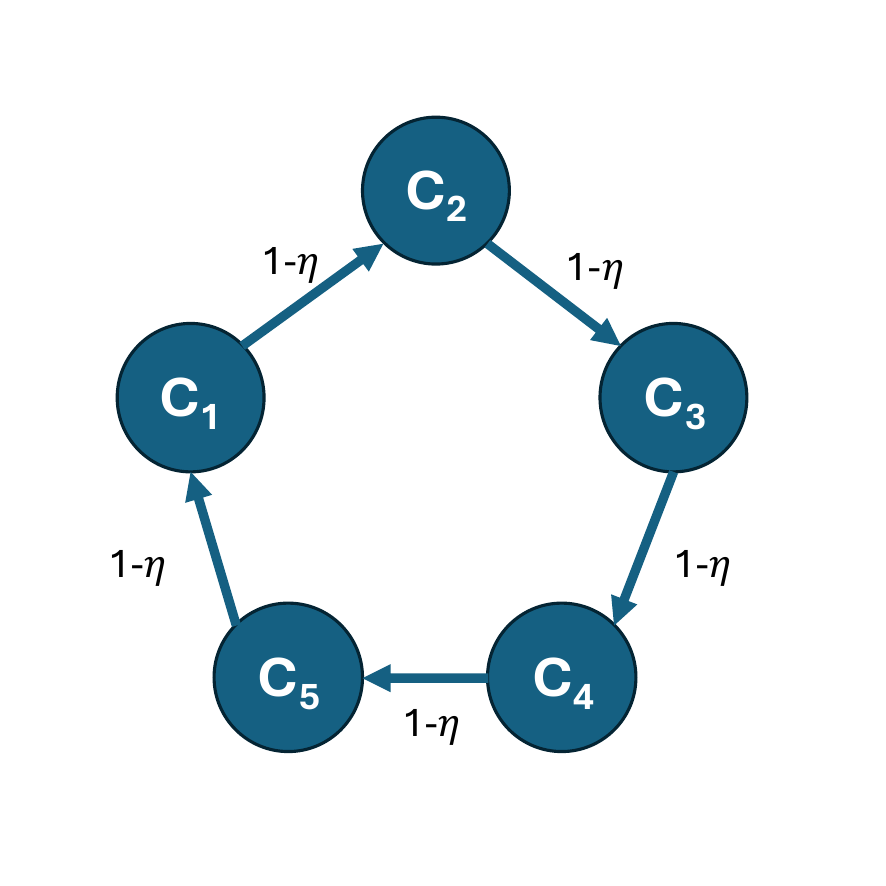}\includegraphics[width=0.5\linewidth]{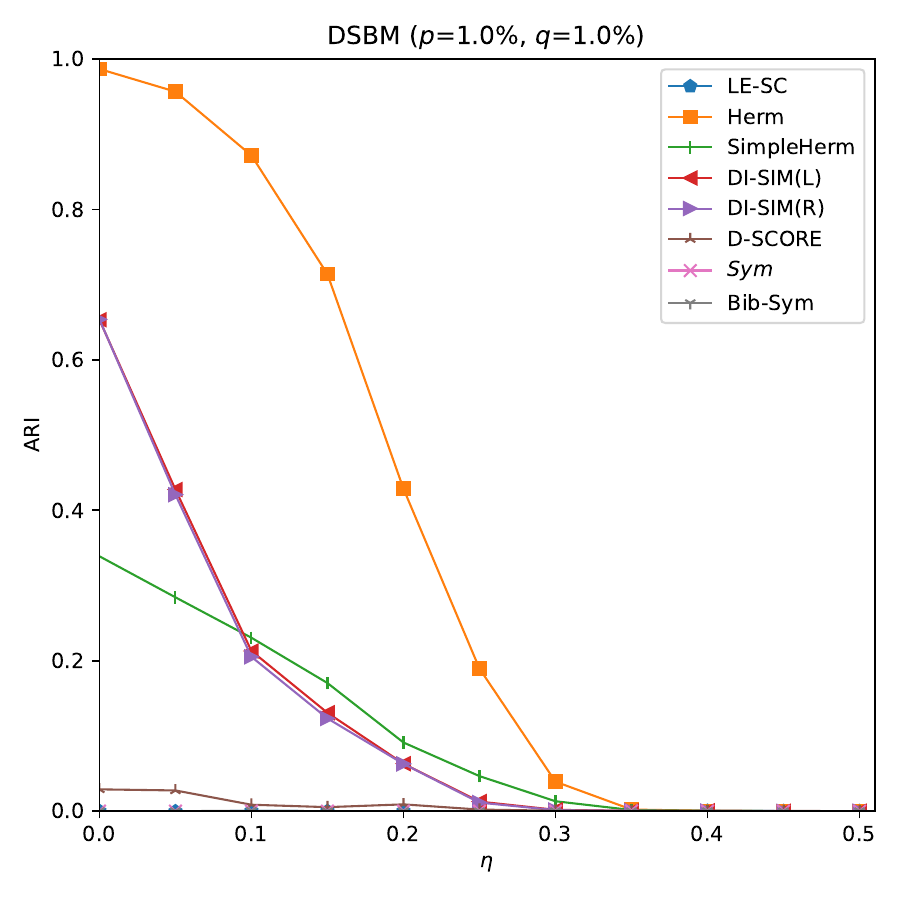}
        \caption{$p = q = 1\%$}
    \end{subfigure}
    \begin{subfigure}{0.45\textwidth}
        \includegraphics[width = 0.4\textwidth]{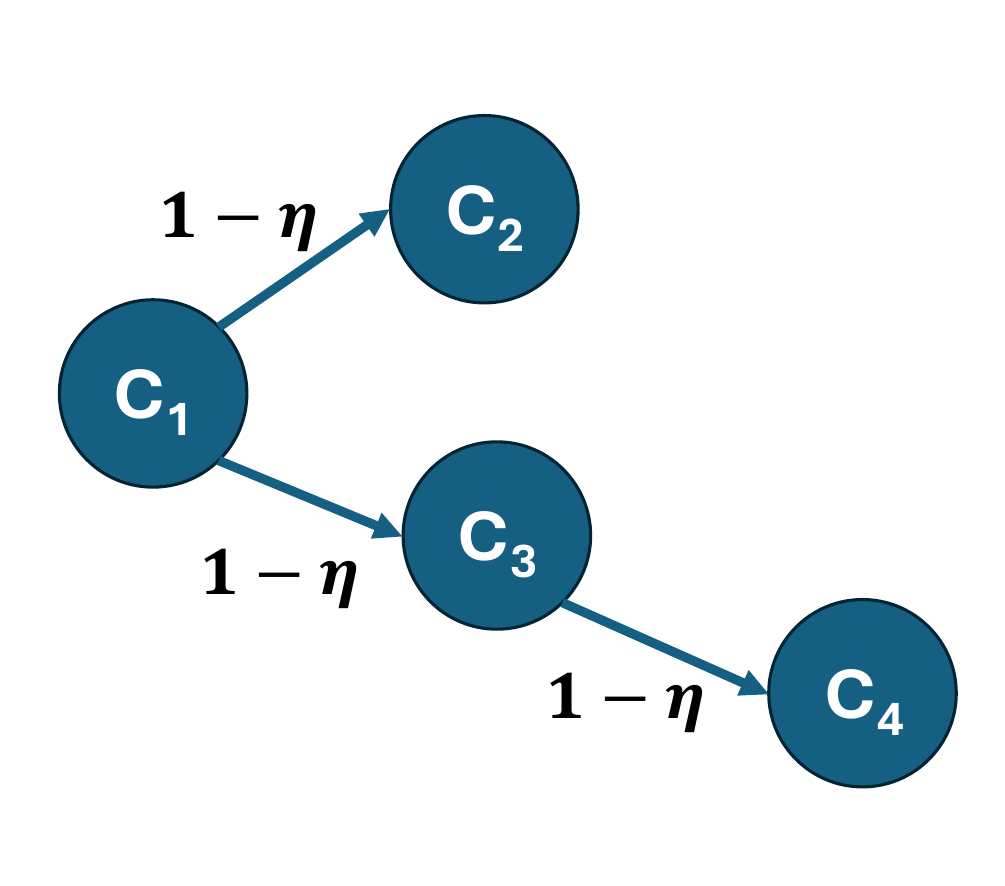}\includegraphics[width=0.5\linewidth]{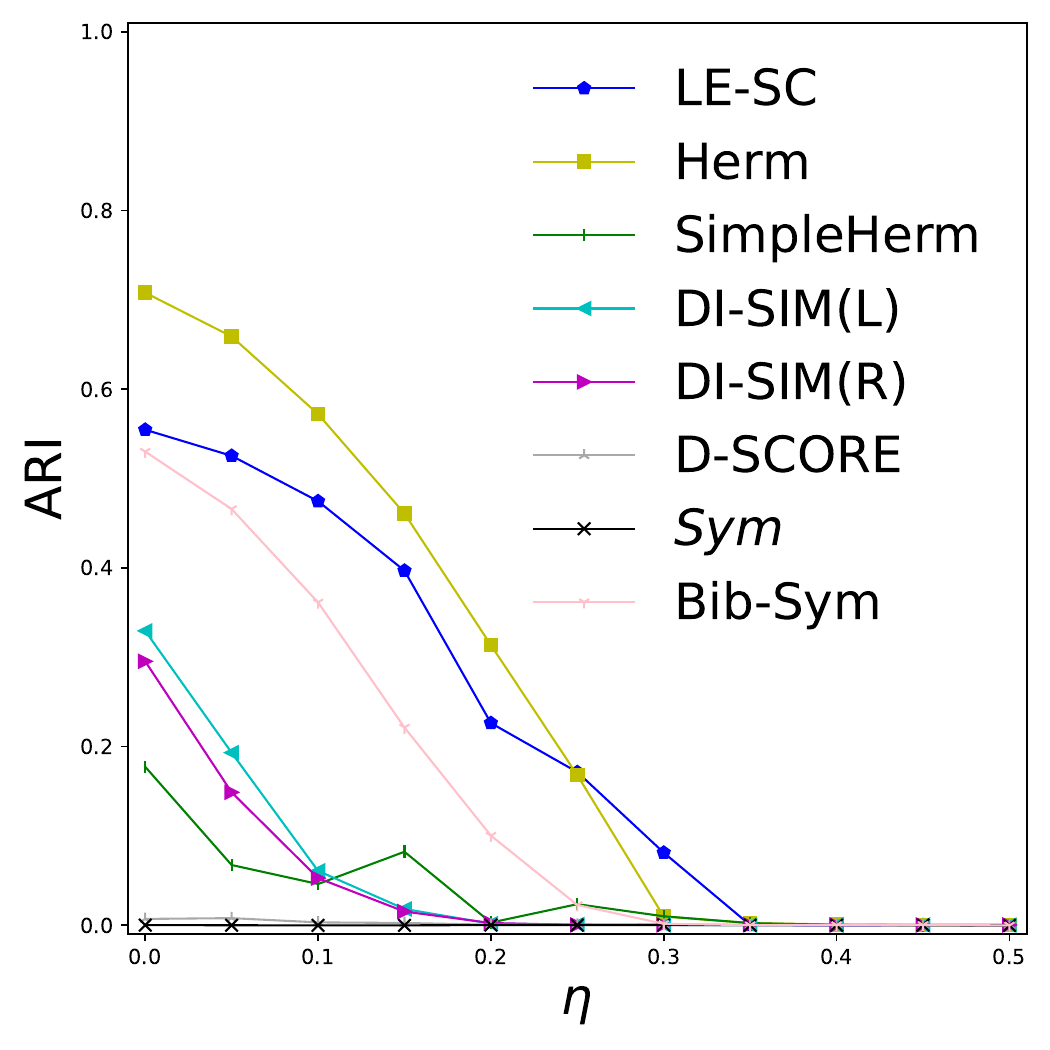}
        \caption{$p = q = 1\%$}
    \end{subfigure}
    \caption{Recovery rate on DSBM ($n_i = 1000$), where communities form different higher order structure.}
    \label{fig:5_cycle}
\end{figure}

We conduct experiments on DSBMs with varying higher-order community structures, represented by different meta-graphs, and under different levels of directional noise $\eta$. In each experiment, we independently sample 5 directed graphs with a fixed parameter set, and report the averaged ARIs over these graph samples.
We observe that our algorithm overall demonstrates competing performance under various multicommunity settings, except in the case of a cycle-structured meta-graph with $p=q$. This limitation arises because our method relies on a bipartition, and when the meta-edges form a cycle, there is no meaningful way to enforce a consistent bipartition.  Addressing this limitation presents an interesting direction for future work.

\subsection{Experiment: US migration data}
\label{sec:appx_US}
We conduct experiments on migration data comprising 3074 counties in the mainland United States. Migration flows are represented as a directed, weighted graph, where each edge weight represents the number of individuals migrating from one county to another.
To avoid a biased result dominated by extremely high degree vertices, we normalize the directed graph and use $D^{-1/2}A D^{-1/2}$, with the degree matrix $D$ accounting for both incoming and outgoing edges. We report the clustering outcomes from different spectral methods.
In our experiments, we replace the baseline method \texttt{Herm} with its variant \texttt{Herm(RW)}, which yields significantly better performance on this dataset (as \texttt{Herm} fails to produce meaningful results). We also observe that \texttt{SimpHerm} yields trivial clusters when $k\geq 4$.

\begin{figure}[tph!]
    \centering
    \begin{subfigure}{0.24\textwidth}
        \includegraphics[width = \textwidth]{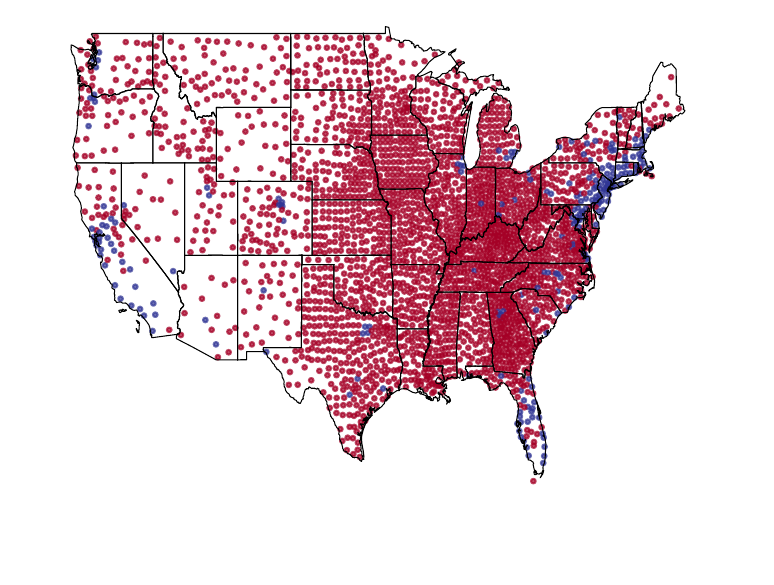}
        \vspace{-24pt}
        \caption{\texttt{LE-SC}}\label{fig:US_LESC}
    \end{subfigure}
    \begin{subfigure}{0.24\textwidth}
        \includegraphics[width = \textwidth]{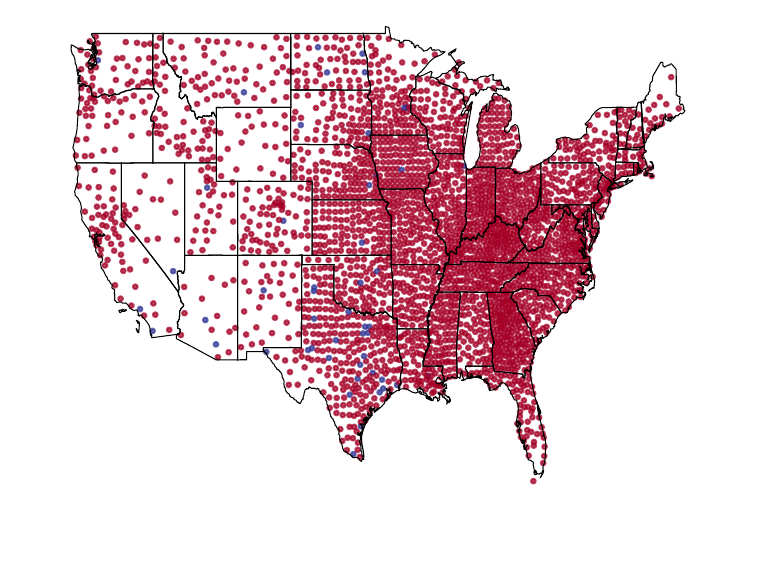}
        \vspace{-24pt}
        \caption{\texttt{Herm(RW)}}
    \end{subfigure}
    \begin{subfigure}{0.24\textwidth}
        \includegraphics[width = \textwidth]{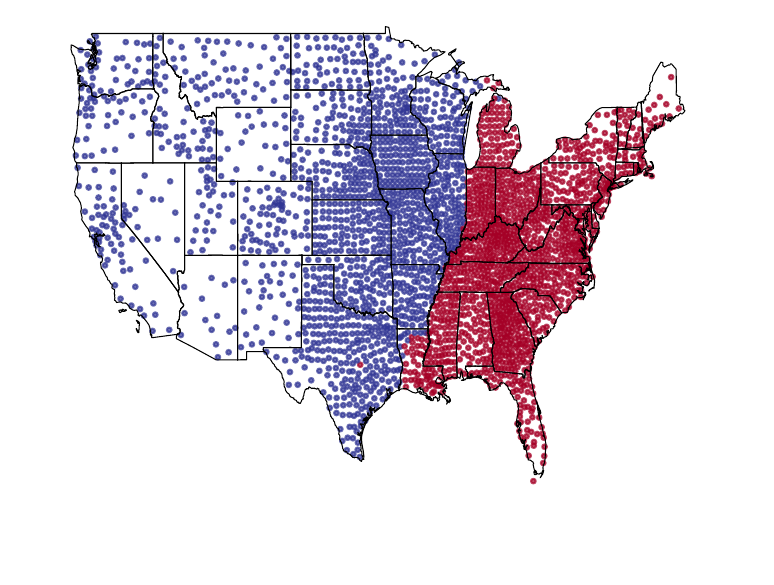}
        \vspace{-24pt}
        \caption{\texttt{DI-SIM(L)}}
    \end{subfigure}
    \begin{subfigure}{0.24\textwidth}
        \includegraphics[width = \textwidth]{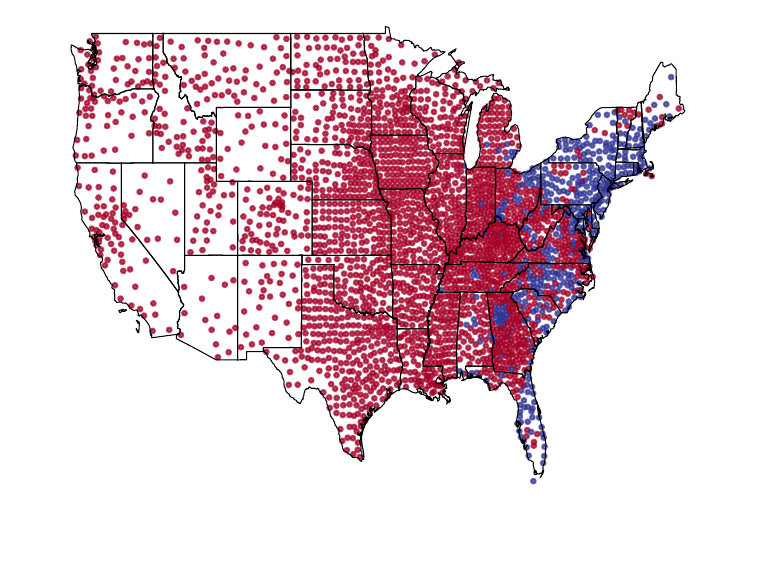}
        \vspace{-24pt}
        \caption{\texttt{Bib-Sym}}
    \end{subfigure}
    \begin{subfigure}{0.24\textwidth}
        \includegraphics[width = \textwidth]{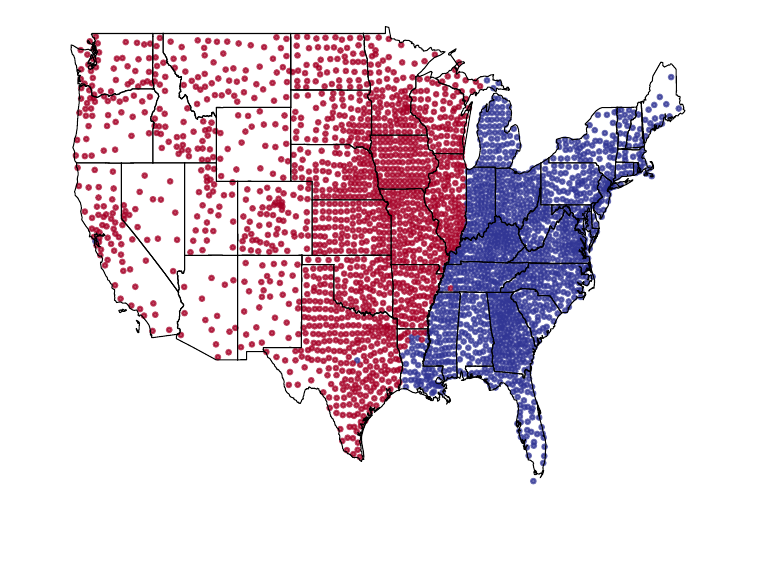}
        \vspace{-24pt}
        \caption{\texttt{DI-SIM(R)}}
    \end{subfigure}
    \begin{subfigure}{0.24\textwidth}
        \includegraphics[width = \textwidth]{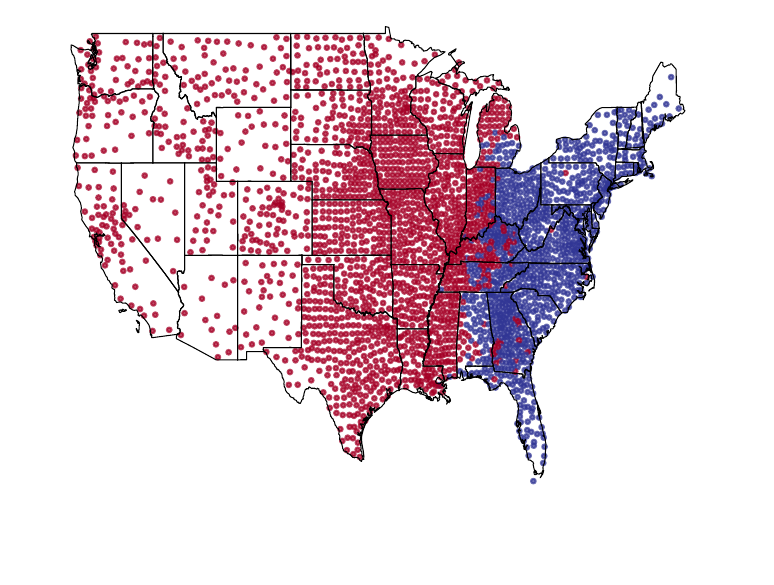}
        \vspace{-24pt}
        \caption{\texttt{Sym}}
    \end{subfigure}
    \begin{subfigure}{0.24\textwidth}
        \includegraphics[width = \textwidth]{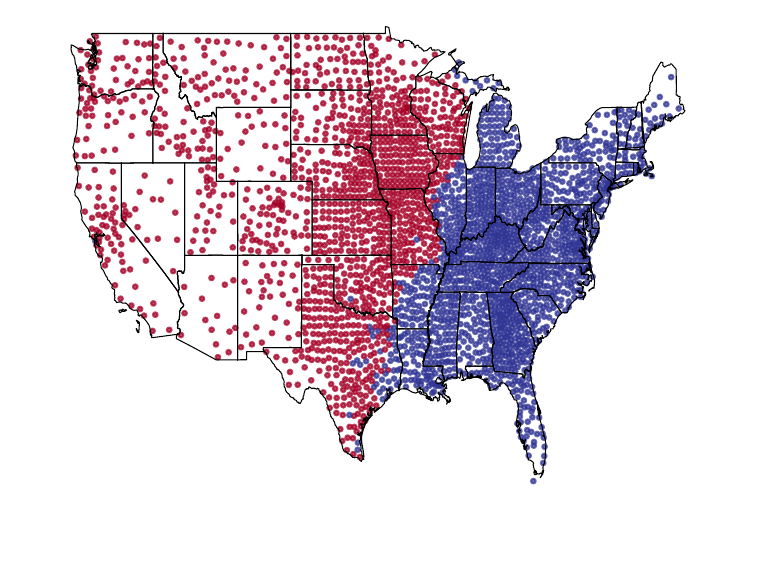}
        \vspace{-24pt}
        \caption{\texttt{SimpHerm}}
    \end{subfigure}
    \begin{subfigure}{0.24\textwidth}
        \includegraphics[width = \textwidth]{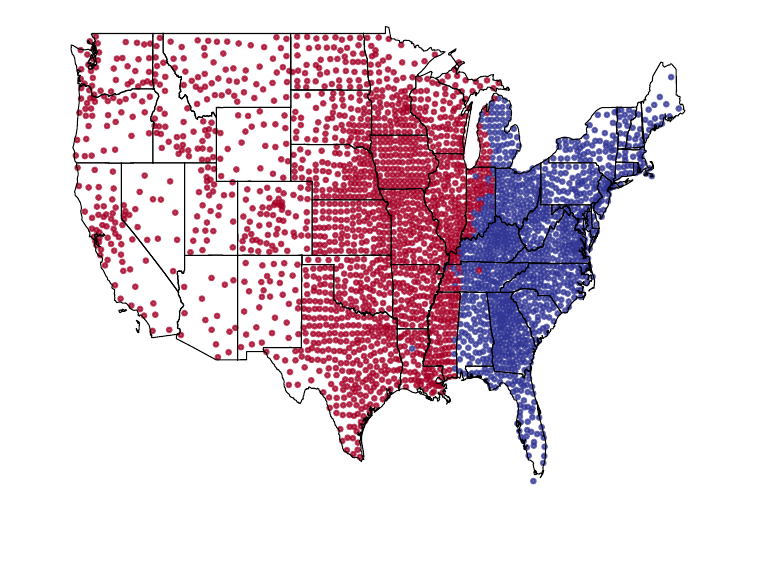}
        \vspace{-24pt}
        \caption{\texttt{D-Score}}
    \end{subfigure}
    \caption{Counties clustered by migration data ($k=2$).}
    \label{fig:migration2}
\end{figure}

\begin{figure}[tph!]
    \centering
    \begin{subfigure}{0.24\textwidth}
        \includegraphics[width = \textwidth]{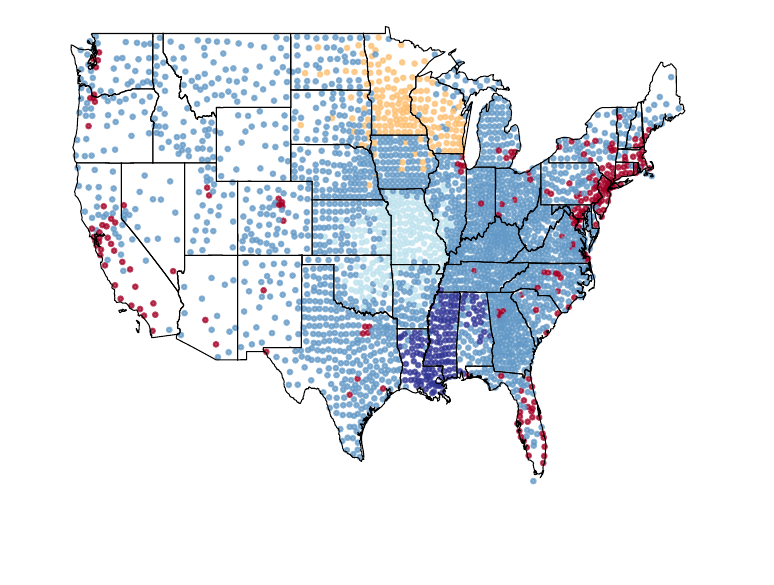}
        \vspace{-24pt}
        \caption{\texttt{LE-SC}}\label{fig:US_LESC}
    \end{subfigure}
    \begin{subfigure}{0.24\textwidth}
        \includegraphics[width = \textwidth]{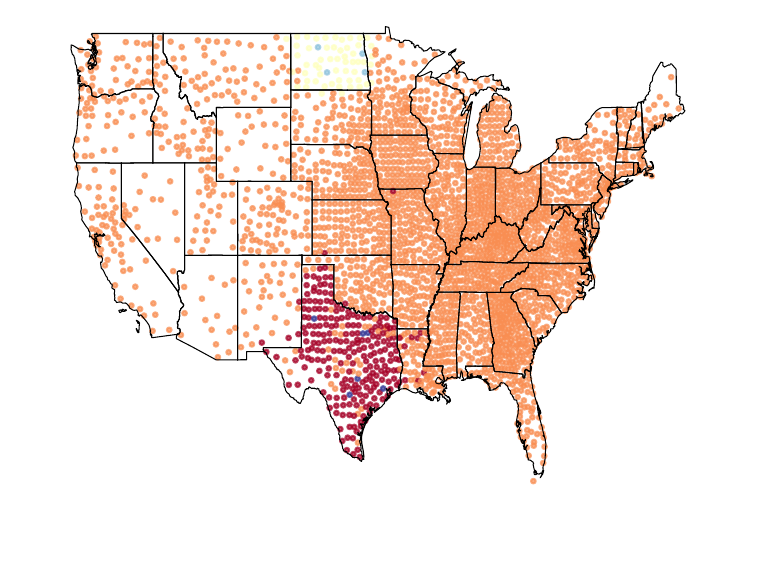}
        \vspace{-24pt}
        \caption{\texttt{Herm(RW)}}
    \end{subfigure}
    \begin{subfigure}{0.24\textwidth}
        \includegraphics[width = \textwidth]{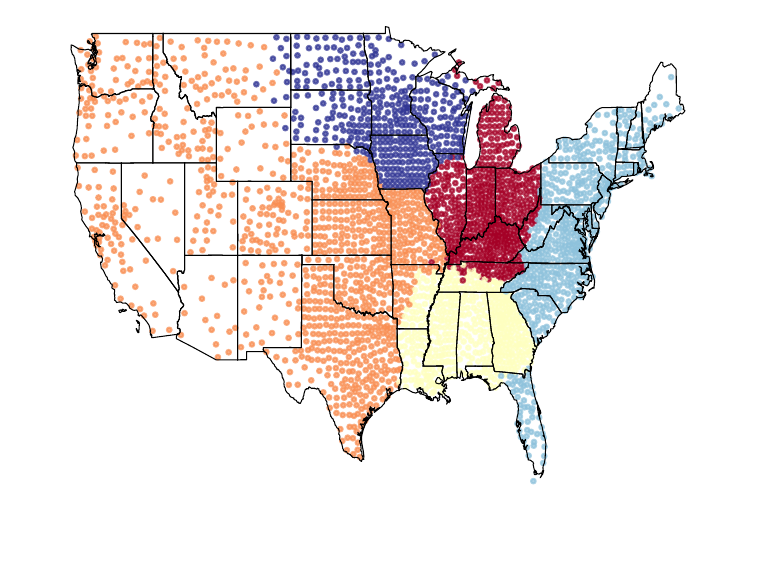}
        \vspace{-24pt}
        \caption{\texttt{DI-SIM(L)}}
    \end{subfigure}
    \begin{subfigure}{0.24\textwidth}
        \includegraphics[width = \textwidth]{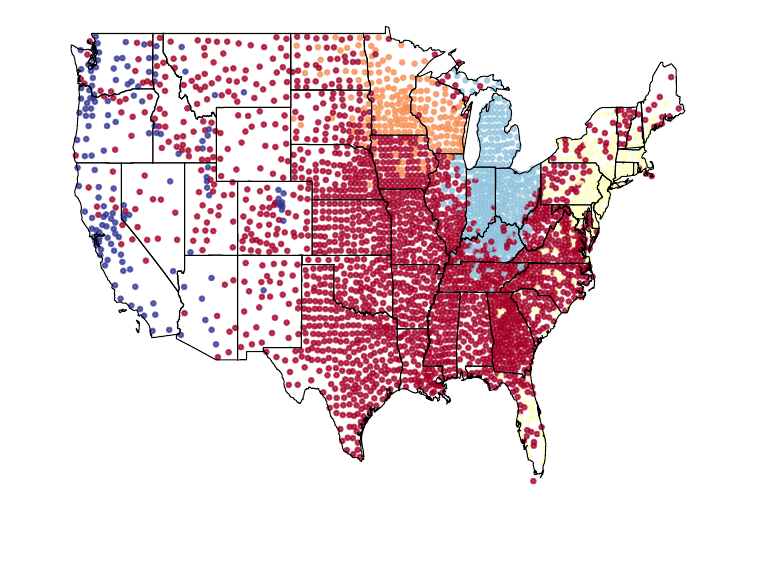}
        \vspace{-24pt}
        \caption{\texttt{Bib-Sym}}
    \end{subfigure}
    \begin{subfigure}{0.24\textwidth}
        \includegraphics[width = \textwidth]{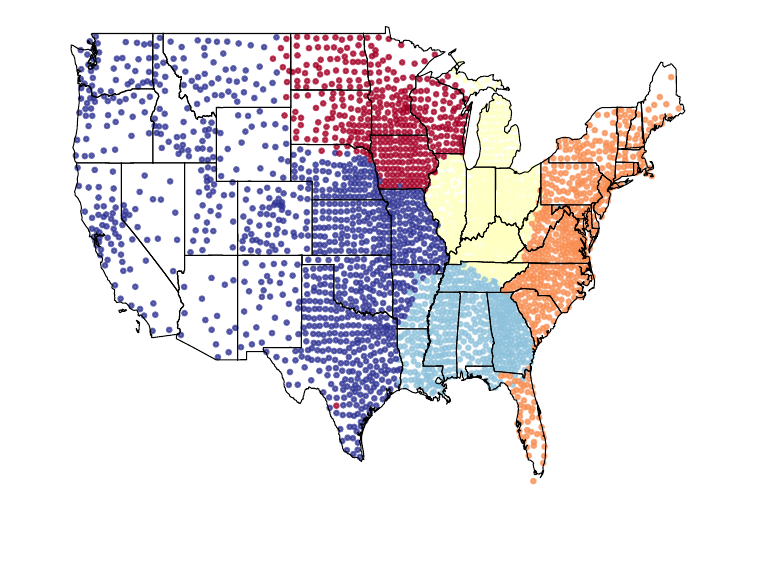}
        \vspace{-24pt}
        \caption{\texttt{DI-SIM(R)}}
    \end{subfigure}
    \begin{subfigure}{0.24\textwidth}
        \includegraphics[width = \textwidth]{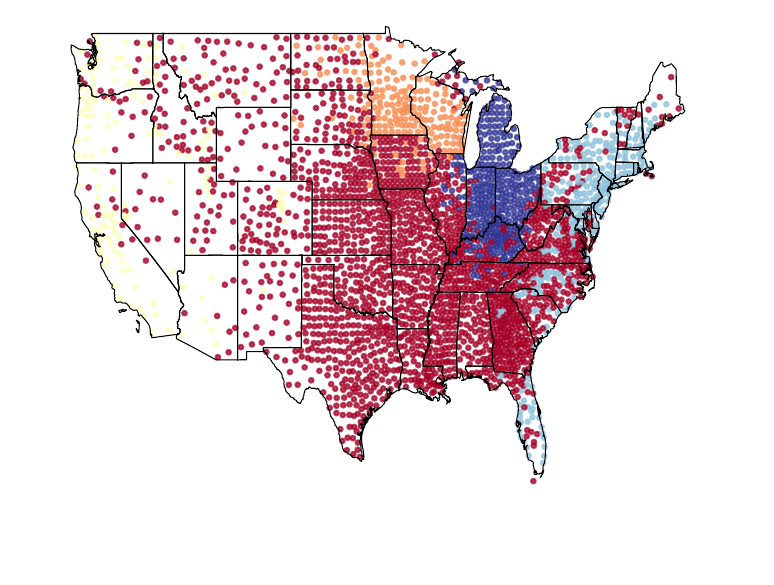}
        \vspace{-24pt}
        \caption{\texttt{Sym}}
    \end{subfigure}
    \begin{subfigure}{0.24\textwidth}
        \includegraphics[width = \textwidth]{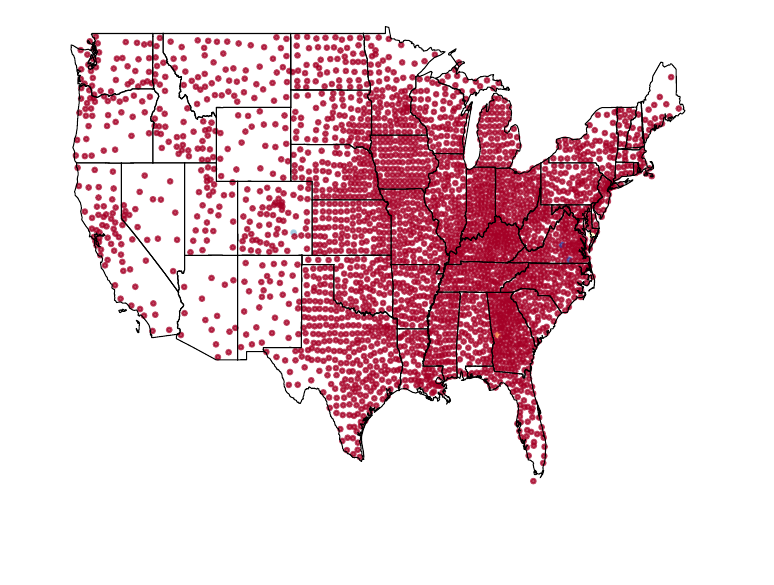}
        \vspace{-24pt}
        \caption{\texttt{SimpHerm}}
    \end{subfigure}
    \begin{subfigure}{0.24\textwidth}
        \includegraphics[width = \textwidth]{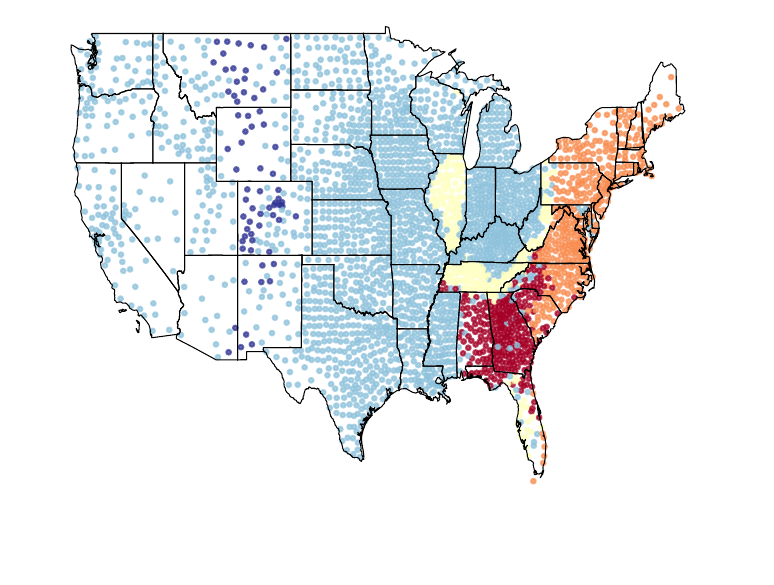}
        \vspace{-24pt}
        \caption{\texttt{D-Score}}
    \end{subfigure}
    \caption{Counties clustered by migration data ($k=5$).}
    \label{fig:migration5}
\end{figure}

\begin{figure}[tph!]
    \centering
    \begin{subfigure}{0.24\textwidth}
        \includegraphics[width = \textwidth]{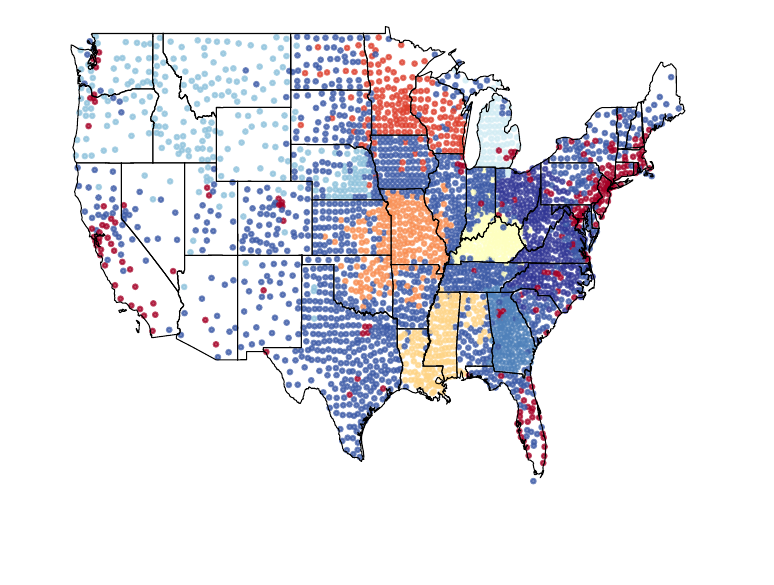}
        \vspace{-24pt}
        \caption{\texttt{LE-SC}}\label{fig:US_LESC}
    \end{subfigure}
    \begin{subfigure}{0.24\textwidth}
        \includegraphics[width = \textwidth]{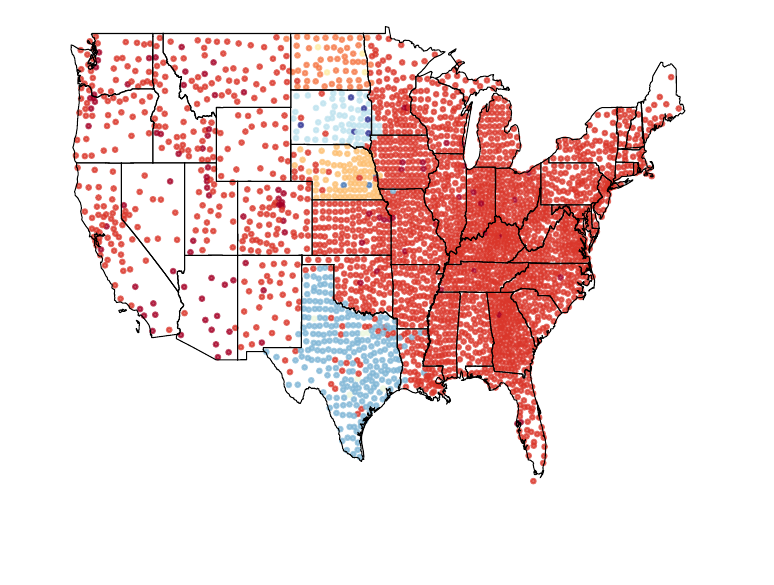}
        \vspace{-24pt}
        \caption{\texttt{Herm(RW)}}
    \end{subfigure}
    \begin{subfigure}{0.24\textwidth}
        \includegraphics[width = \textwidth]{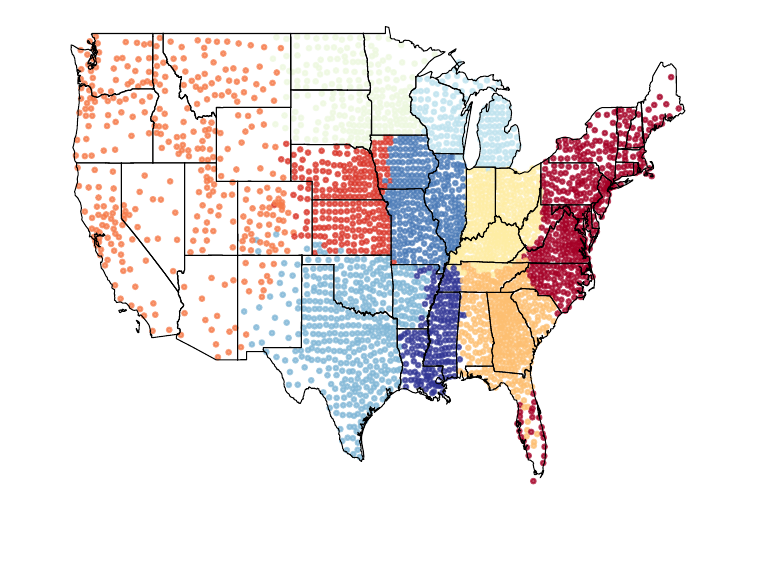}
        \vspace{-24pt}
        \caption{\texttt{DI-SIM(L)}}
    \end{subfigure}
    \begin{subfigure}{0.24\textwidth}
        \includegraphics[width = \textwidth]{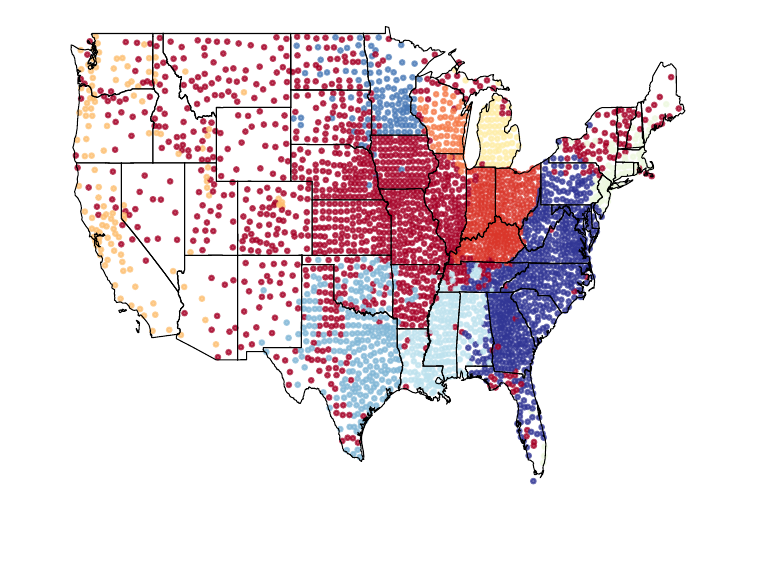}
        \vspace{-24pt}
        \caption{\texttt{Bib-Sym}}
    \end{subfigure}
    \begin{subfigure}{0.24\textwidth}
        \includegraphics[width = \textwidth]{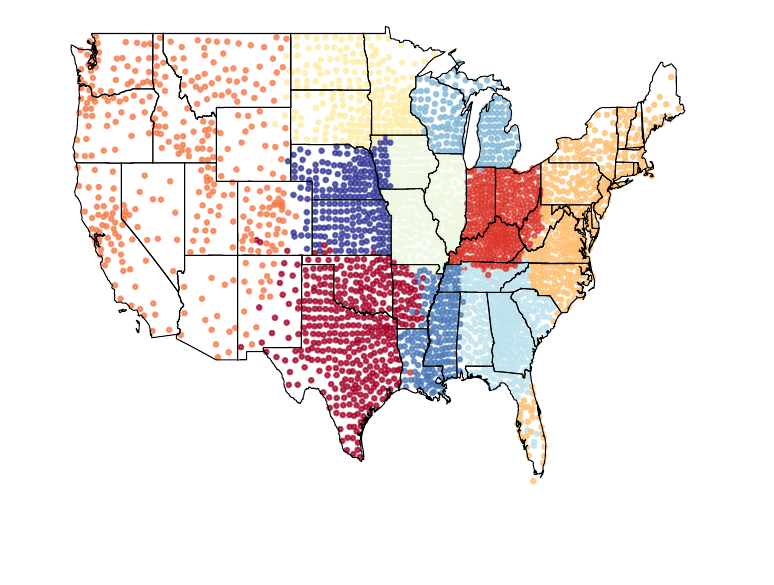}
        \vspace{-24pt}
        \caption{\texttt{DI-SIM(R)}}
    \end{subfigure}
    \begin{subfigure}{0.24\textwidth}
        \includegraphics[width = \textwidth]{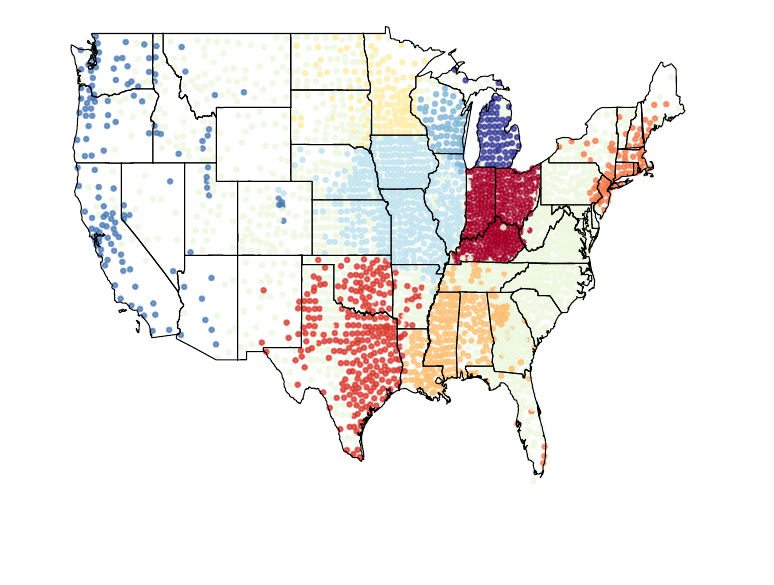}
        \vspace{-24pt}
        \caption{\texttt{Sym}}
    \end{subfigure}
    \begin{subfigure}{0.24\textwidth}
        \includegraphics[width = \textwidth]{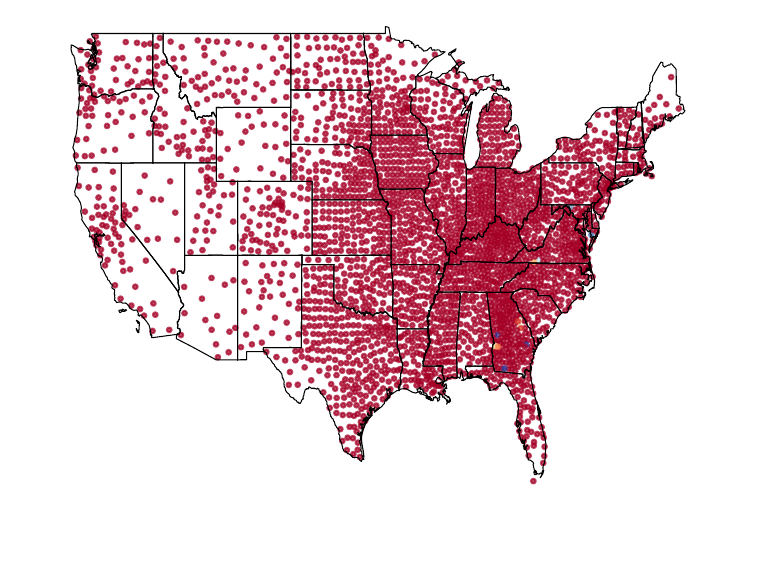}
        \vspace{-24pt}
        \caption{\texttt{SimpHerm}}
    \end{subfigure}
    \begin{subfigure}{0.24\textwidth}
        \includegraphics[width = \textwidth]{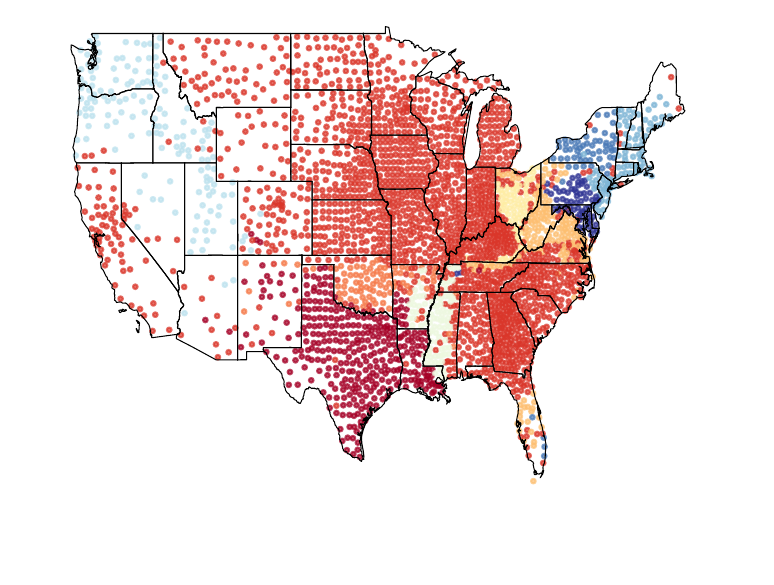}
        \vspace{-24pt}
        \caption{\texttt{D-Score}}
    \end{subfigure}
    \caption{Counties clustered by migration data ($k=10$).}
    \vspace{-12pt}
    \label{fig:migration10}
\end{figure}



\end{document}